\documentclass[12pt]{article}
\usepackage{amsmath}
\usepackage{graphicx}
\usepackage{enumerate}
\usepackage{natbib}
\usepackage{hyperref}
 \hypersetup{
     colorlinks=true,
     linkcolor=blue,
     filecolor=blue,
     citecolor = black,
     urlcolor=blue,
     }
%\usepackage[colorlinks,
%            linkcolor=blue,
%            anchorcolor=green,
%            citecolor=blue,
%            ]{hyperref}
\usepackage{url} % not crucial - just used below for the URL
\usepackage{graphicx,amsfonts,amsmath,amssymb,bm,natbib,booktabs,
colortbl,xcolor,float,comment,subfigure,makecell,dcolumn,multirow}

\usepackage{enumerate}
\usepackage{algorithm, algorithmicx, algpseudocode}
\floatname{algorithm}{Algorithm} %算法
 %输入
 %输出

\usepackage{caption}
\captionsetup{font={normal,stretch=1}}

\usepackage{enumitem}
\usepackage[section]{placeins}

\newtheorem{theorem}{Theorem}%[theorem]
%\newtheorem{algorithm}{Algorithm}%[theorem]
%[section]%[theorem]
\newtheorem{remark}{Remark}%[section]%[theorem]
%[section]%[theorem]
%[section]%[theorem]
%[section]
\newtheorem{definition}{Definition}%[section]
\newtheorem{proof}{Proof}%[section]
%[section]
%[section]

\newtheorem{lemma}{Lemma}[section]
%\newtheorem{proof}{Proof}
%%%
\newtheorem{assumption}{Assumption}

\definecolor{blue}{RGB}{0, 144, 178}
\definecolor{red}{RGB}{255,18,0}
\definecolor{yellow}{RGB}{240,228,66}
\definecolor{PolyRed}{RGB}{160,35,55}
\definecolor{SufeRed}{HTML}{811C21}
\definecolor{UIUCOrange}{HTML}{FF552E}
\definecolor{UIUCBlue}{HTML}{13294B}
\definecolor{MSU}{RGB}{24, 69, 59}
\definecolor{Maroon}{HTML}{8B008B}

\newcommand{\argmin}{\operatorname{argmin}}

\newcommand{\bD}{\mathbb{D}}
\newcommand{\bI}{\mathbf{I}}

\newcommand{\bX}{\mathbf{X}}

\newcommand{\bR}{\mathbb{R}}
\newcommand{\bE}{\mathbb{E}}

\newcommand{\bx}{\boldsymbol{x}}

\newcommand{\vs}{\boldsymbol{s}}

\newcommand{\vt}{\boldsymbol{t}}

\newcommand{\mB}{\mathcal{B}}

\newcommand{\mN}{\mathcal{N}}
\newcommand{\mD}{\mathcal{D}}

\newcommand{\mF}{\mathcal{F}}
\newcommand{\mH}{\mathcal{H}}
\newcommand{\mO}{\mathcal{O}}
\newcommand{\mW}{\mathcal{W}}
\newcommand{\mS}{\mathcal{S}}
\newcommand{\mM}{\mathcal{M}}
\newcommand{\mV}{\mathbb{V}}
\newcommand{\mL}{\mathcal{L}}

\newcommand{\bbeta}{\boldsymbol{\beta}}
\newcommand{\btheta}{\boldsymbol{\theta}}
\newcommand{\betatilde}{\tilde{\beta}}

\newcommand{\indep}{\;\, \rule[0em]{.03em}{.67em} \hspace{-.25em}
	\rule[0em]{.65em}{.03em} \hspace{-.25em}
	\rule[0em]{.03em}{.67em}\;\,}

\newcommand{\blind}{1}

\usepackage{times}
% ,setspace
% \usepackage{fullpage}
% DON'T change margins - should be 1 inch all around.
\addtolength{\oddsidemargin}{-.5in}%
\addtolength{\evensidemargin}{-1in}%
\addtolength{\textwidth}{1in}%
\addtolength{\textheight}{1.7in}%
\addtolength{\topmargin}{-1in}%

% \addtolength{\oddsidemargin}{-.5in}%
% \addtolength{\evensidemargin}{-1in}%
% \addtolength{\textwidth}{1in}%
% \addtolength{\textheight}{1.7in}%
% \addtolength{\topmargin}{-1in}%
\begin{document}

\def\spacingset#1{\renewcommand{\baselinestretch}%
{#1}\small\normalsize} \spacingset{1}

%%%%%%%%%%%%%%%%%%%%%%%%%%%%%%%%%%%%%%%%%%%%%%%%%%%%%%%%%%%%%%%%%%%%%%%%%%%%%%

\if1\blind

{
\title{\large \bf  Transfer Learning through Enhanced Sufficient Representation: \\
Enriching Source Domain Knowledge with Target Data}

%Enhanced Representation Transfer: %and Invariant
%Augmenting Source Representations with Target Data

\author{\normalsize Yeheng Ge$^a$\footnote{Equal contributions},  \ Xueyu Zhou$^{a *}$,\  and
Jian Huang$^{a,b}$\thanks{Corresponding author}\\
\newline
	{\small $^{a}$Department of Data Science and Artificial Intelligence, The Hong Kong Polytechnic University}\\
	{\small $^{b}$Department of Applied Mathematics, The Hong Kong Polytechnic University}
\date{\small \today}
}
\maketitle
}
 \fi

\if0\blind
{
  \bigskip
  \bigskip
  \bigskip
  \begin{center}
    {\large\bf Transfer Learning through Enhanced Sufficient Representation: \\
Enriching Source Domain Knowledge with Target Data
%Sufficient Representation Transfer Learning: Augmenting Source Data Sufficient and Invariant %Representation with Target Data
    }
\end{center}
  \medskip
} \fi

\bigskip

\begin{abstract}
Transfer learning is an important approach for addressing the challenges posed by limited data availability in various applications. It accomplishes this by transferring knowledge from well-established source domains to a less familiar target domain. However, traditional transfer learning methods often face difficulties due to rigid model assumptions and the need for a high degree of similarity between source and target domain models. In this paper, we introduce a novel method for transfer learning called Transfer learning through Enhanced Sufficient Representation (TESR). Our approach begins by estimating a sufficient and invariant representation from the source domains. This representation is then enhanced with an independent component derived from the target data, ensuring that it is sufficient for the target domain and adaptable to its specific characteristics.  A notable advantage of TESR is that it does not rely on assuming similar model structures across different tasks. For example, the source domain models can be regression models, while the target domain task can be classification. This flexibility makes TESR  applicable to a wide range of supervised learning problems. We explore the theoretical properties of TESR and validate its performance through simulation studies and real-world data applications, demonstrating its effectiveness in finite sample settings.
\end{abstract}

\noindent%
{\it Keywords:
Conditional independence;
Domain heterogeneity;
Invariant representation;
Representation enhancement;
Supervised learning.
 }
\vfill

\newpage
\spacingset{1.0} % DON'T change the spacing!

\section{Introduction}
Transfer learning has emerged as a powerful paradigm in statistics and machine learning, enabling models to leverage knowledge from related tasks to enhance performance on a target task with limited data \citep{pan2009survey, torrey2010transfer}. Traditional machine learning approaches often require large amounts of data, which can be costly and time-consuming to obtain. Transfer learning addresses the data scarcity problem by transferring knowledge from source domains, where data is abundant, to a target domain, where data is scarce \citep{weiss2016survey}.
Despite its success, existing transfer learning methods often face limitations in terms of scalability and adaptability to diverse domains \citep{zhuang2020comprehensive}.
In this paper, we propose a novel transfer learning method that overcomes these limitations by introducing a more flexible framework. This framework is capable of efficiently adapting to various target domains while maintaining high performance. Our approach builds upon recent advancements in sufficient representation learning and invariant risk minimization, offering a robust solution to the challenges faced by current transfer learning methodologies.

The interest in transfer learning across various settings has grown significantly in recent years. Many existing studies have focused on developing transfer learning methods and theories within parametric modeling frameworks. For instance,  \citet{bastani2021predicting} investigates transfer learning in high-dimensional linear regression with a large source dataset, providing an upper bound for the estimation error. Similarly, \citet{li2022transfer} propose a transfer learning method with minimax optimality in high-dimensional sparse linear regression, featuring a data-driven algorithm to exclude dissimilar source datasets. Both studies assume a small difference between the regression functions of the source and target domains as a prerequisite for successful knowledge transfer. Additionally, \citet{gu2022robust} introduce angle-based similarity for transfer learning in high-dimensional linear models. In the context of non-parametric models,  \citet{tony2021transClass} explore transfer learning for non-parametric classification, defining similarity through the relative signal exponent of the regression function. \citet{cai2022transfer} examine transfer learning in non-parametric regression models.

These studies require that the models used for both the source and target domains share the same functional form, with similar model parameters across these domains. Knowledge transfer is achieved by transferring parameter estimates from the source domains to the target domain. However, the assumption of having the same model form across domains can be restrictive and may not hold in many real-world scenarios. For example, the source domain might involve a classification problem for
cancer sub-types, while the target domain focuses on prognosis prediction for the same cancer. Transferring information from the classification outcomes to a continuous prognosis prediction task presents significant challenges. This limitation highlights the need for more flexible and robust transfer learning approaches that can accommodate varying model forms and complex relationships between domains.

It has been suggested that features with strong predictive power offer a significant advantage in transfer learning \citep{bengio2013representation, neyshabur2020being}. Building on this idea, some existing works on transfer learning assume that all tasks share a common representation \citep{maurer2016benefit, arjovsky2019invariant, hu2022improving}. Under this shared representation assumption, features learned from the source data can be directly applied to the target task, facilitating efficient knowledge transfer. However, in practice, representations from related tasks are often similar but not identical \citep{tony2021transClass}. Beyond the condition of shared representations, it is important to investigate the role of sufficient representations in facilitating knowledge transfer for general supervised learning problems.
Recently, several authors have developed nonlinear representation learning methods based on sufficient dimension reduction \citep{lee2013general, huang2020deep, YuGMDD2024}.
%%%%%%%%%%%%%%%%%%%%%%%%%%
\citet{jiao2024deepTrans} consider transferring source domain sufficient and domain-invariant representations to the target task. They develop the prediction model for the target task,
assuming a linear relationship between the transferred source domain representations and the response of interest.
Despite these advancements, the study of data representations in transfer learning problems remains under-explored. Specifically, it remains unclear how to construct representations that can leverage knowledge from source domains while simultaneously capturing specific information from the target domain for general supervised learning problems.

In this paper, we introduce a novel transfer learning method that facilitates knowledge transfer through data representation. Specifically, the proposed method first estimates a  sufficient and invariant representation from the source domains.
This representation is then enhanced with additional independent representations obtained from the target data, which allow us to adapt to the specific characteristics of the target domain.
This enhancement strategy ensures its sufficiency for the target domain and still leverages valuable information from the source domains.
For simplicity,
we refer to the proposed method as Transfer learning through Enhanced Sufficient Representation (TESR).

A notable advantage of TESR is that it does not rely on assuming similar model structures across different tasks. For example, the source domain models can be regression models, while the target domain task can be classification. This flexibility makes TESR applicable to a wide range of supervised learning problems where the source and target domains may differ substantially.
Additionally, unlike conventional transfer learning approaches, which typically transfer model parameters, a key feature of TESR is that it focuses on transferring data representations from sources to the target domain. Another important aspect of TESR is that it does not assume that a representation sufficient for the source domain will also be sufficient for the target domain. Instead, we posit that a sufficient representation in the source domains provides useful information for the target domain.  This assumption allows for greater flexibility and adaptability in the transfer learning.

This paper is structured as follows: Section 2 presents the framework of the proposed TESR method. It presents the underlying principles and architecture of TESR.
In Section 3, we describe the objective functions for TESR and outlines the estimation procedure.
Section 4 establishes the convergence rate of the proposed method and examines its theoretical advantages. Section 5 evaluates the numerical performance of TESR through a series of simulation studies. Additionally, Section 6 illustrates the method's practical applications by applying it to two real-world datasets. Finally, Section 7 discusses the findings and suggests potential directions for future research.

\section{Method} %Enhancing source domain representation with target data}

In this section, we present the framework of
Transfer Learning through Sufficient and Invariant Representation. We begin by learning a sufficient and invariant representation (SIRep) from the source data. Next, we enhance SIRep with an augmented component derived from the target data. This augmented component is specifically designed to capture information present in the target dataset that is not contained in the SIRep.

Suppose that there are $S+1$ domains denoted as $\mD_s = \{X_s,Y_s\}$ for $s =0,1,\dots,S$,
where $X_s\in \mathbb{R}^d$ is the vector of predictors and $Y_s\in \bR^{q_s}$ is the vector of responses for task $s.$ We denote
$s=0$ as the index of the target domain and $s=1,\dots,S$ as
the source domains. Let $n_s = |\mD_s|$ be the sample size of the $s$th domain.
We are mainly interested in the scenario when the sample size of the target dataset $n_0 = |\mD_0|$ is limited, and the total sample size of the source datasets $N = \sum_{s=1}^S n_s$ is large.

The goal of transfer learning is to improve performance on the target task by leveraging information from source datasets. In this work, we allow the possibility that the source datasets may not share the same structure as the target dataset. For instance, the source domain task might involve a regression problem, whereas the target domain task could involve a classification problem.

\subsection{Sufficient representation}
For a given domain $s$,  a sufficient  representation is a measurable function $R_s: \mathbb{R}^d \to \mathbb{R}^{r_s}$ with the property  \citep{huang2020deep}:
\begin{align}
\label{def_sufficientRepre}
    Y_s\indep X_s \mid R_s(X_s), \ s= 0, 1, \ldots, S,
\end{align}
that is,  $Y_s$ and $X_s$ are conditionally independent given $R_s(X_s)$.
This implies the representation $R_s(X_s)$ contains all the information in $X_s$ relevant to $Y_s$.
This formulation is a nonparametric generalization of the basic condition in sufficient dimension reduction \citep{li1991sliced, cook2005sufficient}, where it is assumed $R_s(X_s)=B_s^T X_s$ with
$B_s\in \mathbb{R}^{d \times r_s}$ satisfying $B_s^T B_s=I_{r_s}.$ We refer to   \citet{huang2020deep} and   \citet{YuGMDD2024} for discussions on nonlinear sufficient representation.

 \subsection{Sufficient and invariant representation of source data}
 The main challenge in transfer learning lies in determining the specific type of knowledge that should be transferred from the source domains to the target domain. We want to transfer the most essential and relevant knowledge from these source domains to enhance the performance in the target domain.
 One effective approach to identifying this essential knowledge is through the concept of invariance  \citep{arjovsky2019invariant, jiao2024deepTrans}.
Invariance refers to the consistent and stable knowledge encoded in the predictors. Therefore, invariant representations from the source domains are expected to carry the most essential and robust knowledge, thus enjoying high generalization power to unknown target domain.

Therefore, we consider a  common representation $R_c(\cdot)$ that is sufficient and invariant
for all the source domains. We denote the joint dataset where the sources  $s=1,\dots,S$ are pooled together as $\mD_{pool} = \{X_{pool},Y_{pool},Z\},$ where $Z\in \{1,\dots,S\}$  is the categorical indicator  as the index of the domains. The pooled dataset $\mD_{pool}$ is of sample size $N = \sum_{s=1}^S n_s$ and $Z_i = s$ means the sample point $(X_{pool,i},Y_{pool,i})$ is from the dataset $\mD_s$.
Then a sufficient and  invariant representation (SIRep) for the source domains is defined as a function
$R_c$ that satisfies
\begin{align}
\label{def_Rc}
     Y_s \indep X_s | R_c(X_s) \ \text{ and } \
     % Y_{pool} \indep X_{pool} | R_c(X_{pool}) ,\
     R_c(X_{pool}) \indep Z,\
     \text{ for } s=1,\dots,S.
\end{align}
The first term
$Y_s \indep X_s | R_c(X_s)$ implies that $R_c(\cdot)$ is sufficient for all the source domains.
The second term $R_c(X_{pool}) \indep Z$ implies that the distributions of $R_c(X_s)$  are unchanged across the sources  $s=1,\dots,S$.
These invariant features are essential to ensure the transferred knowledge is both robust and applicable to new target domains  \citep{arjovsky2019invariant,jiao2024deepTrans}.
Such a representation always exists, since a trivial solution is $R_c = [R_1,\dots,R_S]$ that concatenates all the sufficient representations from the sources. Of course, such a simple combination of the individual sufficient representations is generally not an efficient solution for invariant representation as the $R_s$ for $s=1,\dots,S$ may share some information, leading to
overlapping components in terms of information content in the concatenated representation.

The estimation of sufficient and invariant representations does not preclude the existence of heterogeneity among the source domains. As mentioned earlier, in the extreme case where all source domains have different sufficient representations, we can simply concatenate these distinct representations. Generally, greater heterogeneity may result in a more complex invariant representation, while less heterogeneity leads to simpler ones. In practice, the extent of heterogeneity among source domains is often unknown. The proposed invariant representation method does not require prior knowledge of this heterogeneity or identification of which sources differ.

\subsection{Enhancing source domain representation with target data}
A critical issue is that a representation that is sufficient for the source domains may not be sufficient for the target domain. This discrepancy arises due to potential differences between the data distributions of the source and target domains. Consequently, it is essential to enhance the SIRep from the sources using the target data.

\begin{figure}[H]
\centering
\includegraphics[width=0.8\textwidth,trim = 0 100 20 0,clip]{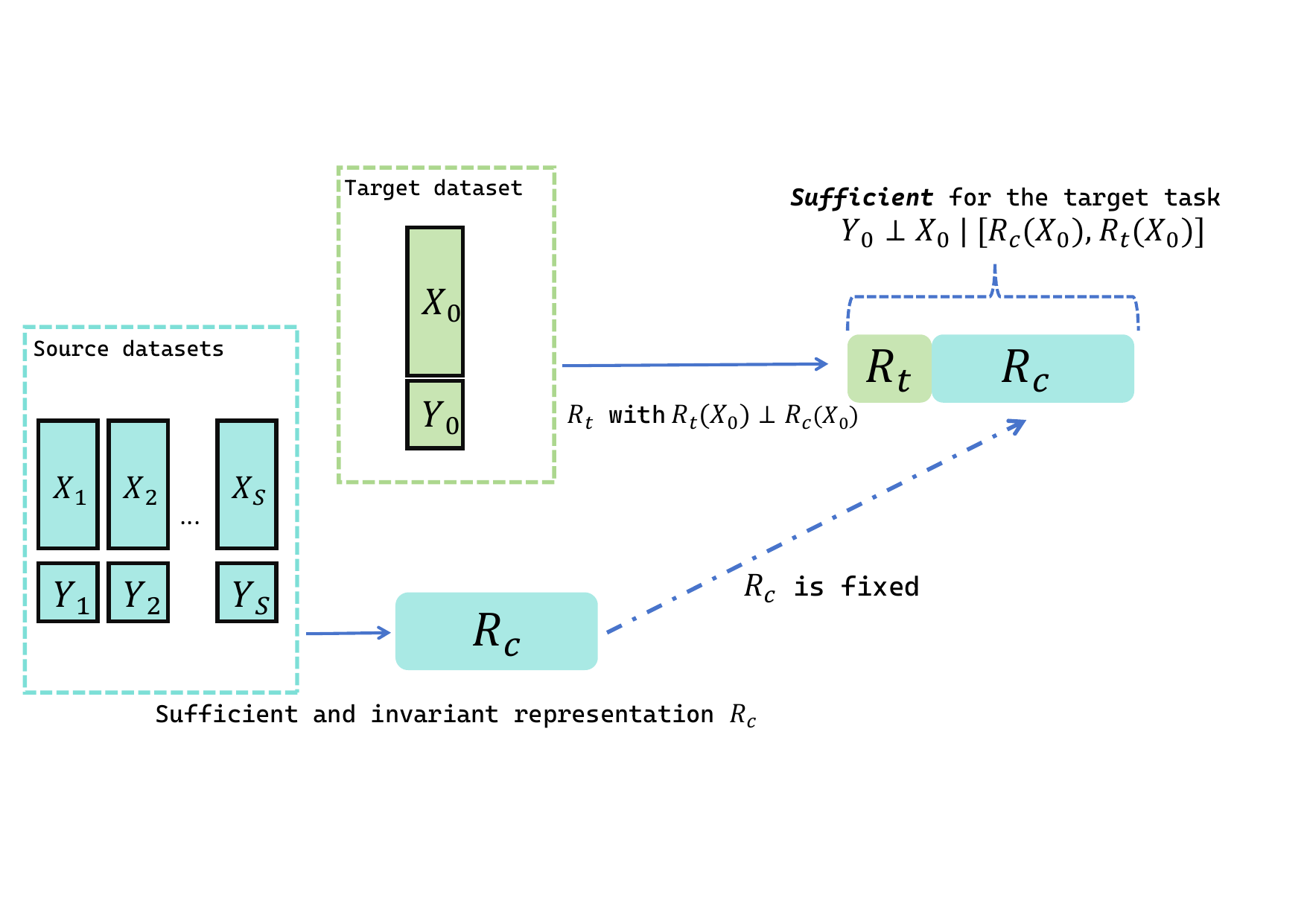}
\caption{The illustration of the proposed framework. The sufficient and invariant representation
$R_c$ summarizes information from the sources, while $R_t$ captures specific information relevant to the target task, ensuring that $[R_c, R_t]$ is sufficient for the target data. Importantly, $R_t(X_0)$ and $R_c(X_0)$ are required to be independent.
}
\label{Fig_Framework_Summary}
\end{figure}

We enrich the source data representation $R_c$ with an additional component $R_t$ such that the combined representation $\big[R_c, R_t\big]$ satisfies the following two requirements:

\begin{itemize}
\setlength{\itemsep}{0pt}
\setlength{\parsep}{0pt}
\setlength{\parskip}{0pt}
\item The combined representation is sufficient for the target data, that is,
\begin{align}
\label{Sec2:def_RcRstar}
Y_0\indep X_0 \mid \big[R_c( X_0),R_t( X_0)\big].
\end{align}
\item Any additional information represented by
$R_t$ for the target data, if necessary, should be independent of $R_c,$ that is,
 \begin{align}
 \label{Sec2:def_RcRstarb}
 R_c(X_0) \indep R_t(X_0).
 \end{align}
\end{itemize}
Condition (\ref{Sec2:def_RcRstar}) ensures that the target response $Y_0$ is independent of the target predictors $X_0$ given the combined representation $[R_c(X_0), R_t(X_0)].$ By achieving this, we can effectively leverage the knowledge from the source domains while adapting to the specific characteristics of the target domain. Meanwhile, by requiring the independence condition (\ref{Sec2:def_RcRstarb}), we  avoid learning  information that already captured by
$R_c$ with limited target dataset. Figure \ref{Fig_Framework_Summary} illustrates our framework.

The existence of $R_t$ is always guaranteed, since for any given $R_c$, we can set $R_t = R_0$, the sufficient representation for target domain, ignoring the contribution of $R_c.$
Generally,  $R_t$ can be significantly simpler and easier to estimate than $R_0$, when $R_c$ already contains a significant amount of the information for a sufficient representation in the $\mD_0.$ This representation learning approach aligns with the rationale of transfer learning, which seeks to leverage transferred knowledge to reduce the modeling complexity. By decomposing the representation into $R_c$ and $R_t,$ we can isolate the complex, invariant features captured by $R_c$ from the simpler, target-specific features captured by $R_t.$

By enhancing the source data representation with a component learned from the target data, we aim to create a combined representation that is both sufficient and adaptable.
This dual-focus strategy allows us to effectively utilize the valuable knowledge from the source domains while tailoring the model to address the unique characteristics of the target domain.
This results in a more robust, accurate, and generalizable model that can better achieve the overarching goals of transfer learning.
\begin{figure}[!htbp]
\centering
\includegraphics[width=0.40\textwidth,trim = 20 30 30 20,clip]{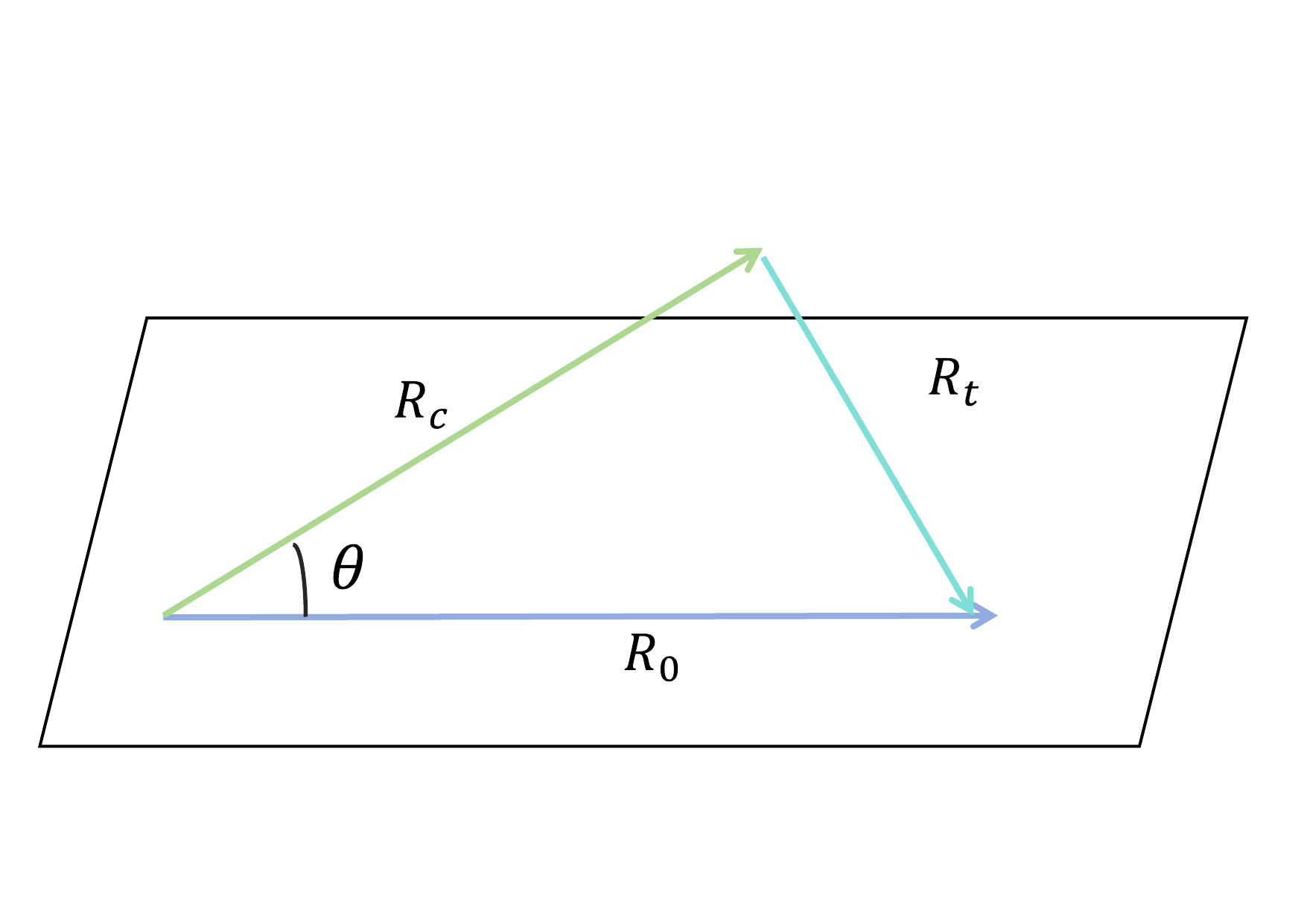}
\includegraphics[width=0.40\textwidth,trim = 20 30 30 20,clip]{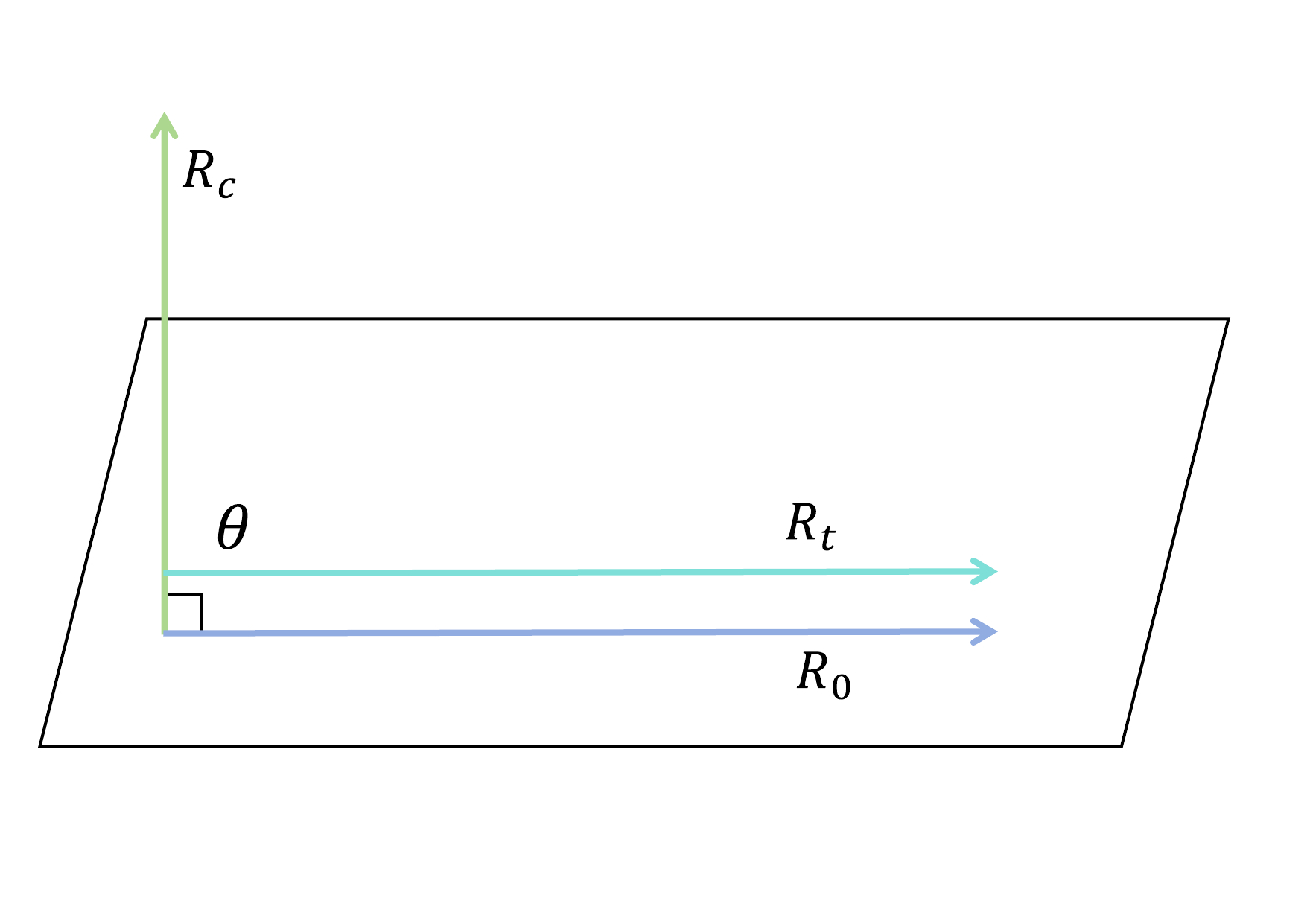}
\caption{The relationships between $R_c$, $R_t$, and $R_0$.
Estimating $R_t$ is simpler than estimating $R_0$ when $R_c \not\perp\!\!\!\perp R_0$ (left panel) with dataset $\mD_0$  because $R_c$ contains useful information about $R_0$. Conversely, when source data does not contain useful information about the target data, meaning $R_c \indep R_0$ (right panel), estimating $R_t$ is similar to directly estimating $R_0$.}
\label{Fig_Vertial_illustration}
\end{figure}

Figure \ref{Fig_Vertial_illustration} illustrates how TESR adapts to the heterogeneity between source and target data.
In cases where $R_c$ provides useful information about $R_0$ (left panel), $R_t$ can be learned such that the combination $[R_c, R_t]$ captures all the information necessary to recover $R_0$. Conversely, if $R_c$ is irrelevant or uninformative (right panel), that is, the source domains do not contain any useful information about the target domain, then we have $R_t = R_0$.
This adaptability makes TESR robust to both the presence and absence of meaningful information in $R_c$ for the target task. Additionally, the property $R_t \indep R_c$ ensures that the target dataset is used exclusively to learn new information that cannot be obtained from the source datasets,
leading to the efficient use of the limited sample size available in the target dataset.

\section{Estimation method}
We implement the TESR framework in two steps: First, we estimate $R_c$ using all source datasets $\mD_s$ with $s=1,\dots,S$. Then, with the estimated $R_c$, we estimate $R_t$ on the target task $\mD_0$, ensuring orthogonality between $R_c(X_0)$ and $R_t(X_0)$. The combined representation $[R_c, R_t]$ is then used for the target tast on $\mD_0$.

\subsection{Preliminaries}
In this subsection, we describe the measures that will be used for characterizing independence and equality in distributions.

\subsubsection{Distance covariance}
\label{sub_ds}
Let $\mV$ be a  dependence measure between random variables $U$ and $V$ with the following properties:
(a) $\mV[U,V] \ge 0$ with $\mV[U,V]=0$ if and only if $U\indep V$;
(b)  $\mV[U,V] \ge \mV[R(U), V]$ for every measurable function $R$;
(c) $\mV[U,V]=\mV[R^*(U), V] \ \text{ if and only if } R^* \text{ is sufficient}.$
These properties imply that $R^*$ is a sufficient representation for prediction $V$ if and only if
$ R^* \in \argmin_{R}\{-\mV[R(U), V]\}. $

We use distance covariance to measure the dependence of two random variables  \citep{szekely2007measuring}.
Let $\mathfrak{i}$ be the imaginary unit $(-1)^{1 / 2}$. For any $\vs\in \mathbb{R}^{d}$ and $\vt\in \mathbb{R}^m$, let  $\psi_{U}(\vs)=\mathbb{E} [\exp^{\mathfrak{i}\vs^TU}], \psi_{V}(\vt)=\mathbb{E} [\exp^{\mathfrak{i}\vt^TV}],$ and $\psi_{U, V}(\vs,\vt) = \mathbb{E} [\exp^{\mathfrak{i}(\vs^T U+ \vt^T V)}]$ be the characteristic functions of random vectors $U\in \mathbb{R}^d , V \in \mathbb{R}^m,$ and the pair  $(U, V)$, respectively.
A specific squared distance covariance $\mathbb{V}[U, V]$ is defined as
$$
\mathbb{V}[U, V]=\int_{\mathbb{R}^{d+m}} \frac{\left|\psi_{U, V}(\vs, \vt)-\psi_{U}(\vs) \psi_{V}(\vt)\right|^2}{c_{d}c_{m}  \|\vs\|^{d+1}\|\vt\|^{m+1}} \mathrm{d} \vs \mathrm{d} \vt,
$$
where $c_{d}=\frac{\pi^{(d+1) / 2}}{\Gamma((d+1) / 2)}.$
This form of distance covariance has a simple expression,
\begin{align*}
\begin{aligned}
    \mV(U,V) = &
    \bE\|U-U^{\prime}\| \|V-V^{\prime}\|+\bE\|U-U^{\prime}\|\bE[\|V-V^{\prime}\|]
    \\&-\bE\|U-U^{\prime}\|\|V-V^{\prime\prime}\|
    -\bE\|U-U^{\prime\prime}\|\|V-V^{\prime}\|,
\end{aligned}
\end{align*}
where
 $\|\cdot\|$ is the Euclidean distance,
% $\{U, V\}$,
$\{U', V'\}$ and
$\{U^{''}, V^{''}\}$ are independent copies of $(U,V)$.

Given $n$ i.i.d. copies $\{U_{i}, V_{i}\}_{i=1}^n$ of $(U,V)$,
an unbiased estimator of $\mV$
is the empirical distance covariance $\widehat{\mV}_{n}$,
which can be elegantly expressed as a $U$-statistic  \citep{huo2016fast}
\begin{equation}\label{usta}
\widehat{\mV}_{n}[U, V]
 = \frac{1}{\tbinom{n}{4}} \sum_{1 \leq i_{1}<i_{2}<i_{3}<i_{4} \leq n} h\left(\left(U_{i_{1}}, V_{i_{1}}\right), \cdots,\left(U_{i_{4}}, V_{i_{4}}\right)\right), \nonumber
\end{equation}
where $h$ is the kernel defined by
\begin{eqnarray*} %\label{kernel}
h\left(\left(u_{1}, v_{1}\right),\ldots,
\left(u_{4}, v_{4}\right)\right)
&=&\frac{1}{4} \sum_{1 \leq i, j \leq 4 \atop i \neq j}\|u_{i}-u_{j}\| \|v_{i}-v_{j}\|
 +\frac{1}{24} \sum_{1 \leq i, j \leq 4 \atop i \neq j}\left\|u_{i}-u_{j}\right\| \sum_{1 \leq i, j \leq 4 \atop i \neq j}\|v_{i}-v_{j}\| \\
& &
 -\frac{1}{4} \sum_{i=1}^{4}(\sum_{1 \leq j \leq 4 \atop j \neq i}\left\|u_{i}-u_{j}\right\| \sum_{1 \leq j \leq 4 \atop i \neq j}\|v_{i}-v_{j}\|).
\end{eqnarray*}

\subsubsection{Energy distance}
\label{sub_ed}
Since sufficiency is invariant under one-to-one transformations, it is always possible to transform
a sufficient representation $R^*(X)$ to have a Gaussian distribution, under the assumption
that $R^*(X)$ has a finite second moment and is absolutely continuous with respect to the Gaussian distribution \citep{huang2020deep}. Therefore, we can restrict the space of sufficient representations to those with a standard Gaussian distribution. For the Gaussian regularization, we use a divergence measure $\mathbb{D}$  to quantify the difference between the distributions of two random variables $U$ and $V$. This measure should satisfy the condition
$\mathbb{D}(U\Vert V)\ge 0 $
and  $\mathbb{D}(U\Vert V)=0$  \text{ if and only if }
$U$ and $V$ have the same distribution.
In this work, we use the energy distance  \citep{rs2016},
\begin{align}
\label{ed1}
    \bD(U\Vert V) = 2\bE\|U-V\|-\bE\|U-U'\|-\bE\|V-V'\|,
\end{align}
where $\{U',V'\}$ are independent copies of $\{U,V\}$, respectively.
We note that \citet{huang2020deep} used Generative Adversarial Networks (GANs) to push the distribution of  data representations to a Gaussian distribution. However, GAN-based regularization requires complex optimization designs and may lead to suboptimal convergence in practice. In contrast, the energy distance used is simpler and easier to compute, providing a practical alternative to GAN-based approaches.

When two empirical samples  $\{U_i, i=1, \ldots, n\}$ and $\{V_j, j=1, \ldots, n\}$ are available,
following the suggestion of   \citet{gretton2012kernel}, we use the following empirical version
of the energy distance,
    \begin{align}
    \label{edn}
 \bD_n(U\Vert V) = \frac{1}{\tbinom{n}{2}} \sum_{1\leq i,j\leq n}h_e(u_i,u_j;v_i,v_j),
\end{align}
where
$h_e(u_1,u_2;v_1,v_2)=   \|u_1- v_2\| + \|u_2- v_1\|
-
\|u_1- u_2\| - \|v_1- v_2\|
.$

\subsection{Estimating sufficient and invariance representation from source data}
We first present the proposed method for estimating a SIRep $R_c$ from the source datasets $\mD_s, s=1,\dots,S$. We are interested in estimating a representation characterized in
(\ref{def_Rc}). Based on (\ref{def_Rc}), a SIRep $R_c$ can be characterized as a
solution to the constrained optimization problem:
\begin{align}
\label{Sec3_obj1}
    &R^*_c = \underset{R}{\argmin}\Big\{ -\sum\limits_{s=1}^S \mV(R(X_s),Y_s)\Big\},\nonumber\\
    &\text{subject to }\  R(X_s) \sim N(0,\bI)\ \text{ and } \ R(X_{pool}) \indep Z, \text{ for } s=1,\dots,S,
\end{align}
where, as discussed in Subsection \ref{sub_ed}, we imposed a constraint that the representation has
a Gaussian distribution.
The Lagrangian form of (\ref{Sec3_obj1}) is
    \begin{align}
        \label{pop0}
       \mL_{S}(R) = \sum\limits_{s=1}^S
     \Big\{-\mV(R(X_s),Y_s)
    + \lambda_E \bD(R(X_s)\Vert \gamma_{r_c})
    \Big\}+ \lambda_Z \mV(R(X_{pool}),Z) ,
    \end{align}
where
$\gamma_{r_c}\sim N(0,\boldsymbol{I}_{r_c})$,
$\lambda_E$ and $\lambda_Z$ are regularization parameters.

Now suppose we have source datasets
%Ge 0131
$\mD_s = \{X_{s,i}, Y_{s,i}, i=1, \ldots n_s\}, s=1, \ldots, S.$
% $\mD_s = \{X_{is}, Y_{is}, i=1, \ldots N\}, i=1, \ldots, S.$
Let $\mV_{n}(\cdot,\cdot)$ be the empirical distance covariance defined in (\ref{usta}), and
let $\bD_{n}(\cdot \Vert \cdot)$ be the empirical version of the energy distance defined in (\ref{edn}).
Then we have the empirical objective function,
\begin{align}
\label{obj_empirical}
  \mL_{S,n}(R) = \sum\limits_{s=1}^S\Big\{-\mV_{n}(R(X_s),Y_s)
    + \lambda_E \bD_{n}(R(X_s)\Vert \gamma_{r_c})\Big\}+ \lambda_Z \mV_{n}(R(X_{pool}), Z),
\end{align}
where $X_{pool}$ is the pooled covariates from $\mD_s$, $s=1,\dots,S$,
$Z$ is the one-hot domain indicator, $\lambda_E$ and $\lambda_{Z}$ are tuning parameters. Then
$\widehat R_c =  \argmin\limits_{R\in \mF_{R_c}} \mL_{S,n}(R).$

\subsection{Estimating enhanced representation with target data}
After a SIRep $R_c$ is derived from the source datasets is obtained,  we move on to construct a
representation for the target data. If there is no difference between the source and target data, we can simply use $R_c$ as the representation for the target dataset. However, to account for potential heterogeneity between the source and target data, we enhance $R_c$ using the target data.

Specifically, for a given $R_c$ and the target data $\mD_0 = \{X_0,Y_0\}$, we seek an augmentation $R_t$ that satisfies (\ref{Sec2:def_RcRstarb}). This can be equivalently formulated as an optimization problem with the objective
\begin{align}
\label{Sec3_obj2}
&R^*_t = \underset{R}{\argmin}\{ -\mV([R(X_0),R_c(X_0)\big],Y_0)\}, \nonumber\\
& \text{subject to }\   R(X_0) \indep R_c(X_0) \text{ and } R(X_0) \sim N(0,\bI),
\end{align}
where similar to Subsection \ref{sub_ed}, we require the augmentation $R^*_t$ to be Gaussian.

We reformulate this problem by expressing the objective function in the  Lagrangian form,
\begin{align}
 \label{obj_pop22}
\mL_{T}(R, R_c) = -\mV(\big[R(X_0), R_c(X_0)\big],Y_0) + \lambda_C \mV(R(X_0), R_c(X_0))
+ \lambda_{E,0} \bD(R(X_0)\Vert \gamma_{r_t}),
\end{align}
where
$\gamma_{r_t}\sim N(0,\boldsymbol{I}_{r_t})$,
the $\lambda_C \ge 0 $ and $\lambda_{E,0}\ge 0 $ are regularization parameters.
Then for a given estimator $\widehat R_c$ based on the source datasets, we augment it by
\begin{align}
\label{obj_empirical2}
   & \widehat R_t = \underset{R\in \mF_{R_t}}{\argmin} \
 \mL_{T,n}(\big[R,\widehat R_c\big]),
\end{align}
with
$
\mL_{T,n}(R,\widehat R_c) = -\mV_n(\big[R(X_0),\widehat R_c(X_0)\big],Y_0)
    + \lambda_C \mV_n(R(X_0),\widehat R_c(X_0))
    + \lambda_{E,0} \bD_n(R(X_0)\Vert \gamma_{r_t}).
$

We summarize the implementation of TESR in Algorithm \ref{algo:TransRep_with_reg}.
After obtaining the representation $[\widehat R_c,\widehat  R_t]$,  we can build predictive models based on the representation.

\begin{algorithm}[!ht]
\footnotesize
\spacingset{1.3}
\caption{Transfer learning through Enhanced Sufficient Representation} %Transfer learning with Data Representation}
  \label{algo:TransRep_with_reg}
  \textbf{Input: }
  The  source datasets
  $\mD_{s} = \{Y_{s},X_{s}\}$
  for $s=1,\dots, S$ and
  the target dataset
  $\mD_{0} = \{Y_{0},X_{0}\}$,
  tuning parameter $\lambda_E,\lambda_Z,\lambda_C,\lambda_{E,0}$.

\textbf{Step I: Estimating SIRep $R_c$ with source datasets }
  \begin{algorithmic}[1]
    \State
    With the source datasets $\mD_{s}$ with $s=1,\dots, S$,
    we learn $R_c$ with the following objectives,
\begin{align}
\label{obj_Rc}
  \widehat  R_c = \underset{R\in \mF_{R_c}}{\argmin} \  \mL_{S,n}(R),
\end{align}
where
$\mF_{R_c}$ is the neural network class for learning $R_c$,
$\mL_{S,n}(R) =
 \sum\limits_{s=1}^S\Big\{-\mV_{n}(R(X_s),Y_s)
    + \lambda_E \bD_{n}(R(X_s)\Vert \gamma_{r_c})\Big\}+ \lambda_Z \mV_{n}(R(X_{pool}), Z)$, $Z$ is the one-hot  domain indicator and $X_{pool}$ is the pooled covariates for datasets with $s=1,\dots,S$.
  \end{algorithmic}

  \textbf{Step II: Estimating $R_{t}$ with target dataset} % $\mD_0$}
  \begin{algorithmic}[1]
  \State
For a given estimator $\widehat R_c$ based on the source datasets, we learn $R_t$ with the target dataset $\mD_{0} = \{Y_{i},X_{i}\}$ to augment $\widehat R_c.$
Specifically,
\begin{align*}
   & \widehat R_t = \underset{R\in \mF_{R_t}}{\argmin} \
 \mL_{T,n}(\big[R,\widehat R_c\big]),
\end{align*}
where
$\mF_{R_t}$ is the neural network class
and
$
\mL_{T,n}(R,\widehat R_c)=-\mV_n(\big[R(X_0),\widehat R_c(X_0)\big],Y_0) + \lambda_C \mV_n(R(X_0),\widehat R_c(X_0)) + \lambda_{E,0} \bD_n(R(X_0)\Vert \gamma_{r_t}).
$
\end{algorithmic}

\textbf{
  Output: The estimated representation $[\widehat  R_c,\widehat  R_t]$.
}
\end{algorithm}

\subsection{Linear representations}
To better illustrate the concept of the proposed TESR framework, we consider
the case where all the representation functions are linear in this section.

\begin{itemize}
\setlength{\itemsep}{0pt}
\setlength{\parsep}{0pt}
\setlength{\parskip}{0pt}
\item
A linear sufficient representation $R_s(X_s)=B_s^\top X_s$ for the $s$th source dataset satisfies
    $Y_s \indep X_s \mid B_s^\top X_s,$
where $B_s\in \mathbb{R}^{d \times r_s}$ and $B_s^T B_s=I_{r_s}.$

\item
A linear sufficient and invariant representation $R_c(X)=B_c^\top X$ satisfies
\begin{align}
\label{Case_linear_Bc}
    Y_s \indep X_s \mid B_c^\top X_s, \text{ for } s=1,\dots,S,
    \text{ s.t. }
    B_c^\top X_{pool}\indep Z,
\end{align}
where $B_c\in \mathbb{R}^{d \times r_c}$ and $B_c^T B_c=I_{r_c}$,
$X_{pool}$ is the pooled covariates from different sources and $Z$ is the source indicator.
\end{itemize}
The invariance constraint $B_c^\top X_{pool} \indep Z$ ensures that $B_c$ contains stable knowledge encoded in the predictors. The orthogonality condition $B_c^T B_c = I_{r_c}$ is similar to the normality constraint \eqref{Sec3_obj1} for nonlinear representation functions.
These constraints help in achieving a stable and interpretable representation by ensuring that the learned features are orthogonal, which is crucial for maintaining the disentanglement and regularity properties.

Similar to \eqref{pop0},  $B_c$ can be defined as a solution of the optimization problem,
    \begin{align*}
       B_c^*\in \underset{B_c\in \bR^{d\times r_c}}{\argmin} \sum\limits_{s=1}^S
     \Big\{-\mV(B_c^\top X_s,Y_s)
    + \lambda_E  ||B_c^\top B_c - \mathbf{I}_{r_c}||^2
    \Big\}+ \lambda_Z \mV(B_c^\top X_{pool},Z) ,
    \end{align*}
where
$\lambda_E$ and $\lambda_Z$ are regularization parameters. A trivial solution for summarizing the information from the sources is the stacked matrix $[B_1, \dots, B_S]$.
However, the sources usually share some similarities, leading to the matrix $[B_1, \dots, B_S]$ not being of full rank. Therefore, it is preferred to seek a more concise representation that leverages these similarities among the sources efficiently. To this end, it is sufficient to learn the $B_c$ which is the basis of the space spanned by $[B_1,\dots,B_S]$. This approach aligns with the findings presented in  \citet{xu2022distributed} for the analysis of dimension reduction with heterogeneous sub-datasets.

With the $B_c$ from the source domains that satisfies (\ref{Case_linear_Bc}),
our goal is to learn a $B_t\in \mathbb{R}^{d \times r_t}$ augmenting $B_c$ such that
$[B_c,B_t]$ is sufficient and $B_c \indep B_t.$ This can be stated as
\begin{align*}
    Y_0 \indep X_0 \mid [B_c^\top X_0,B_t^\top X_0]
  \  \text{ s.t. } \ B_c^\top  B_t = \boldsymbol{0}.
\end{align*}
The $B_t$ can be characterized as a solution to the optimization problem
    \begin{align*}
       B_t^* \in \underset{B_t\in \bR^{d\times r_t}}{\argmin}
     \Big\{-\mV([B_c^\top X_0,B_t^\top X_0],Y_0)
    + \lambda_{E,0}||B_t^\top B_t - \mathbf{I}_{r_t}||^2+ \lambda_{C} ||B_c^\top  B_t||^2
    \Big\},
    \end{align*}
where $\lambda_{E,0}$ and $\lambda_{C}$ are tuning parameters. The solutions $B_c^*$ and $B_t^*$ are typically non-unique, but all solutions lead to the same central subspace  \citep{ma2013efficient,xu2022distributed}. We refer to the Section A.2 of the Supplementary Materials for more details.

The method of linear sufficient dimension reduction using distance covariance has been discussed by  \citet{sheng2016sufficient,sheng2013direction}.
The linear cases discussed above serve as an illustration of the concept of TESR.
However, the intrinsic structures in real-world data are usually more complex and cannot be precisely represented by linear combinations of predictors. Nonlinear representations allow for a more flexible and comprehensive characterization of the underlying data structures.

\section{Theoretical Guarantees}
Under the framework of empirical risk minimization, the performance of the empirical risk minimizer $\big[\widehat R_c,\widehat R_t\big]$ can be evaluated by the excess risk.
In this section, we establish the  rate for the excess risk of the proposed TESR method.

Theoretical results are influenced by several factors, including the properties of $R_c,R_t$, the neural network classes $\mF_{R_c}$, $\mF_{R_t}$, the sample sizes of sources $n_s$ and  target datasets $n_0$. We introduce some mild theoretical conditions on these factors, which are commonly considered in the deep learning literature. For further explanation and technical details regarding the function class and neural network classes, please refer to Section A of the Supplementary Materials.

We first have assumptions on the data distributions,
\begin{assumption}
\label{assumption_finitexy}
    Let $\mu_X$ be the  probability measure of the covariates. The $\mathrm{supp}(\mu_{X})$ is contained in a compact set, say $[-B,B]^d$ with a finite $B$ and denote its density function as $f_{X}(x)$. $Y$ is bounded almost surely, say $\|Y\|\leq C$ a.s..
\end{assumption}

Then we introduce the conditions on the representation functions.
Recall that  we are interested in
$R_c = \big[R_{c,1},\dots,R_{c,r_c}\big]$,
$R_t = \big[R_{t,{1}},\dots,R_{t,{r_t}}\big]$,
and
$R_0 = \big[R_{0,{1}},\dots,R_{t,{r_0}}\big]$.
In this paper, we assume that these functions  belong to $B^{\bbeta}_{p,q,\betatilde}(\Omega)$, the  anisotropic Besov (a-Besov) function class
where
$\bbeta = (\beta_1,\dots,\beta_d)^\top \in \bR_{+}^d$
is the non-negative smoothness indices on the direction of  $d$ coordinates,
 $\betatilde$ is the average smoothness,
$
 \betatilde = (\sum_{k=1}^d 1/\beta_k)^{-1}
$
,
$\Omega$ is the domain and $p,q$ are norm indices
\citep{suzuki2021deep}.
The coordinate mixed smoothness  provides insights for overcoming the curse of dimensionality   \citep{suzuki2021deep}.
For simplicity, in the following analysis, we omit the $\bbeta$ and denote the  a-Besov space as $B_{p,q,{\betatilde}}$ as the $\betatilde$ directly impacts the final convergence analysis for the deep neural network estimator. We refer to the Section B.1 for more details of the a-Besov class.

We make the following assumptions on the structure of the functions.
\begin{assumption}[Smoothness of representation functions]
\label{assumption_repre_aBesov}
All  elements of representation functions
$R_c: \bR^d \to \bR^{r_c}$,
$R_t: \bR^d \to \bR^{r_t}$,
$R_0: \bR^d \to \bR^{r_0}$,
belong to  a-Besov  spaces.
Specifically,
\begin{itemize}
\setlength{\itemsep}{0pt}
\setlength{\parsep}{0pt}
\setlength{\parskip}{0pt}
\item[(i)]
Write $R_c = [R_{c,1},R_{c,2},\dots,R_{c,r_c}]$.
Assume that $R_{c,k} \in B_{p,q,\betatilde_{c,k}}$
where $\betatilde_{c,k}$ is the  smoothness index, $k=1,\dots,r_c.$
Denote
$\betatilde_{c} = \min\{\betatilde_{c,1},\dots,\betatilde_{c,r_c}\}.$

\item[(ii)]
Write   $R_t = [R_{t,1},R_{t,2},\dots,R_{t,r_t}]$.
Assume that $R_{t,k} \in B_{p,q,\betatilde_{t,k}}$
where $\betatilde_{t,k}$ is the smoothness index, $k=1,\dots,r_c.$
Denote $\betatilde_{t} = \min\{\betatilde_{t,1},\dots,\betatilde_{t,r_t}\}.$

\item[(iii)]
Write $R_0 = [R_{0,1},R_{0,2},\dots,R_{0,r_0}]$.
Assume $R_{0,k} \in B_{p,q,{\betatilde_{0,k}}},$ $k=1,\dots,r_c.$
Denote $\betatilde_{0} = \min\{\betatilde_{0,1},\dots,\betatilde_{0,r_0}\}.$
\end{itemize}
\end{assumption}
Assumption \ref{assumption_repre_aBesov} requires that all these representations belong to  the a-Besov classes with different smoothness parameters.
The a-Besov function space  includes
% many popular classes
many popular classes, such as H\"older class and Besov class. Thus, the theoretical studies in our work are general and applicable to various problems.
As we mentioned before, the function $R_t$ usually has a much  simpler structure than  $R_0$, which means  $\betatilde_t> \betatilde_0$.

Since the a-Besov function classes are considered, \citet{suzuki2021deep} proves that the deep neural network from the ReLU neural network function class $\mF$ can effectively learn functions from the a-Besov class where $\mF$ is the  function class of the feed-forward neural network  with the Rectified Linear Unit (ReLU) activation function    \citep{Schmidhuber_2015}.
Let $\mF\equiv \mF(\btheta,\mathcal{H}, \mathcal{W}, \mS)$ be the set of such ReLU neural networks
$R: \bR^d \rightarrow \bR^r$ with weights $\btheta$,  depth   $\mH$,   width $\mW$ and  size $\mS$.
Here the depth $\mH$ refers to the number of hidden layers. A $(\mH+1)$-vector $(w_0, w_1, \ldots, w_{\mH})$ denotes the width of each layer. The width $\mW=\max\{w_1, \ldots, w_{\mH}\}$ is the maximum width of the hidden layers. The size $\mS=\sum_{i=0}^{\mH}[w_i\times w_{i+1}]$ is the total number of parameters in the network. In this paper, we denote $\mF_{R_c}$, $\mF_{R_t}$
as the network classes for learning $R_c$ and  $R_t$, respectively. Then we give the specification as follows.

\begin{assumption}[Neural network classes for learning $R_c$ and $R_t$]
\label{assumption_NNClass}
 Recall that $N$ denotes total sample size on the source datasets.
The neural network classes
$\mF_{R_c}$, $\mF_{R_t}$  for  learning  $R_c$ and $R_t$ is defined as follows,
\begin{itemize}
\setlength{\itemsep}{0pt}
\setlength{\parsep}{0pt}
\setlength{\parskip}{0pt}
    \item[(i)]
    We denote  $\mF_{R_c}$ as the deep neural network class  for learning the $R_c$.
    We set the depth
    $\mH_{R_c} = \mO\big(  \log(d)\log(N)   \big)$,
    width
    $\mW_{R_c} = \mO\big(r_c d N^{1/(2\betatilde_c + 1)}\big)$,
    model size  $\mS_{R_c} = \mO\big(r_cd^2\\N^{1/(2\betatilde_c + 1)} \log(N)\log(d)  \big)$.

    \item[(ii)]
    We denote  $\mF_{R_t}$ as the deep neural network class   for learning the  $R_t$.
    We set the depth
    $\mH_{R_t} = \mO\big(  \log(d)\log(n_0)  \big)$,
    width
    $\mW_{R_t} = \mO\big(r_t d n_0^{1/(2\betatilde_t + 1)}\big)$,
    model size  $\mS_{R_t} = \mO\big(r_td^2\\n_0^{1/(2\betatilde_t + 1)} \log(n_0)\log(d)  \big)$.
\end{itemize}
\end{assumption}
We note that the possible network structures are not unique, and these network structures are not designed to be optimal due to the possible heterogeneity in smoothness among the components of the representation functions.

In the following Lemma \ref{lemma_source_excessrisk}, we give the convergence result for $\widehat R_c$ with the source datasets.
\begin{lemma}[Convergence  result of learning representation on sources]
\label{lemma_source_excessrisk}
Denote $R^*_c$ as a solution of ( \ref{Sec3_obj1}). Set the tuning parameter $\lambda_E,\lambda_Z = \mO(1)$, with  Assumption \ref{assumption_finitexy}-
    \ref{assumption_NNClass},
    we have the excess risk bound for the $\widehat R_c$,
\begin{align*}
    \begin{aligned}
	\mL_{S}(\widehat R_c)-\mL_{S}(R^*_c)
= \widetilde \mO({r_c}^{1/2}N^{-\frac{\betatilde_c}{2\betatilde_c + 1}}).
\end{aligned}
\end{align*}
\end{lemma}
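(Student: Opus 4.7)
The plan is to prove Lemma 1 via the standard empirical-risk-minimization decomposition into approximation and stochastic errors, and then to balance the two using the architecture prescribed by Assumption \ref{assumption_NNClass}(i). Let $R_c^\dagger \in \mF_{R_c}$ denote a best-in-class approximation to $R_c^*$. Since $\widehat R_c$ minimizes $\mL_{S,n}$ over $\mF_{R_c}$, the usual comparison gives
\begin{equation*}
\mL_S(\widehat R_c) - \mL_S(R_c^*) \;\le\; 2\sup_{R\in\mF_{R_c}} \bigl|\mL_{S,n}(R) - \mL_S(R)\bigr| \;+\; \bigl[\mL_S(R_c^\dagger) - \mL_S(R_c^*)\bigr].
\end{equation*}
Under Assumption \ref{assumption_finitexy} the inputs are bounded, and the Gaussian regularizer $\lambda_E\bD(R(X_s)\Vert\gamma_{r_c})$ forces any near-optimal $R$ to take values in a compact region of $\bR^{r_c}$; consequently the kernels appearing in the $U$-statistic representations of $\mV_n$ and $\bD_n$ are uniformly bounded and Lipschitz in $R$, with a Lipschitz modulus proportional to $r_c^{1/2}$ (the dimension factor from the Euclidean norms $\|R(X) - R(X')\|$).

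For the \emph{approximation term}, I invoke the approximation theorem of Suzuki for deep ReLU networks on a-Besov classes. Applying it coordinate-wise to each $R_{c,k}^* \in B_{p,q,\betatilde_{c,k}}$, and using depth, width and size as in Assumption \ref{assumption_NNClass}(i), there exists $R_c^\dagger \in \mF_{R_c}$ with $\|R_c^\dagger - R_c^*\|_\infty = \mO(N^{-\betatilde_c/(2\betatilde_c+1)})$, the rate driven by the worst coordinate $\betatilde_c = \min_k \betatilde_{c,k}$. Combining this sup-norm bound with the $r_c^{1/2}$-Lipschitz dependence of $\mL_S$ on $R$ yields an approximation bias of order $r_c^{1/2} N^{-\betatilde_c/(2\betatilde_c + 1)}$.

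For the \emph{stochastic term}, each of the three functionals in $\mL_{S,n}$ is a $U$-statistic of order at most $4$, so I will apply Hoeffding's decomposition to isolate a leading i.i.d.\ term of order $\mO(1/\sqrt{N})$ and a remainder of order $\mO(1/N)$. The leading term is then controlled by symmetrization and a Rademacher-complexity bound for $\mF_{R_c}$, which, because the kernels are bounded Lipschitz in $R$, is in turn controlled by the $L^\infty$-covering number of $\mF_{R_c}$. Using the pseudo-dimension estimate $\widetilde\mO(\mS_{R_c}\mH_{R_c})$ for ReLU networks (Bartlett--Harvey--Liaw--Mehrabian), a chaining argument delivers
\begin{equation*}
\sup_{R\in\mF_{R_c}} \bigl|\mL_{S,n}(R) - \mL_S(R)\bigr| \;=\; \widetilde{\mO}\!\left( r_c^{1/2}\sqrt{\tfrac{\mS_{R_c}\mH_{R_c}}{N}}\,\right).
\end{equation*}
Substituting the architectural sizes from Assumption \ref{assumption_NNClass}(i) gives exactly the rate $r_c^{1/2} N^{-\betatilde_c/(2\betatilde_c+1)}$, matching the approximation contribution; adding the two pieces finishes the proof.

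The main obstacle I expect is the combination of the $U$-statistic structure with the neural-network complexity: classical symmetrization is stated for i.i.d.\ sums, so I will have to carry out Hoeffding's decomposition carefully for each of the three distance-covariance and energy-distance pieces (including the one involving the categorical label $Z$ in the invariance term $\lambda_Z\mV_n(R(X_{pool}),Z)$), and argue that the non-linear remainder terms are of smaller order uniformly over $\mF_{R_c}$. A secondary technical point is tracking the $r_c$-dependence sharply: because the kernels are built from Euclidean norms, the dependence enters only as $r_c^{1/2}$ rather than $r_c$, and verifying this requires care both in the Lipschitz constant used for the approximation step and in the covering-number step. Since $\lambda_E,\lambda_Z = \mO(1)$, the regularization terms contribute at the same rate and do not alter the conclusion.
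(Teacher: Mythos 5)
Your proposal follows essentially the same route as the paper: the same ERM decomposition into stochastic and approximation errors, Suzuki's a-Besov approximation theorem applied coordinate-wise, $U$-process symmetrization combined with covering-number/VC-dimension bounds for the ReLU class, and the same balancing of the two terms through the architecture in Assumption \ref{assumption_NNClass}(i). The one technical point the paper treats differently is the unboundedness of $R_c^*$ (which is Gaussian, hence not compactly valued, so the regularizer alone does not confine it): the paper introduces a truncation $T_B R_c^*$ with $B=\mO(\sqrt{\log N})$ and shows the truncation error is $\mO(N^{-1})$, a routine fix you should fold into your boundedness/Lipschitz step.
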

\begin{remark}
The a-Besov space includes the H\"{o}lder class as a special case   \citep{suzuki2021deep}. If the components of $R_c$ are from the H\"{o}lder class with smoothness index $\beta$, then we have $\betatilde_{c,k} = \beta/d$ for $k=1,\dots,r_c$.
Under this condition, the convergence rate derived in Lemma \ref{lemma_source_excessrisk} is $\widetilde \mO({r_c}^{1/2}N^{-\frac{\beta}{2\beta + d}})$.
The convergence rate deteriorates with increasing dimension
d, which is caused by the curse of dimensionality.
\end{remark}

Finally,  we provide the bound for the excess risk of  $[\widehat R_c,\widehat R_t]$ on the target domain $\mD_0$.
\begin{theorem}[Convergence result of TESR on the target domain]
\label{theorem_TransRep}
    Denote $[R^*_c,R^*_t]$ as a solution of expression \eqref{Sec3_obj2}.
    Set the tuning parameter $\lambda_E,\lambda_Z = \mO(1)$, with  Assumption \ref{assumption_finitexy}-\ref{assumption_NNClass} and the conditions in Lemma \ref{lemma_source_excessrisk},
    we have the excess risk bound for the $[\widehat R_c,\widehat R_t]$ on the $\mD_0$,
\begin{align*}
    \begin{aligned}
\mL_{T}([\widehat R_c,\widehat R_t])-\mL_{T}([ R^*_c, R^*_t])
= \widetilde\mO\big({r_t}^{1/2}n_0^{\frac{-\betatilde_t}{2\betatilde_t + 1}}\big) +
\widetilde \mO(r_c^{1/4}N^{-\frac{\betatilde_c/2}{2\betatilde_c + 1}}).
\end{aligned}
\end{align*}
\end{theorem}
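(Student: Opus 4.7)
The plan is to split the excess risk along the two stages of the estimator via the telescoping identity
\begin{align*}
\mL_{T}([\widehat R_c,\widehat R_t])-\mL_{T}([R^*_c,R^*_t])
&= \underbrace{\mL_{T}([\widehat R_c,\widehat R_t])-\mL_{T}([\widehat R_c,R^*_t])}_{\text{(I): target-side ERM error}} \\
&\quad + \underbrace{\mL_{T}([\widehat R_c,R^*_t])-\mL_{T}([R^*_c,R^*_t])}_{\text{(II): source-side propagation}},
\end{align*}
and then to apply different tools to each piece. Since $\widehat R_c$ is built exclusively from the source data $\mD_s$, $s\ge 1$, I would condition on $\widehat R_c$ throughout the analysis of (I), making it a standard empirical-risk minimization problem over $\mF_{R_t}$ with data $\mD_0$, and handle (II) via a sensitivity argument inherited from Lemma \ref{lemma_source_excessrisk}.

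For Term (I), introduce a neural network approximant $\widetilde R_t\in\mF_{R_t}$ of $R^*_t$ and perform the usual four-term split: two symmetric stochastic deviations $\mL_T-\mL_{T,n}$ evaluated at $[\widehat R_c,\widehat R_t]$ and $[\widehat R_c,\widetilde R_t]$, a nonpositive optimization gap $\mL_{T,n}([\widehat R_c,\widehat R_t])-\mL_{T,n}([\widehat R_c,\widetilde R_t])\le 0$, and an approximation error $\mL_T([\widehat R_c,\widetilde R_t])-\mL_T([\widehat R_c,R^*_t])$. Under Assumption \ref{assumption_finitexy}, the kernels in $\widehat{\mV}_n$ and $\bD_n$ are uniformly bounded and Lipschitz in their arguments, so the stochastic terms can be controlled by a Hoeffding-style bound for $U$-statistics combined with a Rademacher/covering-number bound over $\mF_{R_t}$, yielding a fluctuation of order $\widetilde\mO((\mS_{R_t}\mH_{R_t}/n_0)^{1/2})$. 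The approximation error is handled by Suzuki's deep a-Besov approximation theorem applied coordinatewise to $R^*_t\in B_{p,q,\betatilde_t}^{r_t}$, costing $\widetilde\mO(r_t^{1/2}n_0^{-\betatilde_t/(2\betatilde_t+1)})$. Balancing the two via the architectural choices in Assumption \ref{assumption_NNClass}(ii) produces the first term $\widetilde\mO(r_t^{1/2}n_0^{-\betatilde_t/(2\betatilde_t+1)})$ in the theorem.

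For Term (II), I would exploit the Lipschitz continuity of the kernels of $\mV$ and $\bD$ (under Assumption \ref{assumption_finitexy}) to show
\[
|\mL_T([\widehat R_c,R^*_t])-\mL_T([R^*_c,R^*_t])|\le C\,\mathbb{E}\|\widehat R_c(X_0)-R^*_c(X_0)\|,
\]
so that Term (II) is controlled by the $L^1$-distance between $\widehat R_c$ and $R^*_c$ under the marginal of $X_0$ (which coincides, up to a density ratio bounded by Assumption \ref{assumption_finitexy}, with the pooled marginal under which Lemma \ref{lemma_source_excessrisk} is stated). The key additional ingredient is a curvature/quadratic-growth inequality for $\mL_S$ around $R^*_c$—of the form $\mathbb{E}\|R-R^*_c\|^2\le C(\mL_S(R)-\mL_S(R^*_c))$ modulo the orthogonal ambiguity inherent to sufficient representations—so that Lemma \ref{lemma_source_excessrisk} upgrades to $\mathbb E\|\widehat R_c-R^*_c\|_{L^2}^2=\widetilde\mO(r_c^{1/2}N^{-\betatilde_c/(2\betatilde_c+1)})$. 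Cauchy--Schwarz then converts this to the $L^1$-bound and delivers the square-root rate $\widetilde\mO(r_c^{1/4}N^{-\betatilde_c/(2(2\betatilde_c+1))})$.

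The main obstacle I expect is the curvature step inside Term (II): the functional $\mV$ is only weakly convex in the representation and is invariant under orthogonal rotations and, more generally, under any one-to-one transformation preserving sufficiency, so the quadratic-growth lower bound must be phrased modulo this identifiability ambiguity (for instance by working with an equivalence-class representative after the Gaussianization penalty $\lambda_E\bD$). A secondary but technical nuisance is that $\widehat R_c$ enters nonlinearly inside the $U$-statistic kernel of Term (I); conditioning on $\widehat R_c$ (valid because $\mD_0\indep\mD_s$ across domains in the data-generating model) isolates the target-side randomness and keeps the empirical-process analysis clean. Combining the two bounds from (I) and (II) yields the stated rate.
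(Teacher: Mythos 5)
Your proposal is correct and takes essentially the same route as the paper's proof: a telescoping decomposition into a target-side ERM term (handled by the standard four-term split, $U$-statistic symmetrization with covering-number bounds over $\mF_{R_t}$, and Suzuki's a-Besov approximation theorem) plus a source-propagation term controlled through the Lipschitz continuity of the $\mV$ and $\bD$ kernels in the representation. In particular, the curvature/quadratic-growth step you identify as the key obstacle—upgrading the excess-risk bound of Lemma \ref{lemma_source_excessrisk} to $\|\widehat R_c - R^*_c\|_{L^2} = \widetilde\mO\big(r_c^{1/4}N^{-\betatilde_c/(2(2\betatilde_c+1))}\big)$ modulo the non-uniqueness of $R^*_c$—is precisely the argument the paper uses to obtain the second term of the stated rate.
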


Under the same conditions, it can be shown that estimating $R_0$ using only
the target dataset $\mD_0$  leads to the excess risk with order
$\widetilde\mO\big({r_0}^{1/2}n_0^{\frac{-\betatilde_{0}}{2\betatilde_{0} + 1}}\big),$
which can be derived using a similar approach as outlined in Lemma \ref{lemma_source_excessrisk}.
We compare the excess risk of the TESR and the excess risk of the estimated representation
that only uses the target dataset. The ratio of these two excess risks is
\begin{align}
\label{result_excess_trans_supervise}
    \frac{\text{excess risk of TESR}}{\text{excess risk of only using target data}}
    = \widetilde\mO\big(\frac{r_t^{1/2}}{r_0^{1/2}} n_0^{ \frac{\betatilde_0 - \betatilde_t}{(2\betatilde_0 + 1)(2\betatilde_t + 1)}}\big)
    + \widetilde\mO\big(
    \frac{r_c^{1/4}}{r_0^{1/2}}
    N^{\frac{-\betatilde_c/2}{2\betatilde_c + 1}}
    n_0^{\frac{\betatilde_0}{2\betatilde_0 + 1}}   \big).
\end{align}
The second term in expression \eqref{result_excess_trans_supervise} is $o(1)$ with $N \gg n_0$
in the context of transfer learning. So we can focus on the the first term.

The proposed transfer learning method outperforms % its counterpart
the method without using the source datasets if
$r_t^{1/2}r_0^{-1/2} n_0^{ \frac{\betatilde_0 - \betatilde_t}{(2\betatilde_0 + 1)(2\betatilde_t + 1)}} = o(1)$.
This holds if $\betatilde_t>\betatilde_0.$ This is true if the additional component $R_t$
is smoother than $R_0$, the sufficient representation on the target domain.
We also note that the rate of the first term in \eqref{result_excess_trans_supervise} can be improved if
$r_t/r_0 = o(1)$, which
signifies that the majority of the useful information is already included in $R_c.$
Therefore, the intrinsic dimension of $R_t$ is smaller than that of $R_0$.
Existing transfer learning methods mainly focus on reducing complexity, often defined in terms of smoothness or sparsity   \citep{cai2022transfer,tian2022transfer}.
However, these approaches often overlook the role of knowledge volume, which is typically represented by model size and the number of latent representations.

\section{Simulation Studies}
In this section, we evaluate the performance of TESR. As TESR focuses on constructing data representations rather than making direct predictions, we assess its effectiveness using the results from a predictive model that takes inputs $[\widehat R_c,\widehat R_t]$ generated by TESR. We compare its performance against the following existing methods.

\begin{itemize}
    \item  Deep Neural Network-based Classifier (DNN)   \citep{Schmidhuber_2015}: the classical end-to-end deep neural network based classifier with logistic loss, relying  only on the target dataset.

    \item  Deep Dimension Reduction (DDR)   \citep{huang2020deep}: a supervised learning representation method that learns representations based on deep neural networks and distance covariance, relying solely on the target dataset. Similar to other methods, its performance is assessed using numerical results from a predictive model that takes the learned representation as input.

    \item TransIRM: a \textbf{Trans}fer learning method  based on \textbf{I}nvariant \textbf{R}isk \textbf{M}inimization  \citep{arjovsky2019invariant}. Specifically, we learn the invariant representation from the source datasets using the invariant regularization proposed by  \citet{arjovsky2019invariant}. The invariant representation is fixed and taken as input for an  deep neural network classifier which are trained using the logistic loss on the target dataset.
        A key difference between TESR and the TransIRM is that TESR takes into account the specific information in the target domains while the TransIRM does not.

\end{itemize}

All the experiments in this section were replicated 100 times. These methods mentioned above were implemented using the same network architecture and model size to ensure fair comparisons. The dimension of the representation estimator $\widehat R_c,\widehat R_T$ are set to 32 in all the simulations. Implementation details of these methods are given in Section C of the Supplementary Materials.

\subsection{Example 1: Models with various $(n_s, n_0, d)$}
We evaluate the performance of TESR and compare it with existing methods by considering various combinations of $(n_s, n_0, d)$.
\begin{itemize}
\item Target domain: the target domain model is a binary classification model with $Y_0 \in \{0, 1\}$ and $X_0 \in \mathbb{R}^d.$ The model is
\[
P(Y_0=1|X_0=x) =\frac{\exp [g(x) ]}{1+ \exp[g(x)]},
\]
where
$g(x) = g_0(x) - \mathbb{E}(g_0(X_0))$ with $g_0(x) = 2 f_1(x_1) + f_2(x_2,x_3) + f_3(x_3,x_4) + f_4(x_4,x_5).$ Here the component functions are
$f_1(u)=(u - 0.9)^2$,
$f_2(u,v)=-uv(u-0.5)^2$,
$f_3(u,v)=\sin(-0.2\pi uv) + 1$,
$f_4(u,v)=u(|v|+1)^2$,
$f_5(u)=\sin(0.5\pi u)+1$,
$f_6(u)=2\sin(\pi u)/(2-\sin(\pi u))$.

\item Source domains: source domain models are regression models with $Y_s \in \mathbb{R}$ and $X_s \in \mathbb{R}^d,
s = 1, \ldots, 4.$
The regression functions in the source domains are given below,
\begin{itemize}
\setlength{\itemsep}{0pt}
\setlength{\parsep}{0pt}
\setlength{\parskip}{0pt}
\item $\mD_1:
y = 3 f_1(x_1) + f_2(x_2,x_3)  + f_3(x_3,x_4) + f_5(x_6) + \epsilon_1;$
\item $\mD_2:
y = 3 f_1(x_1) + f_2(x_2,x_3)  + f_3(x_3,x_4) + 2f_5(x_6) + \epsilon_2;$
\item $\mD_3:
y = 2 f_1(x_1) + 1.5f_2(x_2,x_3)  + f_3(x_3,x_4) + f_6(x_7) + \epsilon_3;$
\item $\mD_4:
y = 2 f_1(x_1) + 1.5f_2(x_2,x_3)  + f_3(x_3,x_4) + 2f_6(x_7) + \epsilon_4.$
\end{itemize}
Across all the five domains, the covariates $x \sim N(\boldsymbol{0},\Sigma)$ where $\Sigma_{i,j} = 0.2^{|i-j|}$ for $i,j=1,\dots,d$.
For $s=1,2,3,4$, $\epsilon_s$  are independently drawn from $N(0,0.5^2)$. In Example 1,  $f_1, f_2, f_3$ are the shared components  but their coefficients vary across the five domains.
We note that  $x_5$ is the unique information for the target domain $\mD_0$ and $x_6,x_7$ is only active in the source domains  $\mD_S, s=1,2,3,4$.

\end{itemize}

\begin{figure}[H]
\centering
\includegraphics[width=0.9\textwidth]{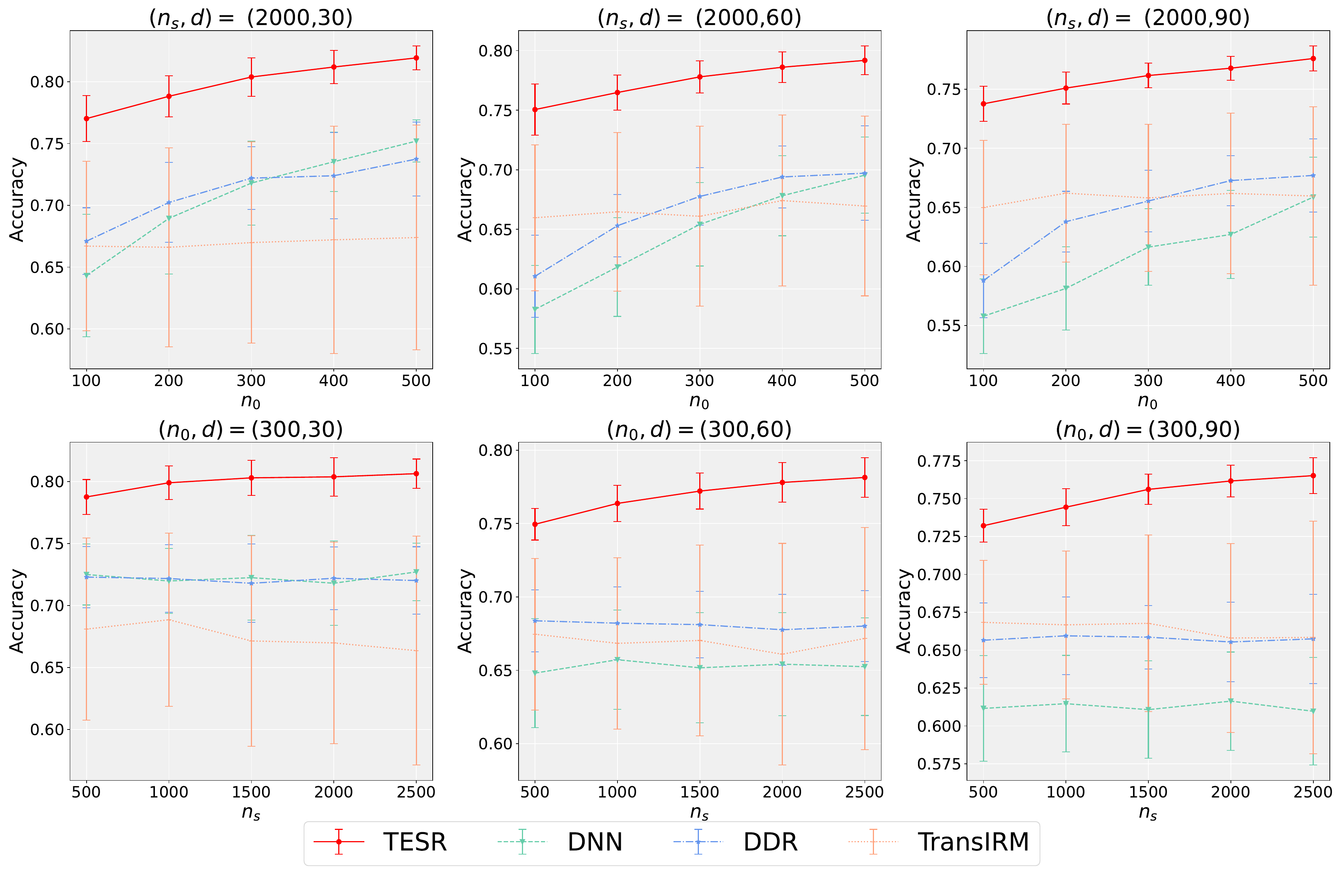}
\caption{Classification accuracy and its standard deviation (represented by the width of the error bar) for the four methods, TESR, DNN, DDR, and TransIRM, are evaluated with various values of
$(n_s,n_0,d)$ over 100 replications in Example 1.
}
\label{Fig_Simu_1}
\end{figure}

Figure \ref{Fig_Simu_1} shows the classification accuracy and its standard deviation across 100 replications. Notably, TESR consistently outperforms all its competitors. As the source sample size
increases, TESR's performance improves significantly, while TransIRM's performance fluctuates. Furthermore, by effectively capturing specific information from the target, TESR becomes significantly more efficient than TransIRM as the target sample size
increases. TransIRM surpasses DDR and DNN when the dimensionality
$d$ is large and the target data size $n_0$ is relatively small. However, it remains significantly less effective compared to TESR.

To further demonstrate the efficiency of TESR, we examine its performance across different  designs of dimensions for the representation functions $[\widehat R_c,\widehat R_t]$  in Table \ref{Table_simu_dimension_change}. TESR maintains stable performance with varying $r^*$ where $r^*=\mathop{dim}(\widehat R_c) = \mathop{dim}(\widehat R_t)$. TESR consistently outperforms all competitors with higher accuracy and smaller standard deviation.

\begin{table}[!htbp]
\centering
\renewcommand{\tabcolsep}{0.6pc}\renewcommand{\arraystretch}{0.5}
    \centering
    \caption{The  classification accurary  and its standard deviation with $(n_s,n_0,d)=(2000,300,60)$ and different  $r^*$, dimensions of the representation estimator modules in Example 1.}
\begin{tabular}{lllll}
 %   \hline
\toprule
    $r^*$ & TESR  & DNN & DDR & TransIRM   \\
\midrule
8 & 0.782 (0.015)& 0.661 (0.036) & 0.676 (0.021) & 0.602(0.092)   \\
16 & 0.783 (0.015) & 0.664 (0.032) & 0.673 (0.023) & 0.639(0.086)   \\
32 & 0.784 (0.014)  & 0.652 (0.036)& 0.677 (0.019)  &  0.653(0.075)  \\
64 & 0.783 (0.013) & 0.657 (0.033)& 0.673 (0.027)  & 0.652(0.072)  \\
 %   \hline
 \bottomrule
    \end{tabular}
    \label{Table_simu_dimension_change}
\end{table}

Additionally, we also consider a case where the sources and target are all regression problems in  Example S.1 of  the Supplementary Materials.

\subsection{Example 2: Models with two target tasks}

In this section, we examine the performance of the proposed method on two independent target tasks, showing that its effectiveness is independent of regression function similarity.
Specifically, we generate  4 sources and 2 target datasets.

\begin{itemize}
\item Target domains: there are two target domains, both have a binary response variable.
The first target domain model is,
$$
P(Y_{01}=1|X_{01}=x) =\frac{\exp[g_1(x) ]}{1+ \exp[g_1(x)]},
$$
where
$g_1(x) = g_{01}(x) - \mathbb{E}(g_{01}(X_{01})),$ with
$g_{01}(x) = 2 f_1(x_1) + f_2(x_2,x_3) + f_3(x_3,x_4) + f_4(x_4,x_5).$
The second target domain model is
%task $\mD_{T_2}$
is generated from
$$
P(Y_{02}=1|X_{02}=x) =\frac{\exp[g_2(x) ]}{1+ \exp[g_2(x)]},
$$
where
$g_2(x) =g_{02}(x) - \mathbb{E}(g_{02}(X_{02}))$ with $g_{02}(x) =  2 f_1(x_8) + f_2(x_9,x_{10}) + f_3(x_{10},x_{11}) + f_4(x_{11},x_{12})$.
The two target tasks are independent and depend on  entirely different covariates.
\item Source domains: regression models with $Y_s \in \mathbb{R}$ and $X_s \in \mathbb{R}^d,
s = 1, \ldots, 4.$ The regression functions are given below,
%the source domain models are regression models with continuous responses:
\begin{itemize}
\setlength{\itemsep}{0pt}
\setlength{\parsep}{0pt}
\setlength{\parskip}{0pt}
\item $\mD_1:
y = f_1(x_1) + 2f_2(x_9, x_{10}) + 2f_3(x_{10},x_{11}) +  f_5(x_{6}) + \epsilon_1;$
\item $\mD_2:
y = f_1(x_1) + 2f_2(x_9,x_{10}) + 2f_3(x_{10},x_{11}) +  2f_5(x_{6}) + \epsilon_2;$
\item $\mD_3:
y = f_1(x_8) + 2f_2(x_2,x_{3}) + 2f_3(x_{3},x_{4}) +  2f_5(x_{13}) + \epsilon_3;$
\item $\mD_4:
y = f_1(x_8) + 2f_2(x_2,x_{3}) + 2f_3(x_{3},x_{4}) +  2f_5(x_{13}) + \epsilon_4,$
\end{itemize}
where covariates and the error terms are generated from standard Gaussian distribution and
the component functions the same as that in Example 1.
\end{itemize}

Figure \ref{Fig_Simu_2} reports the classification accuracy on tasks $T_1$, $T_2$ of the 4 methods.
TESR achieves the best accuracy on both target tasks. TESR learns the sufficient and invariant representation $R_c$ from the sources, which effectively captures generalizable features despite the high heterogeneity among the sources. However, in contrast to Example 1,
TransIRM performs consistently worse than the other methods in Example 2. One possible explanation is that  the invariant loss proposed by \citet{arjovsky2019invariant} is unable to capture robust information from these highly heterogeneous sources effectively.
In conclusion,  TESR, which incorporates representations with theoretical guarantees, can draw information from the heterogeneous sources and enhance the performance on different down-streaming tasks.

\begin{figure}[H]
\centering
\includegraphics[width=0.85\textwidth]{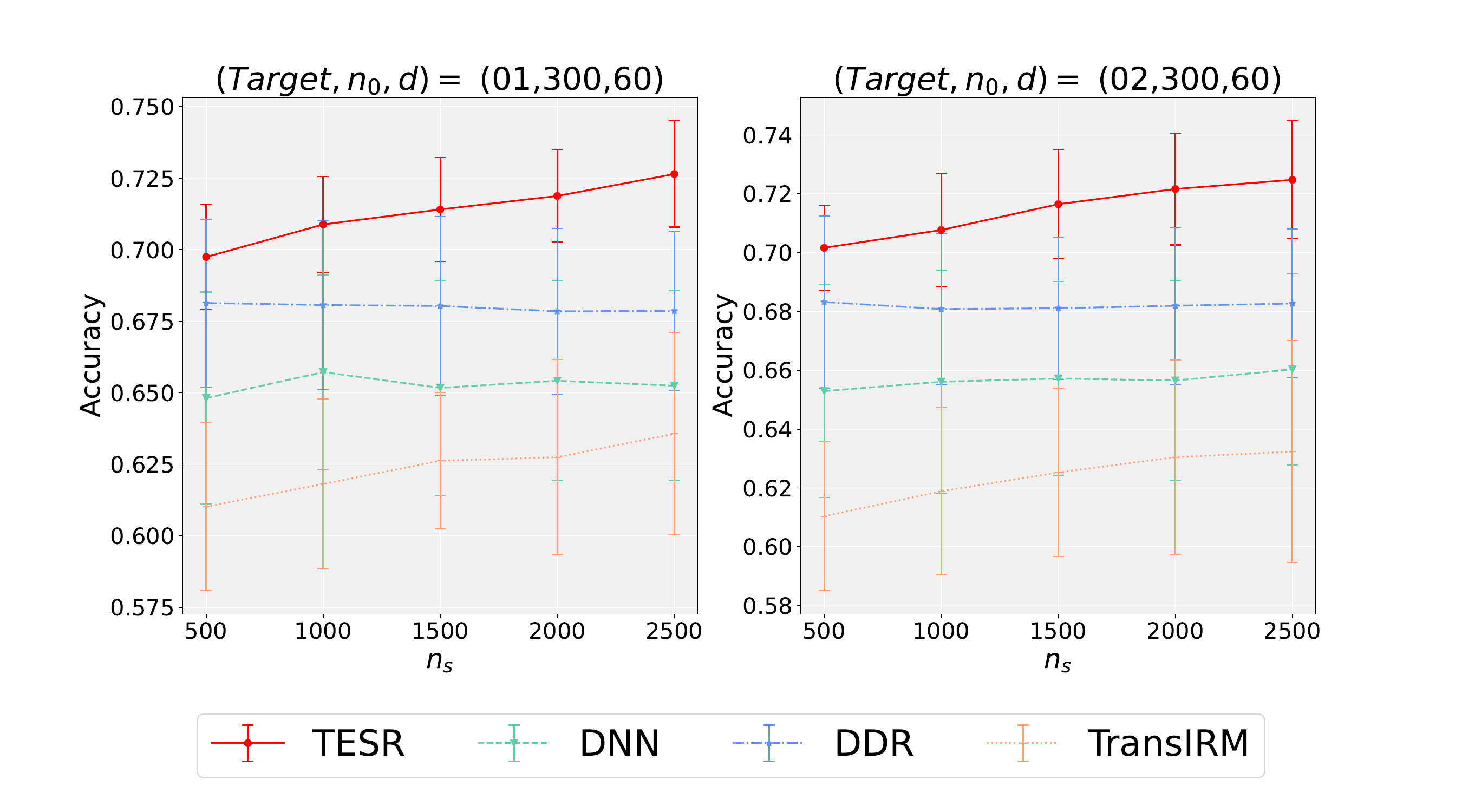}
\caption{Classification accuracy and its standard deviation (width of error bar) on the two independent target tasks in Example 2.}\label{Fig_Simu_2}
\end{figure}

The proposed method can capture the heterogeneous information from sources and enhance the performance of independent target tasks, unlike the conventional transfer learning methods which depend on high degree of similarity on regression function.
Different downstream tasks benefit from the specific components of
the multi-dimensional sufficient representation.
Furthermore, the highly structured features preserved in these representation functions provide valuable insights into the success of pre-trained models across diverse downstream tasks.

\subsection{Example 3: Models with heterogeneity between source and target domains}
 In Example 3, we examine how the performance of TESR is affected by increasing heterogeneity between the source domains and the target domain.

\begin{itemize}
\item Target domain: the target domain model is a binary classification model, which takes the form:
\[
P(Y_0=1|X_0=x) =\frac{\exp [g(x) ]}{1+ \exp[g(x)]},
\]
where
$g(x) = g_0(x) - \mathbb{E}(g_0(X_0))$ with $g_0(x)=2 f_1(x_1) + f_2(x_2,x_3) + f_3(x_3,x_4) + f_4(x_4,x_5).$

\item Source domains: source domains models are regression models:
\begin{align*}
% \mD_0:& g(x) = 2 f_1(x_1) + f_2(x_2,x_3) + f_3(x_3,x_4) + f_4(x_4,x_5);\\
\mD_s:\  y = 2\gamma_{s,1} f_1(x_1) + \gamma_{s,2}\Big[ f_2(x_2,x_3) + f_3(x_3,x_4)\Big] + 2f_5(x_6) + \epsilon_s, \ s = 1,\ldots, 8,
\end{align*}
where $\bx$ is drawn from a standard Gaussian distribution, and the component functions
$f_1, \ldots, f_5$ and error terms are the same as those in Example 1.
The coefficients $(\gamma_{s,1},\gamma_{s,2})$  control the differences
between the functions in $\mD_s$ and $\mD_0.$ When $\gamma_{s,1}=\gamma_{s,2}=1,$
the shared components of target model and source model functions are identical. Any deviation from these values introduces
heterogeneity between target and source models.
\end{itemize}

We consider two types of departures  with coefficients $(\gamma_{s,1},\gamma_{s,2})$.
\begin{itemize}
\setlength{\itemsep}{0pt}
\setlength{\parsep}{0pt}
\setlength{\parskip}{0pt}
    \item For the source index $s=1,\dots, 6$,
    \begin{itemize}
    \setlength{\itemsep}{0pt}
\setlength{\parsep}{0pt}
\setlength{\parskip}{0pt}
        \item Type I (\textbf{$\bf{L_1}$ distance}):
    $\Big(\gamma_{s,1},\gamma_{s,2}\Big) = \Big(1+0.5s,1+0.5s\Big)$;
\item Type II (\textbf{cosine distance}):
$\Big(\gamma_{s,1},\gamma_{s,2}\Big) = \Big(\cos(s\pi/3) - \sin(s\pi/3), \cos(s\pi/3) + \sin(s\pi/3)\Big)$;
    \end{itemize}
\item For the source index $s=7,8$, in both Type I and II: $\Big(\gamma_{s,1},\gamma_{s,2}\Big)= (0,0).$
\end{itemize}
Under the Type I departure, the $L_1$ distance among the regression functions from $\mD_s$ and $\mD_0$ increases with $s$ \citep{cai2022transfer}. Meanwhile, under the Type II departure, the cosine distance  changes with $s$ \citep{gu2022robust}. Details of the  $L_1$  and  cosine distances between  the regression functions  from $\mD_s$ and $\mD_0$ are reported in Section C of Supplementary Materials.
In short, the departure on regression functions of $\mD_s$ from $\mD_0$ increases with  $s$.
Notably, the datasets $\mD_7$ and  $\mD_8$
are entirely redundant and offer no benefit for the target.

\begin{figure}[H]
\centering
\includegraphics[width=0.85\textwidth]{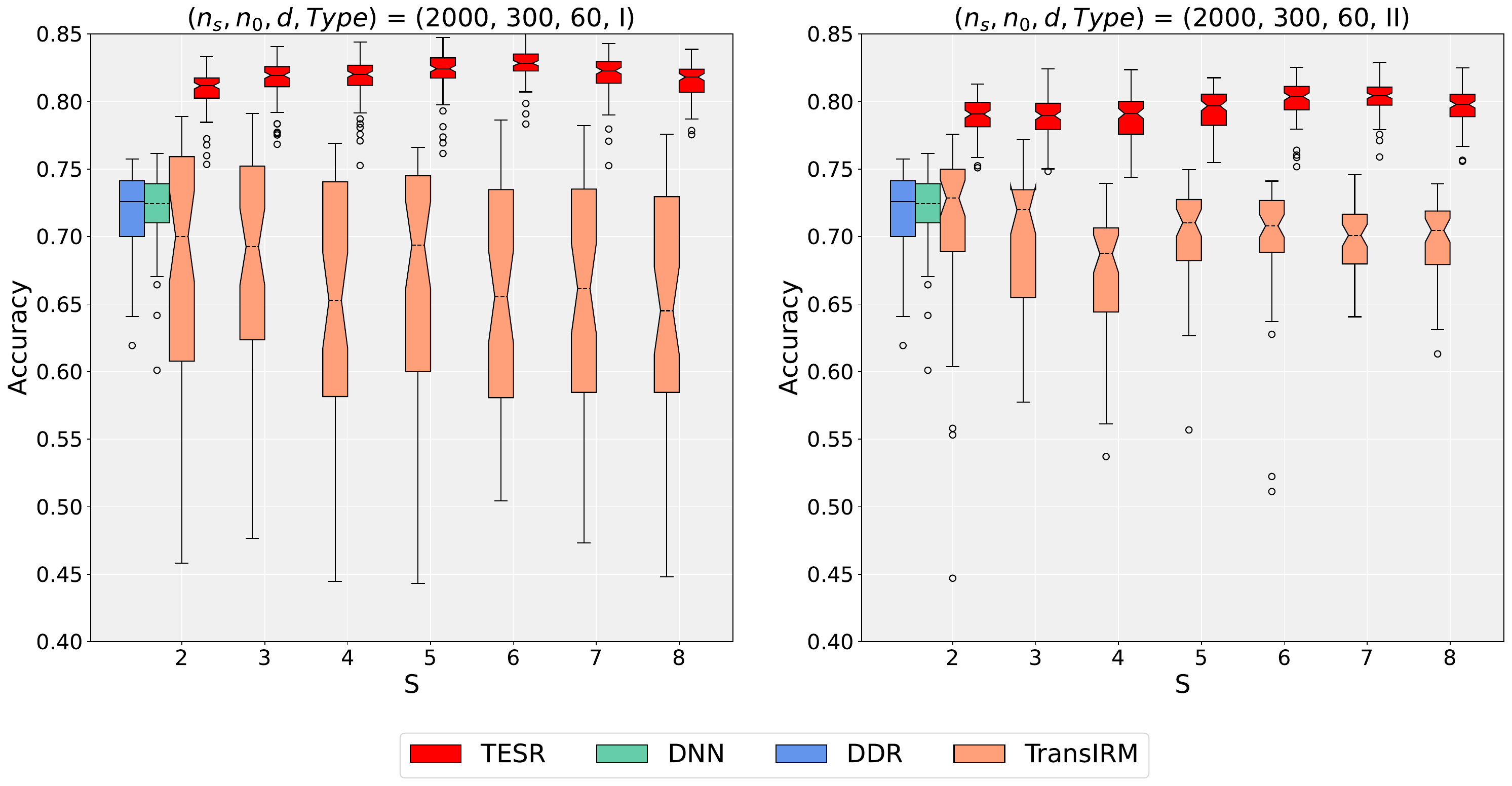}
\caption{The box plot illustrates classification accuracy over 100 replications on the target dataset  in Example 3, under $L_1$  (left panel) and cosine distance departure (right panel) conditions, where
source datasets are sequentially added into the modeling process. TESR (notched box with solid line) outperforms TransIRM (notched box with solid line), DDR (rectangular box with solid line), and DNN (rectangular box with dashed line). DDR and DNN are only presented once, as they do not utilize the source information.
}\label{Fig_Simu_3}
\end{figure}

In Example 3, we sequentially incorporate source datasets into the model and evaluate the performance of four methods. Figure \ref{Fig_Simu_3} displays the classification accuracy over 100 replications. TESR consistently outperforms the other methods in both scenarios, with its performance improving as additional sources ($s=2,\ldots, 6$) are included. However, as more sources are added, the heterogeneity among them increases, making it more challenging for TransIRM to capture invariant features. Despite this, TESR maintains satisfactory performance and proves particularly advantageous for analyzing large-scale datasets, where eliminating redundant sources using complex source selection algorithms can be computationally intensive \citep{cai2022transfer,li2022transfer}.

We take a closer look at the Type I departure case.
For $s=1, \ldots, 6$,  the coefficients $(\gamma_{s,1},\gamma_{s,2})$ of the components $f_1(x_1)$ and $\Big( f_2(x_2,x_3) + f_3(x_3,x_4)\Big)$ increase with $s$.
As $s$ increases, the $L_1$ distance between the regression functions from $\mD_s$ and $\mD_0$  increases,  leading to larger discrepancy  between  their regression functions \citep{li2022transfer,tian2022transfer}.
Conventional transfer learning methods regard these sources with large distance from the target task $\mD_0$ as redundant and introduce sophisticated source selection algorithms to remove them.
However, large coefficients $(\gamma_{s,1},\gamma_{s,2})$ lead to a higher signal-to-noise ratio on sources which is expected to facilitate the learning of representations.
As shown in the left panel of Figure \ref{Fig_Simu_3},  the classification accuracy of TESR improves as additional sources are included.
A similar conclusion can also be drawn from the cases under Type II departures as shown in the right panel of Figure \ref{Fig_Simu_3}, although $(\gamma_{s,1},\gamma_{s,2})$
may vary between positive and negative under Type II  departures   \citep{gu2022robust}.

In conclusion, even though the source domain regression functions with diverse coefficients
$(\gamma_{s,1},\gamma_{s,2})$ differ significantly from the target domain function for classification, they still offer valuable knowledge for the target task. TESR, by achieving knowledge transfer through representation functions, is capable of capturing valuable information from these heterogeneous sources.

\section{Real data examples}
In this section, we evaluate TESR in terms of prediction and classification performance on two datasets, comparing it with existing methods: DDN, DDR, and TransIRM. The datasets used for this evaluation include a gene expression dataset and an image dataset.

\subsection{Prediction of JAM2 gene expression}
Primary familial brain calcification (PFBC) is an infrequent, autosomal dominant neurological disorder, distinguished by the presence of bilateral calcifications within the basal ganglia and additional cerebral areas. Recent studies have identified the gene JAM2 as a novel causative gene of autosomal recessive PFBC \citep{cen2020biallelic,schottlaender2020bi}.
JAM2 encodes junctional adhesion molecule 2, which is highly expressed in neurovascular unit-related cell types, such as endothelial cells and astrocytes, and is predominantly localized on the plasma membrane. \citet{schottlaender2020bi} have illustrated that specific genetic variants result in diminished levels of JAM2 mRNA expression and an absence of the JAM2 protein in fibroblasts derived from patients, aligning with a loss-of-function mechanism.
Consequently, predicting the expression level of JAM2 in target brain tissues is of significant interest.

We utilize TESR to construct models for each tissue to predict the expression level of JAM2, using data from the Genotype-Tissue Expression (GTEx) project \citep{gtex2015}.
In the GTEx project, genes related to the central nervous system are organized into MODULE 137, which includes 13 tissues and a total of 545 genes, along with an additional 1,632 genes that are significantly enriched in the same experiments as the module's genes.
Table \ref{table:samplesizeJAM2} provides the sample sizes of the datasets across the various tissues.
These datasets are available at  \url{https://gtexportal.org/home/}.

\begin{table}[!htbp]
\renewcommand{\tabcolsep}{0.8pc}\renewcommand{\arraystretch}{0.8}
\centering
\caption{The sample size of the datasets across the tissues}
\label{table:samplesizeJAM2}
\begin{tabular}{l|l|l}
\toprule
Target task&Full name & Sample size \\
\midrule
Amygdala&Brain\_Amygdala & 152 \\
Anterior&Brain\_Anterior\_cingulate\_cortex\_BA24 & 176 \\
Caudate&Brain\_Caudate\_basal\_ganglia & 246 \\
Cerebellar&Brain\_Cerebellar\_Hemisphere & 215 \\
Cerebellum&Brain\_Cerebellum & 241 \\
Cortex &Brain\_Cortex & 255 \\
Frontal&Brain\_Frontal\_Cortex\_BA9 & 209 \\
Hippocampus&Brain\_Hippocampus & 197 \\
Hypothalamus&Brain\_Hypothalamus & 202 \\
Nucleus&Brain\_Nucleus\_accumbens\_basal\_ganglia & 246 \\
Putamen&Brain\_Putamen\_basal\_ganglia & 205 \\
Spinal&Brain\_Spinal\_cord\_cervical\_c-1 & 159 \\
Substantia&Brain\_Substantia\_nigra & 139 \\
\midrule
Average& & 203\\
\bottomrule
\end{tabular}
\end{table}

To predict the expression level of JAM2 in a specific tissue, we utilize data from other tissues as source datasets. For each target tissue dataset, we employ 5-fold cross-validation, allocating 60\% of the data for training, 20\% for evaluation, and 20\% for testing. The source datasets are divided into training and evaluation sets in a 4:1 ratio. We compare the results with existing methods through cross-validation.

\begin{figure}[H]
\centering
\includegraphics[width=0.85\textwidth,trim = 20 80 50 20,clip]{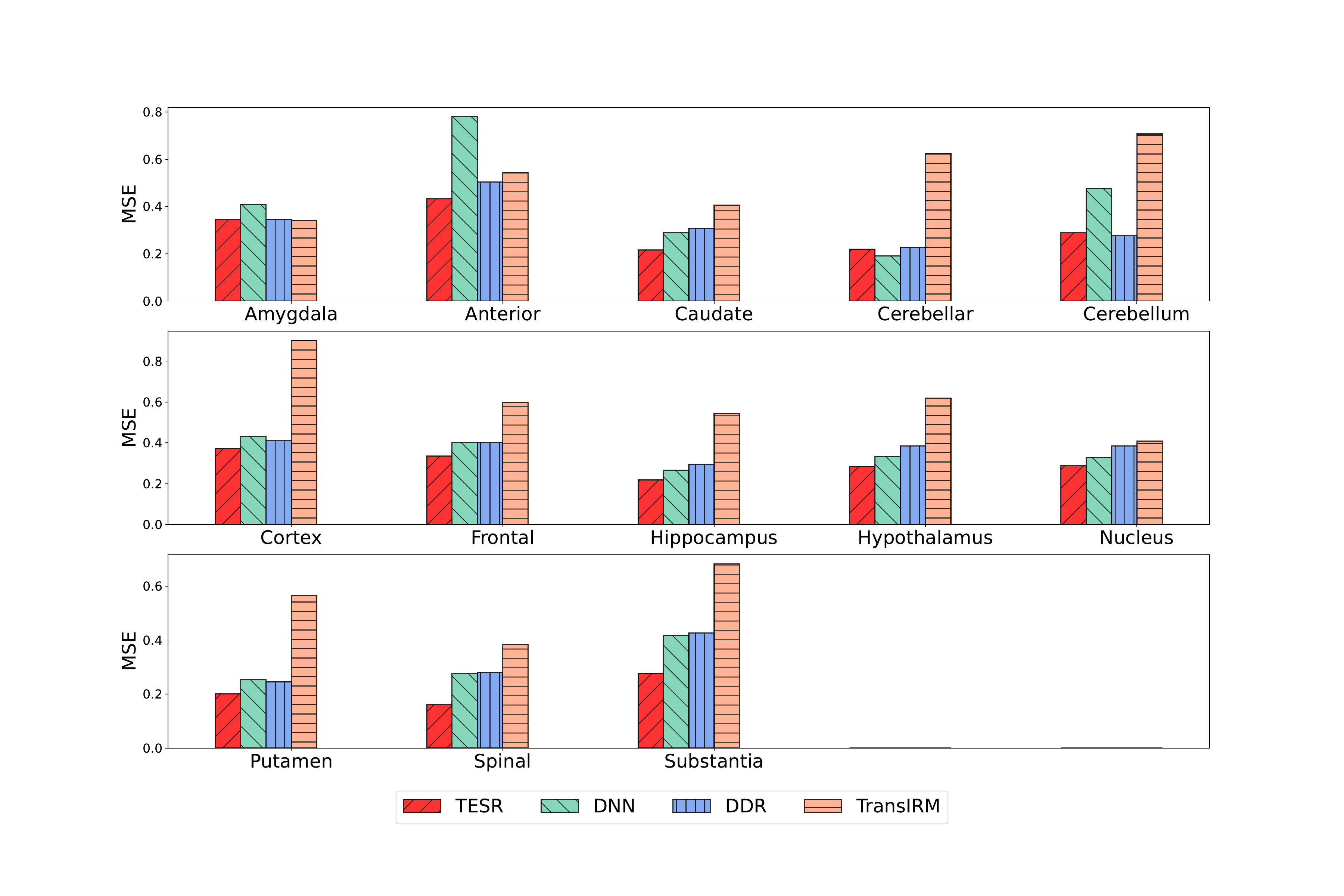}
\caption{The Mean Square Error (MSE)  averaged from 5-fold cross-validation on 13 different tissue groups. Each groups are denoted with their abbreviations.}\label{case:JAM1}
\end{figure}

The average MSE from 5-fold cross-validation is presented in Figure \ref{case:JAM1}. TESR achieves the best performance in 10 out of the 13 tissues when compared to all other methods. In the \textit{Amygdala},
\textit{Cerebellar}
and
\textit{Cerebellum} tissues, where TESR does not outperform the others, its performance remains competitive. The variations in performance across tissues may be attributed to the heterogeneity of information in the distributions of different tissues. Based on this experiment, we conclude that TESR effectively learns knowledge from relevant tasks, demonstrating its potential for broad applicability. The superiority of TESR over TransIRM also highlights the importance of adapting specific information from the target domain in transfer learning.

\subsection{Image classification using PACS dataset}
We use TESR to construct image classification models for each style within the PACS dataset,
which includes images in four distinct styles: Photo (P), Art painting (A), Cartoon (C), and Sketch (S)
 \citep{li2017deeper}.
Each style contains images across seven categories, as shown in the right panel of Figure \ref{real_data}. For this purpose, we utilize the CIFAR-10 dataset \citep{krizhevsky2009learning} as the source and each of the four styles of the PACS dataset  as the target task. The source dataset comprises 10 classes, while the target datasets consist of 7 classes. As a result, conventional transfer learning methods that require the same data structure for source and target datasets are not well-suited to this scenario.

\bigskip
\begin{figure}[!htbp]
\centering
\includegraphics[width=2.5 in, height=2.0 in]{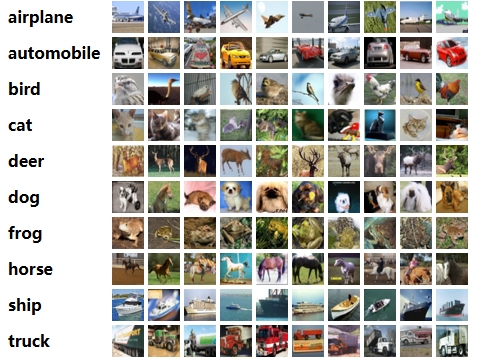}\quad
\includegraphics[width=2.5 in, height=2.0 in]{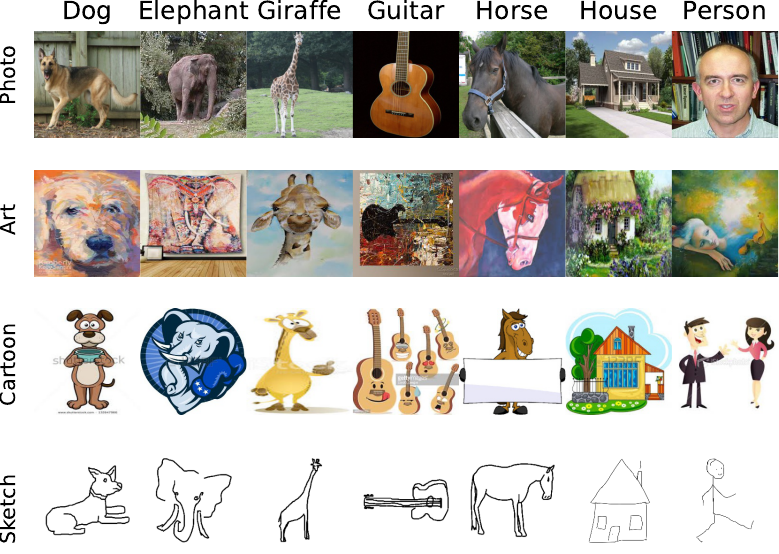}
\caption{Examples from the CIFAR-10 (Left) and PACS (Right) datasets.
We consider transferring the knowledge in CIFAR-10 to 4 domains of the PACS dataset. }\label{real_data}
\end{figure}

\begin{table}[H]
    \centering
    \caption{The accuracy of classification and its standard deviation (in brackets) on different targets.}
\begin{tabular}{lllll}
    \toprule
     & TESR & TransIRM & DDR & DNN  \\
    \midrule
    Photo & \textbf{0.700 (0.038)} & 0.533 (0.062) & 0.634 (0.035) & 0.617 (0.030)  \\
    Art & \textbf{0.488 (0.063)} & 0.401 (0.043) & 0.336 (0.024) & 0.411 (0.031) \\
    Cartoon & \textbf{0.720 (0.012)} & 0.430 (0.089) & 0.672 (0.040) & 0.669 (0.029)\\
    Sketch & 0.620 (0.047) & 0.414 (0.101) & 0.594 (0.037) & \textbf{0.742 (0.024)}\\
    \bottomrule
    \end{tabular}
    \label{Table:pacs}
\end{table}

We use 5-fold cross-validation, allocating 60\% of the target datasets for training, 20\% for evaluation, and 20\% for testing. For the source datasets, we use 80\% for training and 20\% for evaluation. The results are presented in Table \ref{Table:pacs}. TESR outperforms TransIRM and DDR across the four different target domains and achieves superior performance compared to DNN, except for the Sketch data. One possible explanation is that the Sketch data contains fewer details and has a simpler structure, which can be effectively captured by simpler models, rendering the texture information from other domains less useful.

\section{Conclusion and Discussion}
In this paper, we introduce TESR, a transfer learning framework that leverages sufficient and invariant data representations to enhance learning across different domains.  The core of our approach involves learning data representations from source tasks and enhancing them with an additional component designed to extract specific information for the target task. Unlike traditional transfer learning methods that rely heavily on critical source selection algorithms to ensure close alignment between source and target datasets, TESR allows the sufficient and invariant  representations from the sources to be only partially relevant to, or even independent of, the target task.  The augmenting component is capable of extracting relevant knowledge for the target domain. This flexibility enhances the generality and robustness of transfer learning, making it particularly effective for complex datasets where validating task similarities is infeasible or costly.

Several questions warrant further investigation. First, developing a data-driven approach based on cross-validation for selecting the dimensions of Sufficient and Invariant Representations (SIRep) for the source domains, as well as the augmenting representations from the target domain, would be beneficial. Second, exploring alternative measures for conditional independence, such as mutual information, could provide deeper insights and enhance the framework's effectiveness. Third, it would be valuable to develop alternative methods for adapting the SIRep from the source domains to the target domain. For instance, while the proposed method involves enhancing SIRep with an independent component estimated from the target data, exploring other strategies could be fruitful. We  leave these questions for future work.

%\begin{singlespace}
\bibliographystyle{chicago}
\bibliography{TESRarxiv250210}
%\end{singlespace}

\newpage
\appendix
\setcounter{section}{0}
\renewcommand{\thesection}{\Alph{section}}
\setcounter{equation}{0}
\renewcommand{\theequation}{A.\arabic{equation}}
\setcounter{page}{1}
\renewcommand{\thepage}{A.\arabic{page}}

\def\thetable{S\arabic{table}}
\def\thefigure{S\arabic{figure}}

\renewcommand{\thetable}{S\arabic{table}}
\renewcommand{\thefigure}{S\arabic{figure}}
\renewcommand{\theequation}{\thesection.\arabic{equation}}
\renewcommand{\thetheorem}{\thesection.\arabic{theorem}}
\renewcommand{\theassumption}{\thesection.\arabic{assumption}}
% \spacingset{1.9}
%\spacingset{1.65}

\begin{center}
\textbf{\LARGE Appendix}
\end{center}

The appendix contains additional
technical details, further numerical results, and implementation specifications.

\section{Additional technical details}

\subsection{Anisotropic Besov Space Function Class}
We now present the definition of the anisotropic Besov space  \citep[a-Besov]{suzuki2021deep}.
The a-Besov space is denoted  as
$B^{\bbeta}_{p,q,\betatilde}(\Omega)$ for $\bbeta = (\beta_1,\dots,\beta_d)^\top \in \bR_{++}^d$ and $\betatilde = (\sum_{k=1}^d 1/\beta_k)^{-1}$.
In this main text, we also use the abbreviation $B_{p,q,{\betatilde}}$  as the $\betatilde$ directly impact the final convergence analysis for the deep neural network estimator.

For function $f: [0,1]^d \to \bR$, we define the $r$th difference of $f$ in the direction $h\in \bR^d$ as
$$\Delta_h^r(f)(x):=\Delta_h^{r-1}(f)(x+h)-\Delta_h^{r-1}(f)(x), \Delta_h^0(f)(x):=f(x),$$
for $x,x+rh \in [0,1]^d$, otherwise we set $\Delta_h^r(f)(x) = 0$.

\begin{definition}
\label{def:Besov}
For a function $f \in L^p(\Omega)$ where $p \in (0,\infty]$,
the $r$-th modulus of smoothness of $f$ is defined by
$
w_{r,p}(f,t) = \sup_{h \in \bR^d: |h_i| \leq t_i} \|\Delta_h^r(f)\|_{p},
$
for $t = (t_1,\dots,t_d),~t_i >0$.
\end{definition}

\begin{definition}[Anisotropic Besov space $B^{\bbeta}_{p,q,\betatilde}(\Omega)$ \citep{suzuki2021deep}]
For $0 < p,q \leq \infty$, $\bbeta=(\beta_1,\dots,\beta_d)^\top \in \bR_{++}^d$,
$r:=  \max_i \lfloor \beta_i \rfloor + 1$,
let the semi-norm $|\cdot|_{B^{\bbeta}_{p,q,\betatilde}(\Omega)}$ be
\begin{align*}
\textstyle
& |f|_{B^{\bbeta}_{p,q,\betatilde}(\Omega)} :=
%\textstyle
\begin{cases}
%\left(\int_0^\infty (t^{-\alpha} w_{r,p}(f,t))^q \frac{\dd t}{t} \right)^{\frac{1}{q}} & (q < \infty),
\left( \sum\limits_{k=0}^\infty [2^k w_{r,p}(f, (2^{-k/\beta_1}, \dots, 2^{-k/\beta_d})) ]^q \right)^{1/q} & (q < \infty), \\
\sup_{k \geq 0} 2^k w_{r,p}(f, (2^{-k/\beta_1}, \dots, 2^{-k/\beta_d}))  & (q = \infty).
\end{cases}
\end{align*}
The norm of the anisotropic Besov space $B^{\bbeta}_{p,q,\betatilde}(\Omega)$ is defined by
$\|f\|_{B^{\bbeta}_{p,q,\betatilde}(\Omega)} := \|f\|_{p} + |f|_{B^{\bbeta}_{p,q,\betatilde}(\Omega)}$
and %we have
the a-Besov space is defined as
$B^{\bbeta}_{p,q,\betatilde}(\Omega) = \{f \in L^p(\Omega) : \|f\|_{B^{\bbeta}_{p,q,\betatilde}(\Omega)} < \infty\}$
.
\end{definition}
% ,

The a-Besov space can be used to avoid the curse of dimensionality and encompasses analyses of the H\"{o}lder space \citep{schmidt2020nonparametric} and Besov space \citep{suzuki2018adaptivity}, as well as the low-dimensional manifold setting as special cases.
We recall that  the smoothness of the H\"{o}lder class or  Sobolev class is defined by  a scalar $\beta$, where the smoothness property is uniform across different directions.
In contrast,  $\bbeta\in \bR^d$, the vector of smooth parameter for a-Besov class, represents the heterogeneous smoothness in each direction.
An important quantity in the convergence analysis for a-Besov space is the average smoothness,
\begin{align}
\label{def:smooth_aBesov}
     \betatilde = (\sum_{k=1}^d 1/\beta_k)^{-1}.
\end{align}
We require that $\betatilde$ is large enough, i.e., $\betatilde  > 1/p$, then functions  are continuous.
If $\beta_i$ is large, then a function in $B^{\bbeta}_{p,q,\betatilde}(\Omega)$ is smooth to the $i$th coordinate direction,
otherwise, it is non-smooth to that direction.

Again,  we denote $B_{p,q,{\betatilde}}$  as an abbreviation of the a-Besov space in the main text as the $\betatilde$ directly impact the final convergence analysis for the deep neural network estimators.

\subsection{Details of Linear Case}
In the linear case presented in Section 3.4,
 the solutions $B_c^*$ and $B_t^*$ are typically non-unique. However, all solutions lead to the same central subspace \citep{ma2013efficient,xu2022distributed}.

 Here we study the projection matrix of $B_c$ and $B_t$.
 For $B_c$,
            \begin{align*}
              \boldsymbol{P}_{B_c}  =     \bX B_c \big(B_c^\top \bX^\top  \bX B_c \big)^{-1}B_c^\top \bX^\top  =
                \bX B_c B_c^\top \bX^\top,
            \end{align*}
            where $ \big(B_c^\top \bX^\top  \bX B_c \big)^{-1} = I_{r_c}$
            as $\bE(\bX)=\boldsymbol{0}$, $\operatorname{Cov}(\bX)=\boldsymbol{I}$   and $B_c^\top B_c = I_{r_c}$.
            Similarly, we have
            % \begin{align}
            $
              \boldsymbol{P}_{B_t}  =   \bX B_t \big(B_t^\top \bX^\top  \bX B_t \big)^{-1}B_t^\top \bX^\top  =
                \bX B_t B_t^\top \bX^\top.
            $
            Then we obtain the projection matrix for $[B_c,B_t]$,
            {
            \small
            \begin{align}
            \begin{aligned}
            \boldsymbol{P}_{[B_c,B_t]}
            = &
                \bX
            \left[
             B_c,
             B_t
            \right ]
            \Big(
            \left[
            \begin{array}{c}
             B_c^\top \\
             B_t^\top
            \end{array}
            \right ]
            \bX^\top
                \bX
            \left[
             B_c,
             B_t
            \right ]
            \Big)^{-1}
            \left[
            \begin{array}{c}
             B_c^\top \\
             B_t^\top
            \end{array}
            \right ]
            \bX^\top
            \\
            =&
                \bX
            \left[
             B_c,
             B_t
            \right ]
            \left[
            \begin{array}{c}
             B_c^\top \\
             B_t^\top
            \end{array}
            \right ]
            \bX^\top
            = \bX
             B_cB_c^\top
            \bX^\top
            +
            \bX
             B_tB_t^\top
            \bX^\top\\
            =& \boldsymbol{P}_{B_c} + \boldsymbol{P}_{B_t},
            \end{aligned}
            \end{align}
            }
            where
            $B_c^\top B_t = 0$ and
             $\Big(
            \left[
            \begin{array}{c}
             B_c^\top \\
             B_t^\top
            \end{array}
            \right ]
            \bX^\top
                \bX
            \left[
             B_c,
             B_t
            \right ]
            \Big)^{-1} = \boldsymbol{I}$ .
            It is clear that the space spanned by $[B_c,B_t]$ combines the information in both $B_c$ and $B_t$.

            We consider a simple example as follows,
            $$\mD_0: Y \indep X\mid [ X_1, X_2, X_3, X_4];
            \quad
            \mD_1: Y \indep X\mid [ X_1, X_2];
            \quad
            \mD_2: Y \indep X\mid [ X_1, X_3],$$
            where
            $\bE(X)= \boldsymbol{0}$.
            $Cov(X)= \boldsymbol{I}$.
            Clearly, we have
            $\boldsymbol{P}_{B_c} = \boldsymbol{P}_{[X_1,X_2,X_3]}$,
            $\boldsymbol{P}_{B_t} = \boldsymbol{P}_{[X_4]}$
            and $\boldsymbol{P}_{[B_c,B_t]} = \boldsymbol{P}_{B_c} + \boldsymbol{P}_{B_t}$.
            Then
            we have
            $[B_c,B_t]$ is a sufficient representation for the task on $\mD_0$ where
            $B_c$ and $B_t$ draw the information of $(X_1,X_2,X_3)$ and $X_4$, respectively.

\section{Additional theoretical results and proofs}
Below, $C$  represents generic constants that may vary from line to line. The notation
$\mO$ indicates the stochastic equivalent, while $\widetilde\mO$
 denotes the stochastic equivalent up to some logarithmic factors.
\iffalse
\begin{lemma}
\label{lemma_opti_trans}
% (To Supplementary Materials.)
Let $\mu_r$ be a probability measure on $\bR^r$. If it has finite second moment and is absolutely continuous with respect to the standard Gaussian measure, denoted by $\gamma_r$, then it has a unique optimal transportation map $\mathcal{T}:\bR^r \to \bR^r$ that satisfies
$\mathcal{T}\# \mu = \gamma_r = N(0,I_r)$
% T#μ = γd ≡ N (0, Id),
where $\mathcal{T}\# \mu$ is the push-forward distribution of $\mu$ under $\mathcal{T}$.
Additionally,  $\mathcal{T}$ is injective $\mu$-almost everywhere.
\end{lemma}
 Lemma \ref{lemma_opti_trans} from the optimal transport theory gives an affirmative answer and guarantees the existence of such a representation with desirable distribution.
Thus we can always find the common representation $R_c \sim N(0,I_{r_c})$ of the source tasks and the sufficient representation $\big[R_c,R_t\big] \sim N(0,I_{(r_c + r_t)})$ for the target task.
 properties  under mild conditions.

\begin{lemma}\label{lemma:suff_to_dcov}
	Recall that the
    $R^*_c$,
    $[R^*_c,R^*_t]$
     are defined as the optimal solution of the  optimization objectives
     (7),
    (10), respectively.
   We have $R^*_c$,
    $[R^*_c,R^*_t]$ provide the sufficiency properties
    (2)
    and (3) holds, respectively.
\end{lemma}
Lemma \ref{lemma:suff_to_dcov}  validates that the solution of the optimization objective can achieve sufficiency property.
% The proof is given in Section C of Supplementary Materials.
\begin{proof}[Proof of Lemma \ref{lemma:suff_to_dcov}]
    The results  can be obtained directly from Theorem 3.2 of \citet{huang2020deep}.
\end{proof}
\fi

\subsection{Theoretical results for  learning  $R_c$}
Recall
$
 R^*_c = \argmin \ \mL_{S}(R)
$
where
    \begin{align*}
       \mL_{S}(R) = \sum\limits_{s=1}^S
     \Big\{-\mV(R(X_s),Y_s)
    + \lambda_E \bD(R(X_s)\Vert \gamma_{r_c})
    \Big\}+ \lambda_Z \mV(R(X_{pool}),Z).
    \end{align*}
The empirical estimator $\widehat R_c$ is given as
\begin{align*}
\widehat R_c = \underset{R\in\mF_{R_c}}{\argmin} \ \mL_{S,n}(R),
\end{align*}
where
$\mF_{R_c}$ is the ReLU network class for learning $R_c$ and
\begin{align*}
\small
\mL_{S,n}(R) &= -\sum\limits_{s=1}^S
     \Big\{\mV_n(R(X_s),Y_s)
    + \lambda_E \bD_n(R(X_s)||\gamma_r)
    \Big\}+ \lambda_Z \mV_n(R(X_{pool}),Z_{pool}).
\end{align*}

Under  Assumption 1-3,
set the tuning parameter $\lambda_E,\lambda_Z = \mO(1)$,
we have the excess risk bound for the $\widehat R_c$,
\begin{align*}
    \begin{aligned}
	\mL_{S}(\widehat R_c)-\mL_{S}(R^*_c)
= \widetilde \mO(\sqrt{r_c}N^{-\frac{\betatilde_c}{2\betatilde_c + 1}}).
\end{aligned}
\end{align*}

To obtain the result, we decompose the excess risk $\mL_{S}(\widehat R_c)-\mL_{S}(R^*_c)$ as follows,
\begin{align*}
\mL_{S}(\widehat R_c)-\mL_{S}(R^*_c)
=&
\mL_{S}(\widehat R_c)  - \mL_{S,n}(\widehat R_c) \nonumber\\
 &+ \mL_{S,n}(\widehat R_c)  -   \mL_{S,n}(\widetilde R_c) \nonumber \\
 &+ \mL_{S,n}(\widetilde  R_c)  -   \mL_{S}(\widetilde  R_c) \nonumber \\
&+  \mL_{S}(\widetilde  R_c)  - \mL_{S}(R^*_c),
\end{align*}
 where
  $\tilde R_c\in \mF_{R_c}$ is any candidates from the  neural network function class $\mF_{R_c}$,
 $R^*_c$ is the optimal sufficient and invariant representation.

The terms
$\mL_{S}(\widehat R_c)  - \mL_{S,n}(\widehat R_c)$ and
$\mL_{S,n}(\widetilde  R_c)  -   \mL_{S}(\widetilde  R_c)$ can be bounded by  the stochastic error
$\underset{R\in \mF_{R_c}}{\sup}\big|\mL_{S}(R)  - \mL_{S,n}(R) \big|$.
The  term $\mL_{S}(\widetilde  R_c)  - \mL_{S}(R^*_c)$ depends on the approximation power of the neural network class $\mF_{R_c}$.
It can be controlled with $\underset{\widetilde R\in\mF_{R_c}}{\inf}\big|\mL_{S}(\widetilde R_c)  - \mL_{S}(R^*_c) \big|$ following the definition of $\widetilde R_c$.
The term $\mL_{S,n}(\widehat R_c)  -   \mL_{S,n}(\widetilde R_c)\leq 0$ can be bounded by 0 in the inequality  following the definition of $\widehat R_c$.

Thus  we have
\begin{align}
	\mL_{S}(\widehat R_c)-\mL_{S}(R^*_c)
<
2\underset{R\in \mF_{R_c}}{\sup}\big|\mL_{S}(R)  - \mL_{S,n}(R) \big|
+
\underset{\widetilde R\in \mF_{R_c}}{\inf}\big|\mL_{S}(\widetilde  R)  - \mL_{S}(R^*_c) \big|.
\end{align}

\subsubsection{Analysis for the  approximation and the truncation error}
In this section, we give the  bound of
$$\underset{\widetilde  R\in \mF_{R_c}}{\inf}\big|\mL_{S}(\widetilde R)  - \mL_{S}(R^*_c) \big|.$$

As the $R^*_c$ follows the Gaussian distribution and is unbounded, we introduce
its truncated version
\begin{align}
    T_B R(X) =
    \begin{cases}
    R(x), &\text{if } |R(x)|\leq B,\\
    B ,  &\text{if } R(x)> B,\\
    -B ,  &\text{if } R(x)< -B,\\
    \end{cases}
\end{align}
where $B = \mO(\log(N)^{1/2})$.

By the definition of $\mL_{S}$,
\begin{align}
\begin{aligned}
   % \underset{R\in \mF_{R_c}}{\inf}
   \big|\mL_{S}(\widetilde R)  - \mL_{S}(R^*_c) \big|
    < &
    \sum\limits_{s=1}^S
    \big| \mV(\widetilde R(X_s),Y_s) - \mV(R^*_c(X_s),Y_s)\big| \\
    +&
    \lambda_E\sum\limits_{s=1}^S
    \big| \bD(\widetilde R(X_s)||\gamma_{r_c}) - \bD(R^*_c(X_s)||\gamma_{r_c})\big| \\
    +& \lambda_Z \big|\mV(\widetilde R(X_{pool}),Z_{pool}) - \mV(R^*_c(X_{pool}),Z_{pool})\big|.
\end{aligned}
\end{align}

We analyze these terms one by one.
For the first term
$
    \mV(\widetilde R(X_s),Y_s) - \mV(R^*_c(X_s),Y_s),
$
with  $s=1,\dots,S$.
We have
\begin{align}
\begin{aligned}
    |\mV(\widetilde R(X_s)&,Y_s) - \mV(R^*_c(X_s),Y_s)|
 \\=&
   |
   \mV(\widetilde R(X_s),Y_s)
   -\mV(T_BR^*_c(X_s),Y_s)
   +\mV(T_BR^*_c(X_s),Y_s)
   -\mV(R^*_c(X_s),Y_s)
   |
   \\<&
   |
   \mV(\widetilde R(X_s),Y_s)
   -\mV(T_BR^*_c(X_s),Y_s)|
   +
   |
   \mV(T_BR^*_c(X_s),Y_s)
   -\mV(R^*_c(X_s),Y_s)
   |.
\end{aligned}
\end{align}
Recall that
$
\mV[z, y]= \bE\Big[\|z_{1}-z_{2}\||y_{1}-y_{2}|\Big]-2 \bE\Big[\|z_{1}-z_{2}\||y_{1}-y_{3}|\Big]
+\bE\Big[\|z_{1}-z_{2}\|\Big] \bE\Big[|y_{1}-y_{2}|\Big],
$
where $(z_i,y_i), i = 1,2,3$ are i.i.d. copies of $(z,y)$ \citep{szekely2007measuring}.
We have
\begin{align}
\label{result_approx_Vxy}
\begin{aligned}
&\Big| \mV[T_BR_c^*(x), y]- \mV[\widetilde R(x), y]\Big|\\
&\leq  \Big|\bE\Big[(\|T_BR_c^*(x_1)-T_BR_c^*(x_2)\|-\|\widetilde R(x_1)-\widetilde R(x_2)\|)|y_{1}-y_{2}|\Big]\Big|\\
&+2 \Big|\bE\Big[(\|T_BR_c^*(x_1)-T_BR_c^*(x_2)\|-\|\widetilde R(x_1)-\widetilde R(x_2)\|)|y_{1}-y_{3}|\Big]\Big|\\
&+\Big|\bE\Big[\|T_BR_c^*(x_1)-T_BR_c^*(x_2)\|-\|\widetilde R(x_1)-\widetilde R(x_2)\Big] \bE\Big[\Big\|y_{1}-y_{2}\Big\|\Big]\Big|\\
& \leq 8C\bE\Big|\|T_BR_c^*(x_1)-T_BR_c^*(x_2)\|-\|\widetilde R(x_1)-\widetilde R(x_2)\|\Big|\\
&\leq  16C\bE\|T_BR_c^*(x)-\widetilde R(x)\|.
\end{aligned}
\end{align}
The first and third inequalities follow the triangle inequality.
The second inequality holds due to  the boundedness of $y$.

Similarly,
\begin{align*}
    \begin{aligned}
         |
   \mV(T_BR^*_c(X_s),Y_s)
   -\mV(R^*_c(X_s),Y_s)
   |
   < 16C \bE\|T_BR_c^*(x)-R_c^*(x)\|.
    \end{aligned}
\end{align*}
For  $ 1\leq i\leq d$, denote $R^*_{c,k}$ as the $k\text{-}th $ element of $R^*_{c}$ and $B = \mO(\log(N)^{1/2})$,
\begin{align}
\label{truncate_R_0205}
    \bE\vert R^*_{c,k} - T_{B} R^*_{c,k}\vert
    &=
    2\int_{B}^\infty \frac{x}{\sqrt{2\pi}} e^{-x^2/2}dx=\frac{-2e^{-x^2/2}}{\sqrt{2\pi}}\vert_{B}^\infty=\frac{2e^{-B^2/2}}{\sqrt{2\pi}}
    = \mO(N^{-1}).
    \end{align}

In conclusion,
\begin{align}
\begin{aligned}
    |\mV(\widetilde R(X_s),Y_s) -\mV(R^*_c(X_s),Y_s)|
= \mO\Big(\bE\|T_BR_c^*(x)-\widetilde R(x)\|\Big)
    + \mO(N^{-1}),
\end{aligned}
\end{align}
with $B = \mO(\log(N)^{1/2})$.

For the second term, it is similar that we have
\begin{align}
\label{result_approx_Vxz}
\begin{aligned}
\Big|
\mV[\widetilde R(x), Z_{pool}]
-
 \mV[R_c^*(x), Z_{pool}]
\Big|\leq  16C\bE\|\widetilde R(x)-R_c^*(x)\| + \mO(N^{-1}).
\end{aligned}
\end{align}

For the third term
$
    \big| \bD(\widetilde R(X_s)||\gamma_{r_c}) - \bD(R_c^*(X_s)||\gamma_{r_c})\big|,
$
for $s=1,\dots,S$.
Obviously we have $ \bD(R_c^*(X_s)||\gamma_{r_c}) = 0$ given $R_c^*$ follows the standard Gaussian distribution.
Recall that
$\bD(X||\gamma_{r_c}) = \bD(X,U)$ where $U\sim N(0,I_{r_c})$ is the independently generated random samples.
\begin{align}
\label{appro_D0205}
\begin{aligned}
\big|   \bD&(\widetilde R(X_s),U) - \bD(R^*(X_s),U)  \big|
 \\=&
\big|
\bD(\widetilde R(X_s),U)
- \bD(T_BR^*(X_s),U)
+ \bD(T_BR^*(X_s),U)
- \bD(R^*(X_s),U)
\big|
\\\leq &
\big|
\bD(\widetilde R(X_s),U)
- \bD(T_BR^*(X_s),U)
\big|
+
\big|\bD(T_BR^*(X_s),U)
- \bD(R^*(X_s),U)
\big|
\end{aligned}
\end{align}

For the first term in \eqref{appro_D0205},
\begin{align}
\label{result_approx_Drx1}
\begin{aligned}
    &\big|   \bD(\widetilde R(X_s),U) - \bD(T_BR^*(X_s),U)  \big| \\
    &\leq
            2\big|
             E\|\widetilde R(X) - U\| - E\| T_BR^*(X) - U\|
            \big| \\
            &+
            \big|
            E\|\widetilde R(X_1) - \widetilde R(X_1)\| - E\| T_BR^*(X_1) - T_BR^*(X_2)\|
            \big|\\
            &+
            \big|
            E\|U_1 - U_2\| - E\| U_1 - U_2\|
            \big| \\
    &\leq
    4
          E\|\widetilde R(X) -  T_BR^*(X) \|  .
\end{aligned}
\end{align}
The last inequality holds following the triangular inequality.
Similarly,
for the second term in \eqref{appro_D0205},
\begin{align}
\label{result_approx_Drx2}
\begin{aligned}
    &\big|   \bD(T_B R^*(X_s),U) - \bD(R^*(X_s),U)  \big| \leq
    4
          E\|T_B R^*(X) -  R^*(X) \|
    =\mO(N^{-1}).
\end{aligned}
\end{align}
The last equality follows result \eqref{truncate_R_0205} and  $B=\mO(\log(N)^{1/2})$.

Combining the results \eqref{result_approx_Vxy},\eqref{result_approx_Vxz},
\eqref{result_approx_Drx1},
\eqref{result_approx_Drx2},
we have that
\begin{align}
    \begin{aligned}
        \underset{\widetilde R\in \mF}{\inf}\big|\mL_{S}(\widetilde R_c)  - \mL_{S}(R^*_c) \big|
        =  \mO\Big( E\|\widetilde R(X) -  T_BR_c^*(X) \| \Big)
         + \mO(N^{-1}).
    \end{aligned}
\end{align}
Actually the first term is the approximation error for neural network class in learning functions and the second term is the truncated error.

Then we derive the bound for the term
$$ \bE\|\widetilde R(X) -  T_BR_c^*(X) \|.$$
It suffices to control the $\|\widetilde R(X) -  T_BR^*(X) \|_{L^\infty}$.
For the $i$-th element  of the $\widetilde R$ and $T_BR_c^*$, we have
 \begin{align*}
 \|T_BR^*_{c,i}-\widetilde R_{i}\|_{L^{2}(\mu_{x})}
 &= [\int (T_BR^*_{c,i}(x)-\widetilde R_{i}(x))^2 f_X(x)\mathrm{d}x]^{1/2}\\
 &\leq  \|T_B{R}^*_{c,i}-\widetilde R_{i}\|_{L^{\infty}}\int f_X(x)\mathrm{d}x\\
 &\leq C \|T_B{R}^*_{c,i}-\widetilde R_{i}\|_{L^{\infty}}.
 \end{align*}

\begin{lemma}\label{lemma_approx_aBesov}
Recall that  $R^*_c = (R^*_{c,1},\dots,R^*_{c,r_c})$ where    $R^*_{c,i}\in B_{p,q,\betatilde_{c,i}}(\Omega)$ for $i=1,\dots,r_c$.
Here exists  $\widetilde R_{c,i} \in \mF_{R_c}$ where the $\mF_{R_c}$ satisfying the Assumption 2 in the main text, we have that
\begin{align}
    \|\widetilde R_{c,i} - T_BR^*_{c,i}\|_{L^\infty}
    \leq \widetilde\mO(N^{\betatilde_c/(2\betatilde_c + 1)}),
\end{align}
where $\betatilde_{c} = \min\{\betatilde_{c,1},\dots,\betatilde_{c,r_c}\}$,
$N=\sum_{s=1}^S n_s$ and $N =\mO(n_s)$ as $S$ is finite.
\begin{proof}
This Lemma fellows directly   Proposition 2 of  \cite{suzuki2021deep}.
\end{proof}
\end{lemma}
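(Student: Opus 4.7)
The plan is to invoke the deep ReLU network approximation theory for anisotropic Besov spaces from \cite{suzuki2021deep}, applied component-by-component, and then verify that the per-component budgets aggregate into the overall network class $\mF_{R_c}$ specified in Assumption~3.

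\textbf{Step 1 (truncation preserves Besov smoothness).} I would first show that $T_B R^*_{c,i}$ remains in the anisotropic Besov class $B_{p,q,\betatilde_{c,i}}(\Omega)$ with a controlled norm. Because the truncation map $u \mapsto \max\{-B,\min\{B,u\}\}$ is $1$-Lipschitz, each $r$th-order finite difference $\Delta_h^r(T_B R^*_{c,i})(x)$ admits a pointwise bound in terms of $\Delta_h^{r'}(R^*_{c,i})(x)$ for $r' \leq r$, via a standard iterated-difference argument. Integrating yields $w_{r,p}(T_B R^*_{c,i}, t) \leq C\, w_{r,p}(R^*_{c,i}, t)$, so $T_B R^*_{c,i}$ has Besov norm controlled by that of $R^*_{c,i}$, with $B = \mO(\sqrt{\log N})$ entering only as a logarithmic factor that will be absorbed into $\widetilde\mO$.

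\textbf{Step 2 (apply Suzuki's approximation theorem component-wise).} I would then invoke Proposition~2 of \cite{suzuki2021deep}, which for any $f \in B^{\bbeta}_{p,q,\betatilde}([0,1]^d)$ of unit norm constructs a ReLU network of depth $\mO(\log N)$, width $\mO(N^{1/(2\betatilde + 1)})$, and size $\mO(N^{1/(2\betatilde + 1)} \log N)$ attaining $L^\infty$ error $\widetilde\mO(N^{-\betatilde/(2\betatilde + 1)})$. Applying this to $T_B R^*_{c,i}$ with smoothness $\betatilde_{c,i}$ and taking the worst component via $\betatilde_c = \min_i \betatilde_{c,i}$ produces subnetworks $\widetilde R_{c,i}$ with
\begin{align*}
\|\widetilde R_{c,i} - T_B R^*_{c,i}\|_{L^\infty} \leq \widetilde\mO\bigl(N^{-\betatilde_c/(2\betatilde_c + 1)}\bigr).
\end{align*}

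\textbf{Step 3 (fitting into $\mF_{R_c}$).} Lastly I would verify that parallelizing $r_c$ such subnetworks fits within the budget declared in Assumption~3. Each per-component Suzuki network requires depth $\mO(\log N)$, width $\mO(d\, N^{1/(2\betatilde_c+1)})$, and size $\mO(d\, N^{1/(2\betatilde_c+1)} \log N)$; stacking them in parallel yields total width $\mO(r_c d\, N^{1/(2\betatilde_c+1)})$ and size $\mO(r_c d^2\, N^{1/(2\betatilde_c+1)} \log N \log d)$, matching $\mW_{R_c}$ and $\mS_{R_c}$, with a small $\log d$ overhead on depth absorbed into $\mH_{R_c} = \mO(\log(d)\log N)$.

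\textbf{Main obstacle.} The delicate step is Step~1: rigorously transporting anisotropic Besov smoothness through the truncation operator. Although a $1$-Lipschitz postcomposition should not enlarge the modulus of smoothness, making this quantitative for anisotropic directional differences and tracking how the resulting Besov seminorm depends on $B = \mO(\sqrt{\log N})$ requires a careful computation that cannot be sidestepped by citing Suzuki's result directly, since Suzuki's theorem is stated for bounded targets with a fixed Besov norm. Once Step~1 is in place, Steps~2 and~3 are essentially bookkeeping.
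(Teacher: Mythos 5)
Your proposal takes essentially the same approach as the paper: the paper's entire proof is the single line that the result ``follows directly from Proposition 2 of \cite{suzuki2021deep},'' which is exactly your Step 2, and your Steps 1 and 3 merely make explicit the truncation and network-aggregation bookkeeping that the paper leaves implicit. One minor remark: the exponent in the paper's displayed bound is written as $N^{\betatilde_c/(2\betatilde_c+1)}$ (positive), which is evidently a typo for the decaying rate $N^{-\betatilde_c/(2\betatilde_c+1)}$ that you correctly state and that is used everywhere else in the appendix.
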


Then we have
\begin{align}
\label{Result_source_approx}
    \|\widetilde R_{c} - R_{c}\|_{L^\infty}  & \leq \widetilde\mO(\sqrt{r_c}N^{\betatilde_c/(2\betatilde_c + 1)}),
    \nonumber\\
    \underset{R\in \mF_{R_c}}{\inf}\big|\mL_{S}(\widetilde R_c)  - \mL_{S}(R^*_c) \big|
    &\leq \widetilde\mO(\sqrt{r_c}N^{\betatilde_c/(2\betatilde_c + 1)}).
\end{align}

\subsubsection{Analysis for the  stochastic error}
In this section, we  bound the stochastic error
$\underset{R\in \mF_{R_c}}{\sup}\big|\mL_{S,n}(R)  - \mL_{S}(R) \big|.$

We have
\begin{align}
\label{source_3errors_stochastic}
    \begin{aligned}
        \underset{R\in \mF_{R_c}}{\sup}\Big|\mL_{S,n}(R)  - \mL_{S}(R) \Big|
        \leq
        &
        \underset{R\in \mF_{R_c}}{\sup}\sum_{s=1}^S\Big|\mV_n(R(X_s),Y_s)  - \mV(R(X_s),Y_s) \Big| \\
        +& \underset{R\in \mF_{R_c}}{\sup}\sum_{s=1}^S\Big|\bD_n(R(X_s)||\gamma_{r_c})  - \bD(R(X_s)||\gamma_{r_c}) \Big| \\
        +&\underset{R\in \mF_{R_c}}{\sup}\Big|\mV_n(R(X_{pool}),Z_{pool})  - \mV(R(X_{pool}),Z_{pool}) \Big|.
    \end{aligned}
\end{align}

In the present problem,  the objective function
is the combination of loss functions with $U$-process type indexed by a class of neural networks.

First we consider the first term of \eqref{source_3errors_stochastic},
\begin{align*}
\begin{aligned}
      \underset{R\in  \mF_{R_c}}{\sup}\sum_{s=1}^S\big|\mV_n(R(X_s),Y_s) & - \mV(R(X_s),Y_s) \big| \\
    \leq  &
    \sum_{s=1}^S     \underset{R\in  \mF_{R_c}}{\sup}\big|\mV_n(R(X_s),Y_s)  - \mV(R(X_s),Y_s) \big|\\
    \le &
     CS     \underset{R\in  \mF_{R_c}}{\sup}\big|\mV_n(R(X_1),Y_1)  - \mV(R(X_1),Y_1) \big|.
\end{aligned}
\end{align*}
where $S$ is the number of sources and $C$ is some constant.
It suffice for us to bound the stochastic error
$$\underset{R\in  \mF_{R_c}}{\sup}\big|\mV_n(R(X_1),Y_1)  - \mV(R(X_1),Y_1) \big|.$$

Denote
$\forall R \in \mF_{R_c}$ where $\mF_{R_c}$ is the class of neural networks.
Let  $\tilde{O} = (R(x),y)$
denotes the random variables from the domain $s=1$ and
we omit the source subscript here for simplicity.

We denote $\tilde{O}_i = (R(x_i),y_i), i=1,...n_1$ are i.i.d copy of $\tilde{O}$.
We define centered kernel as
\begin{equation}\label{newkernel}
\begin{array}{l}
\bar{h}_{R}(\tilde{O}_1,\tilde{O}_2, \tilde{O}_3,\tilde{O}_4) =
\frac{1}{4} \sum\limits_{1 \leq i, j \leq 4, \atop i \neq j}\|R(x_{i})-R(x_{j})\| |y_{i}-y_{j}|\\
-\frac{1}{4} \sum\limits_{i=1}^{4}\left(\sum\limits_{1 \leq j \leq 4, \atop j \neq i}\left\|R(x_{i})-R(x_{j})\right\| \sum_{1 \leq j \leq 4,\atop i \neq j}|y_{i}-y_{j}|\right) \\
\quad+\frac{1}{24} \sum\limits_{1 \leq i, j \leq 4, \atop i \neq j}\left\|R(x_{i})-R(x_{j})\right\| \sum\limits_{1 \leq i, j \leq 4, \atop i \neq j}|y_{i}-y_{j}| -\mV[R(x), y].
\end{array}
\end{equation}

Then,  the  centered $U$-statistics
$\mV_n(R(X),Y)  - \mV(R(X),Y)$
can be expressed as
$$
\mathbb{U}_{n_1}(\bar{h}_R) = \frac{1}{\tbinom{n_1}{4}} \sum_{i_{1}<i_{2}<i_{3}<i_{4}} \bar{h}_{R}(\tilde{O}_{i_1},\tilde{O}_{i_2}, \tilde{O}_{i_3},\tilde{O}_{i_4}).$$

By the symmetrization randomization
(Theorem 3.5.3 in  \cite{de2012decoupling}),
we have
\begin{equation}\label{rpm}
\bE\Big[\sup_{R\in  \mF_{R_c}}|\mathbb{U}_{n_1}(\bar{h}_{R})|\Big]
\leq C \bE\Big[\sup_{R\in  \mF_{R_c}}|  \frac{1}{\tbinom{n_1}{4}} \sum_{1 \leq i_{1}<i_{2}<i_{3}<i_{4} \leq n} \epsilon_{i_1}\bar{h}_{R}(\tilde{O}_{i_1},\tilde{O}_{i_2}, \tilde{O}_{i_3},\tilde{O}_{i_4})|\Big],
\end{equation}
where, $\epsilon_{i_1}, i_1 = 1,...n_1$ are i.i.d  Rademacher variables that are also independent with $\tilde{O}_i, i = 1,\ldots, n_1.$

We then  bound the above Rademacher process with the metric entropy of  the neural network class $\mF_{R_c}$.
Recall that under  Assumption 1,
the kernel $\bar h_R$ is also bounded for $R \in \mF_{R_c}$.
$\forall {R \in \mF_{R_c}},$
we define a random empirical measure for the pair $(R,R_\delta)$,%(depends on ${O}_i, i=1,\ldots, n$)
$$
e_{n_1,1}(R, R_\delta)= \bE_{\epsilon_{i_1}, i_1 = 1,...,n_1}\Big|\frac{1}{\tbinom{n_1}{4}} \sum_{i_1<  i_2<i_3<i_4} \epsilon_{i_{1}}(\bar{h}_{R}-\bar{h}_{R_\delta})(\tilde{O}_{i_{1}}, \ldots, \tilde{O}_{i_{4}})\Big|.$$

Condition on $\{X_i,Y_i\}_{i=1,\ldots, n_1}$, let $\mN(\mF_{R_c}, e_{n_1,1}, \delta)$ be  the covering number of the neural network class $\mF_{R_c}$ with respect  to the  empirical  distance  $e_{n_1,1}$ at scale of $\delta>0$. Denote $\mF_{\delta}$ as  the  covering set of  $\mF_{R_c}$ with cardinality of $\mN(\mF_{R_c}, e_{n_1,1}, \delta))$.
Then,
\begin{align*}
&\bE_{\epsilon_{i_1}}\Big[\sup_{R\in  \mF_{R_c}}\Big|  \frac{1}{\tbinom{n_1}{4}} \sum_{i_{1}<i_{2}<i_{3}<i_{4}} \epsilon_{i_1}\bar{h}_{R}(\tilde{O}_{i_1},\tilde{O}_{i_2}, \tilde{O}_{i_3},\tilde{O}_{i_4})\Big|\Big]\\
&\leq \delta + \bE_{\epsilon_{i_1}}\Big[\sup_{R \in \mF_\delta}\Big|  \frac{1}{\tbinom{n_1}{4}} \sum_{i_{1}<i_{2}<i_{3}<i_{4} } \epsilon_{i_1}\bar{h}_{R}(\tilde{O}_{i_1},\tilde{O}_{i_2}, \tilde{O}_{i_3},\tilde{O}_{i_4})\Big|\Big]\\
&\leq \delta + C \frac{1}{\tbinom{n_1}{4}} (\log \mN(\mF_{R_c}, e_{n_1,1}, \delta))^{1/2} \max_{R \in \mF_\delta} [\sum_{i_1 = 1}^{n_1} (\sum_{i_2<i_3<i_4} \bar{h}_{R}(\tilde{O}_{i_1},\tilde{O}_{i_2}, \tilde{O}_{i_3},\tilde{O}_{i_4}))^2]^{1/2}\\
&\leq \delta + C \mB (\log \mN(\mF_{R_c}, e_{n_1,1}, \delta))^{1/2} \frac{1}{\tbinom{n_1}{4}}[\frac{n_1(n_1 !)^{2}}{((n_1-3) !)^{2}}]^{1 / 2}\\
&\leq  \delta + 2C \mB(\log \mN(\mF_{R_c}, e_{n_1,1}, \delta))^{1/2}/\sqrt{n_1}.%\\
% &\leq \delta + 2C \mB (\mathrm{VC}_{\mF} \log \frac{2e\mBn}{\delta\mathrm{VC}_{\mF}})^{1/2}/\sqrt{n}\\
% & \leq  \delta + C \mB ( \mathcal{H}\mathcal{S}\log \mathcal{S} \log \frac{\mBn}{\delta \mathcal{D}\mathcal{S}\log \mathcal{S}})^{1/2}/\sqrt{n}.
\end{align*}
Where $\mB$ is the upper bound of the weights in the neural network class.
The third inequality follows the Lemma 8.8 of  \citet{huang2020deep}.
Thus the stochastic error can be bounded with the sample size $n_1$, the pre-given covering scale $\delta$ and the corresponding covering number $\mN(\mF_{R_c}, e_{n_1,1}, \delta)$.

We mention that here we learn $R_c$ with dimension $r_c$.
In the Assumption 3, we consider that
the width of the network $\mF_{r_c}:\bR^d\to  \bR^{r_c}$ is linear with $r_c$, ensuring the expression power for  learning  the $R$ of dimension $r_c$.
We have  the result that
$
\mN(\mF_{R_c},e_{n,1},\delta)) <  \mN^{r_c}(\mF_1,e_{n,1},\delta)),$
where $\mF_1$ is the neural network class with output of dimension 1,
 the depth
    $\mH_{\mF_1} =  \mO\big(  \log(d)\log(N)   \big)$,
    width
    $\mW_{\mF_1} =\mO\big(d N^{1/(2\betatilde_c + 1)}\big)$ and
    model size  $\mS_{\mF_1}  = \mO\big(d^2N^{1/(2\betatilde_c + 1)} \log(N)\log(d)  \big)$.
 The size of the covering set of $\mF_{R_c}$ can be bounded as the product of the size of the  covering set of $\mF_1$.

With the result $\mN(\mF_1, e_{n_1,1}, \delta)< \mN(\mF_1, e_{n_1,\infty}, \delta)$
and
 and the relationship between the metric entropy and the VC-dimension of $\mF$,
we have \citep{anthony2009neural},
$$\log \mN(\mF_1,e_{n_1,\infty},\delta)) \leq \mathrm{VC}_{\mF_1} \log \frac{2en_1\mB}{\delta\mathrm{VC}_{\mF_1}}.$$

Following the result of  \citet{bartlett2019}, the $\mathrm{VC}_{\mF_1}$ can be bounded by the depth,width and the number of parameters of the ReLU network,
\begin{align}
    c \mH_{\mF_1} \mS_{\mF_1} \log \mS_{\mF_1} \leq \mathrm{VC}_{\mF_1} \leq C \mH_{\mF_1} \mS_{\mF_1} \log \mS_{\mF_1} ,
\end{align}
where $c,C$ are two different constant.

% ,
% $\mH,\mS$ are the depth and number of parameters of the ReLU neural network

Thus we have
\begin{align}
\label{result_source_VC}
\begin{aligned}
&\bE_{\epsilon_{i_1}}\Big[\sup_{R\in  \mF_{R_c}}|  \frac{1}{\tbinom{n_1}{4}} \sum_{ i_{1}<i_{2}<i_{3}<i_{4} } \epsilon_{i_1}\bar{h}_{R}(\tilde{O}_{i_1},\tilde{O}_{i_2}, \tilde{O}_{i_3},\tilde{O}_{i_4})|\Big]\\
&\leq  \delta + 2C \mB(r_c\log \mN(\mF_1, e_{n_1,1}, \delta))^{1/2}/\sqrt{n_1}\\
&\leq  \delta + C\mB ( r_c\mH_{\mF_1} \mS_{\mF_1} \log \mS_{\mF_1}  \log \frac{\mB n_1}{\delta \mH_{\mF_1} \mS_{\mF_1} \log \mS_{\mF_1} })^{1/2}/\sqrt{n_1}\\
% &\leq
% C\sqrt{r_c}\frac{n^{\frac{1}{2\betatilde + 1}} \log(n)^3}{n} + \frac{1}{n}.\\
&\leq
\widetilde\mO \big(\sqrt{r_c}n_1^{\frac{-\betatilde}{2\betatilde + 1}}\big) + \frac{1}{n_1}.
\end{aligned}
\end{align}
The last line holds with setting $\delta = 1/n_1$ and  {Assumption 3} for the DNN structure.
In conclusion,
\begin{align}
\label{Result_source_stochastic_1}
    \begin{aligned}
 &   \underset{R\in  \mF_{R_c}}{\sup}\sum_{s=1}^S\big|\mV(R(X_s),Y_s)  - \mV_n(R(X_s),Y_s) \big|  \\
    &\lesssim
        CS\underset{R\in  \mF_{R_c}}{\sup}\big|\mV(R(X_1),Y_1)  - \mV_n(R(X_1),Y_1) \big| \\
     &  \leq \widetilde\mO\big(\sqrt{r_c}n_1^{\frac{-\betatilde}{2\betatilde + 1}}\big)
     = \widetilde\mO\big(\sqrt{r_c}N^{\frac{-\betatilde}{2\betatilde + 1}}\big).
    \end{aligned}
\end{align}
The last equality holds as $S$, the number of sources,  is finite and $N = \mO(n_s)$ for $s=1,\dots,S$.

For the second term of \eqref{source_3errors_stochastic},
we have
\begin{align}
\label{Result_source_stochastic_2}
    \begin{aligned}
        \underset{R\in  \mF_{R_c}}{\sup}\big|\mV(R(X_{pool}),Z_{pool})  - \mV_n(R(X_{pool}),Z_{pool}) \big|
        \leq \widetilde\mO\big(\sqrt{r_c}N^{\frac{-\betatilde}{2\betatilde + 1}}\big).
    \end{aligned}
\end{align}
The result can be obtained similarly as that for the first term of \eqref{source_3errors_stochastic}.

For the third term of \eqref{source_3errors_stochastic}, we have
\begin{align}
\begin{aligned}
      \underset{R\in  \mF_{R_c}}{\sup}\sum_{s=1}^S\Big|\bD(R(X_s)||\gamma_{r_c})  - \bD_n(R(X_s)||\gamma_{r_c}) \Big|
    \leq  &
    \sum_{s=1}^S     \underset{R\in  \mF_{R_c}}{\sup}\Big|\bD(R(X_s)||\gamma_{r_c})  - \bD_n(R(X_s)||\gamma_{r_c}) \Big|\\
    \le &
     C S   \underset{R\in  \mF_{R_c}}{\sup} \Big|\bD(R(X_s)||\gamma_{r_c})  - \bD_n(R(X_s)||\gamma_{r_c}) \Big|.
\end{aligned}
\end{align}
It suffices to bound the term
$\underset{R\in \mF}{\sup} \big|\bD(R(X_s),U)  - \bD_n(R(X_s),U) \big|$ where the $U \sim N(0,I_{r_c})$ is drawn from the standard Gaussian distribution
and  \begin{align*}
\widetilde\bD_n(R(X),Y)  = \frac{1}{\tbinom{n}{2}} \sum_{1\leq i,j\leq n}h_e(\widetilde O_i;\widetilde O_j),
\end{align*}
where
$h_e(\widetilde O_i;\widetilde O_j) = h_e(u_1,u_2;v_1,v_2)=   \|u_1- v_2\| + \|u_2- v_1\|
-
\|u_1- u_2\| - \|v_1- v_2\|
$
\citep{gretton2012kernel}.
The $\tilde{O}_i = (R(x_i),y_i), i=1,...n_1$ is defined before the expression \eqref{newkernel}.
Thus we can also construct the following centered kernel
\begin{align*}
    \begin{aligned}
        \bar h_e(\widetilde O_i,\widetilde O_j):=&\|R(x_i)-U_i\|+\|R(x_j)-U_j\| - \|R(x_i)-R(x_j)\|-\|U_i-U_j\|\\ &- \bD(R(x),U).
    \end{aligned}
\end{align*}

Thus the centered U-statistics $\bD_n - \bD$ can be expressed as
\begin{align*}
    \begin{aligned}
        \mathbb{U}_{e,n} = \frac{1}{\tbinom{n}{2}} \sum_{1\leq i_1< i_2\leq n}\bar h_e(\widetilde O_{i_1},\widetilde O_{i_2}).
    \end{aligned}
\end{align*}
By the symmetrization randomization, we have
\begin{align*}
    \bE\Big[ \underset{R\in  \mF_{R_c}}{\sup}  \mathbb{U}_{e,n}(\bar h_e) \Big]
    \leq
    C\bE\big[ \underset{R\in  \mF_{R_c}}{\sup}  \frac{1}{\tbinom{n_1}{2}}  \sum_{i_1< i_2}
 \epsilon_{i_1}  \bar h_e(\widetilde O_{i_1},\widetilde O_{i_2}) \big],
\end{align*}
where $\epsilon_{i_1}$ is the Rademacher variables corresponding  for the decoupling.

Define
$$
\tilde e_{n,1}(R, R_\delta)= \bE_{\epsilon_{i}, i = 1,...,n}|\frac{1}{\tbinom{n}{2}} \sum_{1\leq {i_1}<  {i_2}\leq n} \epsilon_{i}(\bar{h}_{R}-\bar{h}_{R_\delta})(\tilde{O}_{i_1}, \tilde{O}_{i_2})|.$$
Condition on $\{X_i,Y_i\}_{i=1,\ldots, n_1}$, let $\mN(\mF_{R_c}, \tilde e_{n_1,1}, \delta)$ be  the covering number of the neural network class $\mF_{R_c}$ with respect  to the  empirical  distance  $\tilde  e_{n_1,1}$ at scale of $\delta>0$. Denote $\mF_{\delta,\tilde e}$ as  the  covering set of  $\mF$ with cardinality of $\mN(\mF_{R_c}, \tilde e_{n_1,1}, \delta))$.

Then,
\begin{align*}
&\bE_{\epsilon_{i_1}}\Big[\sup_{R\in  \mF_{R_c}}|  \frac{1}{\tbinom{n_1}{2}} \sum_{ i_{1}<i_{2}} \epsilon_{i_1}\bar{h}_{e}(\tilde{O}_{i_1},\tilde{O}_{i_2})|\Big]\\
&\leq \delta + \bE_{\epsilon_{i_1}}\Big[\sup_{R \in \mF_\delta}|  \frac{1}{\tbinom{n_1}{2}} \sum_{i_{1}<i_{2}} \epsilon_{i_1}\bar{h}_{R}(\tilde{O}_{i_1},\tilde{O}_{i_2})|\Big]\\
&\leq \delta + C  (\log \mN(\mF_{R_c},\tilde e_{n_1,1}, \delta))^{1/2} \frac{1}{\tbinom{n_1}{2}}\max_{R \in \mF_\delta} [\sum_{i_1 = 1}^{n_1} (\sum_{i_2} \bar{h}_{R}(\tilde{O}_{i_1},\tilde{O}_{i_2}))^2]^{1/2}\\
&\leq  \delta + 2C \mB(\log \mN(\mF_{R_c},  \tilde e_{n_1,1}, \delta))^{1/2} \frac{2}{n_1(n_1-1)} [\sum_{i=1}^{n_1} Cn_1^2]^{1/2}\\
&\leq  \delta + 2C \mB(\log \mN(\mF_{R_c}, \tilde e_{n_1,1}, \delta))^{1/2}/\sqrt{n_1}\\
&\leq  \widetilde\mO \big(\sqrt{r_c}n_1^{\frac{-\betatilde}{2\betatilde + 1}}\big) + \frac{1}{n_1}.
\end{align*}
The last inequality follows the result of \eqref{result_source_VC}.

Thus we have
\begin{align}
\label{Result_source_stochastic_3}
    \begin{aligned}
 &   \underset{R\in  \mF_{R_c}}{\sup}\sum_{s=1}^S\big|\bD(R(X_s)\|\gamma_{r})  - \bD_n(R(X_s)\|\gamma_{r_c}) \big|
    \le
      \widetilde\mO\big(\sqrt{r_c}n_1^{\frac{-\betatilde}{2\betatilde + 1}}\big)
      =
      \widetilde\mO\big(\sqrt{r_c}N^{\frac{-\betatilde}{2\betatilde + 1}}\big).
    \end{aligned}
\end{align}

Combining
\eqref{Result_source_approx},
\eqref{Result_source_stochastic_1},
\eqref{Result_source_stochastic_2},
\eqref{Result_source_stochastic_3},
we have the excess risk bound for the $\widehat R_c$ and the $R^*_c$,
\begin{align}
\label{Result_source_excessrisk}
% \label{Result_source_excessrisk}
    \begin{aligned}
	\mL_{S}(\widehat R_c)-\mL_{S}(R_c^*)
<&2\underset{R\in  \mF_{R_c}}{\sup}\big|\mL_{S}(R)  - \mL_{S,n}(R) \big| + \underset{R\in  \mF_{R_c}}{\inf}\big|\mL_{S}( R)  - \mL_{S}(R^*) \big| \\
\leq &\widetilde \mO(\sqrt{r_c}N^{-\frac{\betatilde}{2\betatilde + 1}}).
    \end{aligned}
\end{align}

We mention that the optimal solution for SIRep is non-unique but the existence of such $R_c \in \mM_{R_c}$ is ensured
where
$
\mM_{R_c}=\{
R_c:\bR^d\to\bR^{r_c},
Y_s\indep X_s \big| R(X_s), \text{ for } s=1,\dots,S,
R_c(X)\sim N(\mathbf{0},\mathbf{I}_{r_c}) \text{ and } R_c(X) \indep Z \}
$ \citep{lee2013general,YuGMDD2024}.
Following the idea of  \citet{zhu2010dimension,sheng2016sufficient},
we study  the   the distance between the $\widehat R$ and $R_c^*$ where $R_c^*$ is   one of the global optimal points such that $\widehat R_c$ converges to $R_c^*$.

Following   \citet{yang2018rsg},
given $R_c^*$ and
   the local set  $\mM_\rho(R_c^*)$ where
    $\mM_\rho(R_c^*) = \{R_c^\prime: \|R_c^*,R_c^\prime\|\leq \rho \text{,and }  R^\prime \sim N(0,1)\}$,
we have for $R_c^\prime \in \mM_\rho(R_c^*)$,
    \begin{align}
        \|R_c^\prime - R_c^*\|_2^2 \leq C | \mV(R_c^*,y) - \mV(R_c^\prime,y)|.
    \end{align}
Without loss of generality,
we prove the result considering the
case for $R: \bR^d \to \bR^r$ with $r=1$.
As $R_c^\prime \sim N(0,1)$ and
$R_c^\prime \in \mM_\rho$,
we have the following expression,
$R_c^\prime =  (1 - \rho)  R_c^* +\epsilon$
where
$\rho$ is the correlation coefficient
and
$\epsilon \sim N(0,2\rho-\rho^2)$.
First we have
\begin{align*}
\begin{aligned}
         ||R_c^\prime - R_c^*||^2_{L_2(P(X))} = (\rho R_c^* + \epsilon)^2
     &= \rho^2 \mathop{Var}(R_c^*) + \mathop{Var}(\epsilon)\\
     &= \rho^2 + 2\rho-\rho^2
     = 2\rho.
\end{aligned}
\end{align*}
Asymptotically, we have $\rho \rightarrow 0$ and thus $\|R_c^\prime - R_c^*\| = \mO(\rho^{1/2}) \rightarrow 0$.
Then,
by Theorem 3 of  \citet{szekely2009dCov},
\begin{align*}
\begin{aligned}
        \mV(R_c^\prime ,y) = \mV[(1-\rho) R_c^* + \epsilon, y]
    \leq  (1-\rho)\mV[ R_c^*, y] + \mV[\epsilon, 0]
     \leq  (1-\rho) \mV[R_c^*, y].
\end{aligned}
\end{align*}
Thus we have
$
        \mV(R_c^*,y) - \mV(R_c^\prime,y) > \rho \mV(R_c^*,y)> \rho C
$.
$\mV(R_c^*,y)>C$  is natural as the sufficient and invariance representation $R_c^*$ has prediction power for $y$ thus the distance covariance is bounded away from 0.
Thus
$
        \|R_c^\prime - R_c^*\|_2^2 \leq 2C | \mV(R_c^*,y) - \mV(R_c^\prime,y)|
$ holds.
Then we have
\begin{align}
\label{result_boundR_c}
\|\widehat R_c -  R^*_c\|
  \leq \widetilde \mO({r_c}^{1/4}n_s^{-\frac{\betatilde_c/2}{2\betatilde_c + 1}}).
\end{align}

\subsection{Theoretical results for learning  $R_t$.}
Recall that
$$
 \mL_{T}(\big[R_t,R_c\big]) =
     -\mV(\big[R_t(X_0), R_c(X_0)\big],Y_0)
    + \lambda_{E,0} \bD(R_t(X_0)||\gamma_{r_t})
    + \lambda_C \mV(R_t(X_0),R_c(X_0)),
$$
and its empirical counterpart is
$$
\mL_{T,n}(\big[ R_t, R_c]) = -\mV_n(\big[ R_t(X_0), R_c(X_0)\big],Y_0)
    + \lambda_{E,0} \bD_n( R_t(X_0)||\gamma_{r_t})
    + \lambda_C \mV_n( R_t(X_0), R_c(X_0))
,
$$
where the tuning parameters $\lambda_{E,0}, \lambda_C = \mO(1)$, $\mV_n$, $\bD_n$ are the empirical version of $\mV,\bD$, respectively.

The excess risk of $[\widehat R_t,\widehat R_c]$ can be decomposed as follows,
\begin{align}
    \begin{aligned}
    \mL_{T}([\widehat R_t,\widehat R_c]) &- \mL_{T}([R^*_t, R^*_c]) \\
    = &
    -\mV(\big[\widehat R_t(X_0),\widehat  R_c(X_0)\big],Y_0) + \mV(\big[ R^*_t(X_0),  R^*_c(X_0)\big],Y_0) \\
    &+ \lambda_C \mV(\widehat R_t(X_0),\widehat R_c(X_0)) -
     \lambda_C \mV( R^*_t(X_0), R^*_c(X_0))\\
    &+ \lambda_{E,0} \bD(\widehat  R_t(X_0)||\gamma_{r_t}) - \lambda_{E,0} \bD( R^*_t(X_0)||\gamma_{r_t})\\
    =& I
     + II
     + III.
    \end{aligned}
\end{align}

For the term $I$,
\begin{align}
\label{Problem_target_I}
    \begin{aligned}
       I =&   \mV(\big[ R^*_t(X_0),  R^*_c(X_0)\big],Y_0)- \mV(\big[\widehat R_t(X_0),\widehat  R_c(X_0)\big],Y_0)\\
         = &
           \mV(\big[ R^*_t(X_0),  R^*_c(X_0)\big],Y_0)
         -\mV(\big[\widehat R_t(X_0), R^*_c(X_0)\big],Y_0)\\
&+
                  \mV(\big[\widehat R_t(X_0),  R^*_c(X_0)\big],Y_0)
         -\mV(\big[\widehat R_t(X_0),\widehat  R_c(X_0)\big],Y_0)
    \end{aligned}
\end{align}

For the second part of \eqref{Problem_target_I},
we have,
\begin{align}
\label{result_I_term1}
\begin{aligned}
       \mV(\big[\widehat R_t(X_0),  &R^*_c(X_0)\big],Y_0)
     -\mV(\big[\widehat R_t(X_0),\widehat  R_c(X_0)\big],Y_0) \\
     = &\mO(\|[\widehat R_t,\widehat R_c] - [\widehat R_t, R^*_c]\|)
 =\mO(\|\widehat R_c -  R^*_c\|)
  =\widetilde \mO(r_c^{1/4}n_s^{-\frac{\betatilde_c/2}{2\betatilde_c + 1}}).
\end{aligned}
\end{align}
The first equality follows the result \eqref{result_approx_Vxy}.
The second equality follows the definition of the $L_2$ distance and the last equality follows the result \eqref{result_boundR_c}.

Then we consider
the first part of \eqref{Problem_target_I},
\begin{align*}
    \begin{aligned}
        \mV(\big[ R^*_t(X_0),&  R^*_c(X_0)\big],Y_0)
         -\mV(\big[\widehat R_t(X_0), R^*_c(X_0)\big],Y_0) \\
         \leq&
         \underset{\widetilde R_t\in \mF_{R_t}}{\inf}
         |\mV(\big[ R^*_t(X_0),  R^*_c(X_0)\big],Y_0)
         -\mV(\big[\widetilde R_t(X_0), R^*_c(X_0)\big],Y_0)|\\
         &+
         2\underset{ R_t\in \mF_{R_t}}{\sup}
         |\mV_n(\big[ R_t(X_0), R^*_c(X_0)\big],Y_0)
         -\mV(\big[ R_t(X_0), R^*_c(X_0)\big],Y_0)|.
    \end{aligned}
\end{align*}
We then bound the approximation error and the stochastic error, respectively.
For the approximation error, under Assumption 1-3, we have
\begin{align}
\label{result_target_ry_approx}
    \begin{aligned}
        \mV(\big[ R^*_t(X_0),&  R^*_c(X_0)\big],Y_0)
         -\mV(\big[\widetilde R_t(X_0), R^*_c(X_0)\big],Y_0) \\
         &\lesssim \|\big[ R^*_t(X_0),  R^*_c(X_0)\big]  - \big[\widetilde R_t(X_0), R^*_c(X_0)\big]\| \\
         &= \|  \widetilde R_t(X_0)  -  R^*_t(X_0)\|
         = \widetilde\mO(r_t^{1/2}n_0^{-\frac{\betatilde_t}{2\betatilde_t + 1}}).
    \end{aligned}
\end{align}
The result can be derived similar as result \eqref{Result_source_approx}.
Then for the stochastic error, we have
\begin{align}
\label{result_target_ry_stochastic}
    \begin{aligned}
\underset{ R_t\in \mF_{R_t}}{\sup}
         |\mV_n(\big[ R_t(X_0), R^*_c(X_0)\big],Y_0)
         -\mV(\big[ R_t(X_0), R^*_c(X_0)\big],Y_0)|.
    \end{aligned}
\end{align}
Similar to \eqref{Result_source_stochastic_1}, we have
$$
\underset{R_t\in \mF_{R_t}}{\sup}
         |\mV_n(\big[ R_t(X_0), R^*_c(X_0)\big],Y_0)
         -\mV(\big[ R_t(X_0), R^*_c(X_0)\big],Y_0)|
         \leq \widetilde\mO\big(r_t^{1/2}n_0^{\frac{-\betatilde_t}{2\betatilde_t + 1}}\big).$$
Combining these results
\eqref{result_I_term1},
\eqref{result_target_ry_approx} and \eqref{result_target_ry_stochastic}, we have
\begin{align}
\label{Result_target_I}
    \begin{aligned}
      I = &  \mV(\big[ R^*_t(X_0),  R^*_c(X_0)\big],Y_0)
         -\mV(\big[\widehat R_t(X_0), R^*_c(X_0)\big],Y_0) \\
=& \widetilde\mO\big(r_t^{1/2}n_0^{\frac{-\betatilde_t}{2\betatilde_t + 1}}\big)
+
\widetilde \mO(r_c^{1/4}n_s^{-\frac{\betatilde_c/2}{2\betatilde_c + 1}}).
    \end{aligned}
\end{align}

Then we consider the term
\begin{align*}
\begin{aligned}
   II =  \lambda_C \mV(\widehat R_t(X_0),\widehat R_c(X_0)) -
     \lambda_C \mV( R^*_t(X_0), R^*_c(X_0)).
\end{aligned}
\end{align*}
We emphasize that $\widehat R_c(\cdot)$ is estimated with the source datasets and thus can be regarded as pre-given regarding the target dataset.
Then we consider the following decomposition,
\begin{align}
\label{target_problem_rtrc}
\begin{aligned}
     \mV(\widehat R_t(X_0),&\widehat R_c(X_0)) -
      \mV( R^*_t(X_0), R^*_c(X_0))
      \\=&\mV(\widehat R_t(X_0),\widehat R_c(X_0)) -
      \mV(\widehat R_t(X_0), R^*_c(X_0))\\
      &+
      \mV(\widehat R_t(X_0), R^*_c(X_0)) -
      \mV( R^*_t(X_0), R^*_c(X_0)) .
\end{aligned}
\end{align}
The first term of \eqref{target_problem_rtrc} can be bounded as
\begin{align*}
    \begin{aligned}
        \mV(\widehat R_t(X_0),\widehat R_c(X_0)) -
      \mV(\widehat R_t(X_0), R^*_c(X_0)) = \widetilde\mO(\|\widehat R_C(X_0) - R^*_c(X_0)\|)  <
      \widetilde \mO(r_c^{1/4}n_s^{-\frac{\betatilde_c}{2\betatilde_c + 1}}).
    \end{aligned}
\end{align*}
Then we consider the second term of the \eqref{target_problem_rtrc},
\begin{align}
    \begin{aligned}
      \mV(\widehat R_t(X_0), R^*_c(X_0)) -  &
      \mV( R^*_t(X_0), R^*_c(X_0)) \\
      \lesssim&
         \underset{\widetilde R_t\in \mF_{R_t}}{\inf}
         \Big|\mV(\widehat R_t(X_0), R^*_c(X_0))
         -\mV( R^*_t(X_0), R^*_c(X_0))\Big|\\
         &+
         2\underset{\widetilde R_t\in \mF_{R_t}}{\sup}
         \Big|\mV_n(\widehat R_t(X_0), R^*_c(X_0))
         -\mV(\widehat R_t(X_0), R^*_c(X_0))\Big|.
    \end{aligned}
\end{align}
For the approximation error part, we have
\begin{align}
\label{res_tar_0205_1}
    \underset{\widetilde R_t\in \mF_{R_t}}{\inf}
         \Big|\mV(\widehat R_t(X_0), R^*_c(X_0))
         -\mV( R^*_t(X_0), R^*_c(X_0))\Big| = \widetilde\mO\big(r_t^{1/2}n_0^{\frac{-\betatilde_t}{2\betatilde_t + 1}}\big).
\end{align}
For the stochastic error part, we have
\begin{align}
\label{res_tar_0205_2}
    \underset{\widetilde R_t\in \mF_{R_t}}{\sup}
         \Big|\mV_n(\widehat R_t(X_0), R^*_c(X_0))
         -\mV(\widehat R_t(X_0), R^*_c(X_0))\Big| = \widetilde\mO\big(r_t^{1/2}n_0^{\frac{-\betatilde_t}{2\betatilde_t + 1}}\big).
\end{align}
Combining result \eqref{res_tar_0205_1}, \eqref{res_tar_0205_2}, we have
\begin{align}
\label{Result_target_II}
    II = \widetilde\mO\big(r_t^{1/2}n^{\frac{-\betatilde_t}{2\betatilde_t + 1}}\big) +
    \widetilde \mO(r_c^{1/4}n_s^{-\frac{\betatilde_c/2}{2\betatilde_c + 1}}).
    % \widetilde\mO\big(\sqrt{r_c}n_s^{\frac{-2\betatilde_c}{2\betatilde_c + 1}}\big).
\end{align}

Then
\begin{align}
\label{Result_target_III}
III = \lambda_{E,0}
\Big[
\mD(\widehat  R_t(X_0)||\gamma_{r_t}) -  \mD( R^*_t(X_0)||\gamma_{r_t})
\Big]= \widetilde \mO(r_t^{1/2}n_0^{-\frac{\betatilde_t}{2\betatilde_t + 1}}).
\end{align}
is obtained similarly as the result
\eqref{result_approx_Drx1},\eqref{result_approx_Drx2}
and \eqref{Result_source_stochastic_3}.

Combining
\eqref{Result_target_I},
\eqref{Result_target_II},
\eqref{Result_target_III},
we have the excess risk bound
\begin{align*}
    \mL_{T}([\widehat R_t,\widehat R_c]) &- \mL_{T}([R^*_t, R^*_c]) \nonumber\\
    =&
    I + II + III
    = \widetilde\mO\big(r_t^{1/2}n_0^{\frac{-\betatilde_t}{2\betatilde_t + 1}}\big) +
    \widetilde \mO(r_c^{1/4}n_s^{-\frac{\betatilde_c/2}{2\betatilde_c + 1}}).
    % \widetilde\mO\big(\sqrt{r_c}n_s^{\frac{-2\betatilde_c}{2\betatilde_c + 1}}\big)
\end{align*}
The proof is completed.

\section{Additional simulation results}

\subsection{Details for TransIRM}

TransIRM is based on the invariant risk minimization (IRM)  \citep{arjovsky2019invariant} which aims to learn the predictive representation $R_{so}(X)$ from source domains.

In this paper, we use IRMv1 \citep{arjovsky2019invariant}.  The objective function in source domains is
\begin{align*}
    \mL^{IRM}_{S}(w,g_c,R_{so}) = \sum_{s=1}^S \big[\mL_{pred}(w^Tg_c\circ R_{so}(X_s),Y_s)+\lambda\|\nabla_{w|w=1} \mL_{pred}(w^Tg_c\circ R_{so}(X_s),Y_s)\|_2^2\big],
\end{align*}
where $\mL_{pred}$ is the mean square loss for regression and cross-entropy loss for classification, $g_c$ is the regression function with the representation as input.
After training, we only transfer the representation $R_{so}$ to the target domain. The objective function of TransIRM in the target domain is defined as
\begin{align*}
    \mL_{pred}(g_0\circ R_{so}(X_0),Y_0),
\end{align*}
where
$R_{so}$ is frozen and $g_0$ is the prediction function taking $R_{so}$ as input.

The $R_{so}(X)$ learned with TransIRM contains useful information for predicting on the source domain but lacks sufficiency guarantees. We consider TransIRM as a competitor of the proposed method to illustrate the importance of adapting to the specific characteristics of the target domain.

\subsection{Details of implementation}

In this part we give the details of the network structure in all the numerical experiments.
We give the network structure of the 4 methods in the simulation in the Figure
\ref{fig:netStructure}-
\ref{fig:netStructureDNN}.
\begin{figure}[!htbp]
    \centering
    \includegraphics[width=1\textwidth]{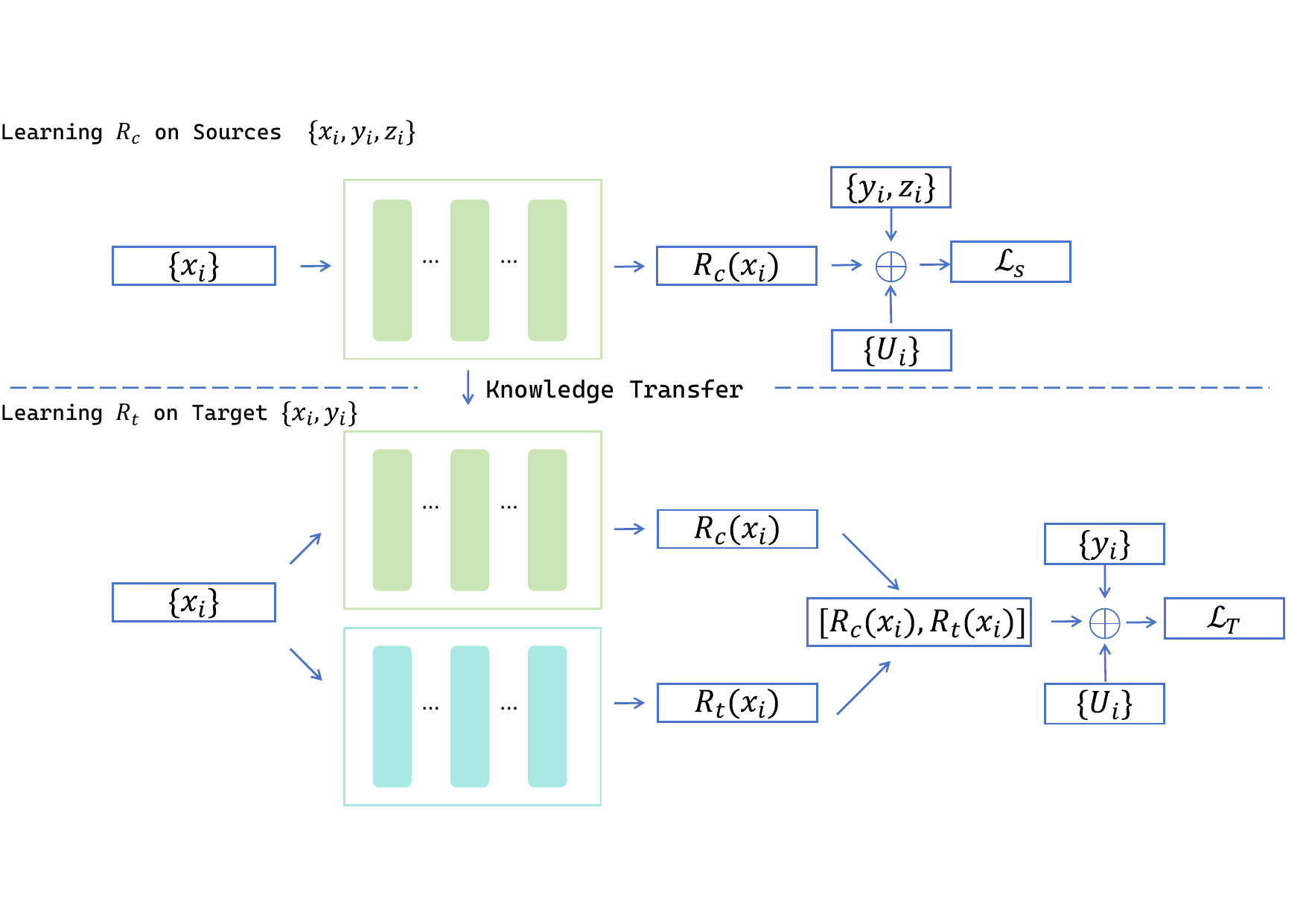}
    \caption{The Structure of the TESR framework}
    \label{fig:netStructure}
\end{figure}

\begin{figure}[!htbp]
    \centering
    \includegraphics[width=1\textwidth,trim = 0 70 10 0,clip]{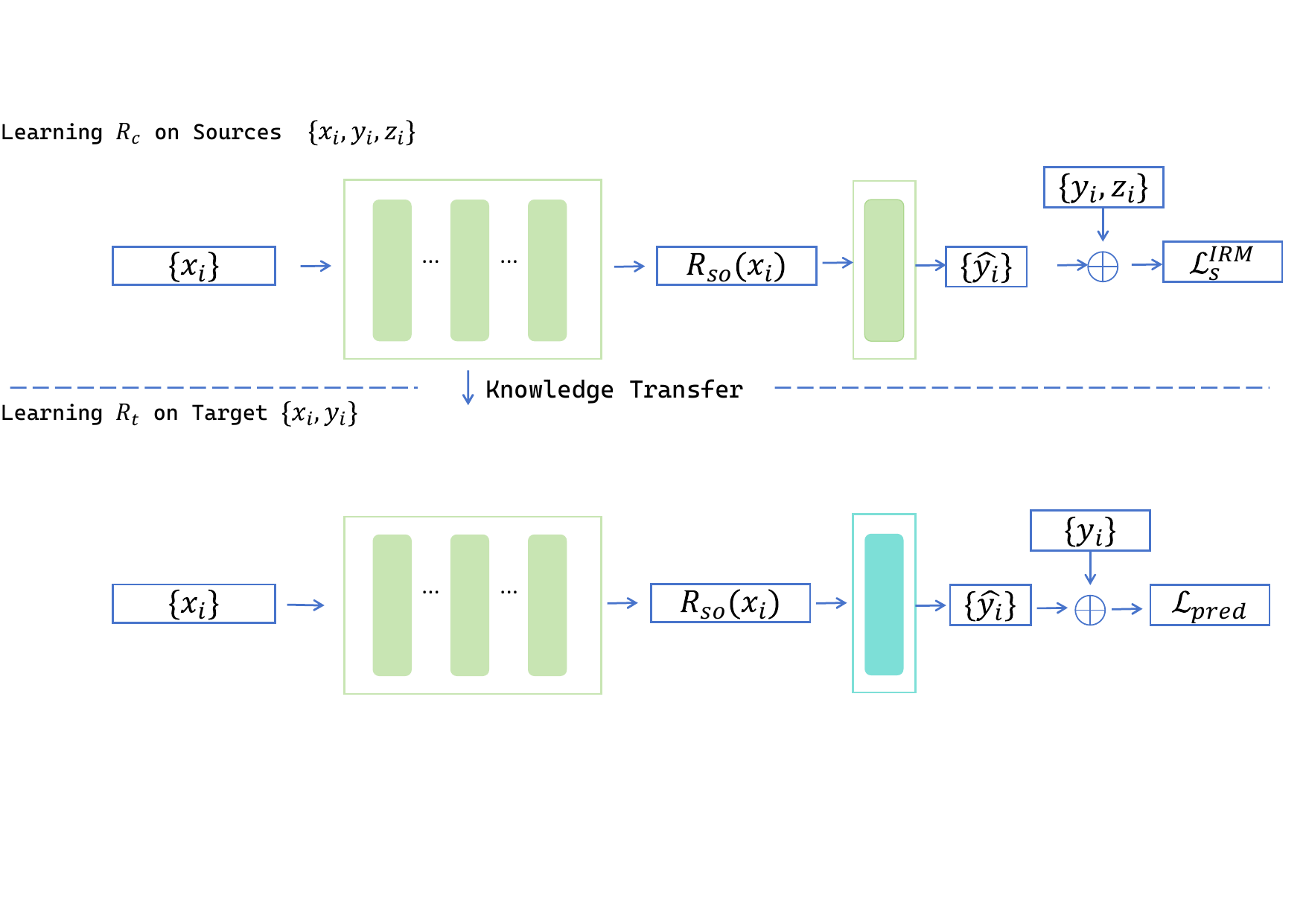}
    \caption{The Structure of the TransIRM framework. TransIRM is an end-to-end method and the prediction model taken the representation as inputs are shown in the figure.}
    \label{fig:netStructureIRM}
\end{figure}

\begin{figure}[!htbp]
    \centering
    \includegraphics[width=1\textwidth,trim = 0 150 10 0,clip]{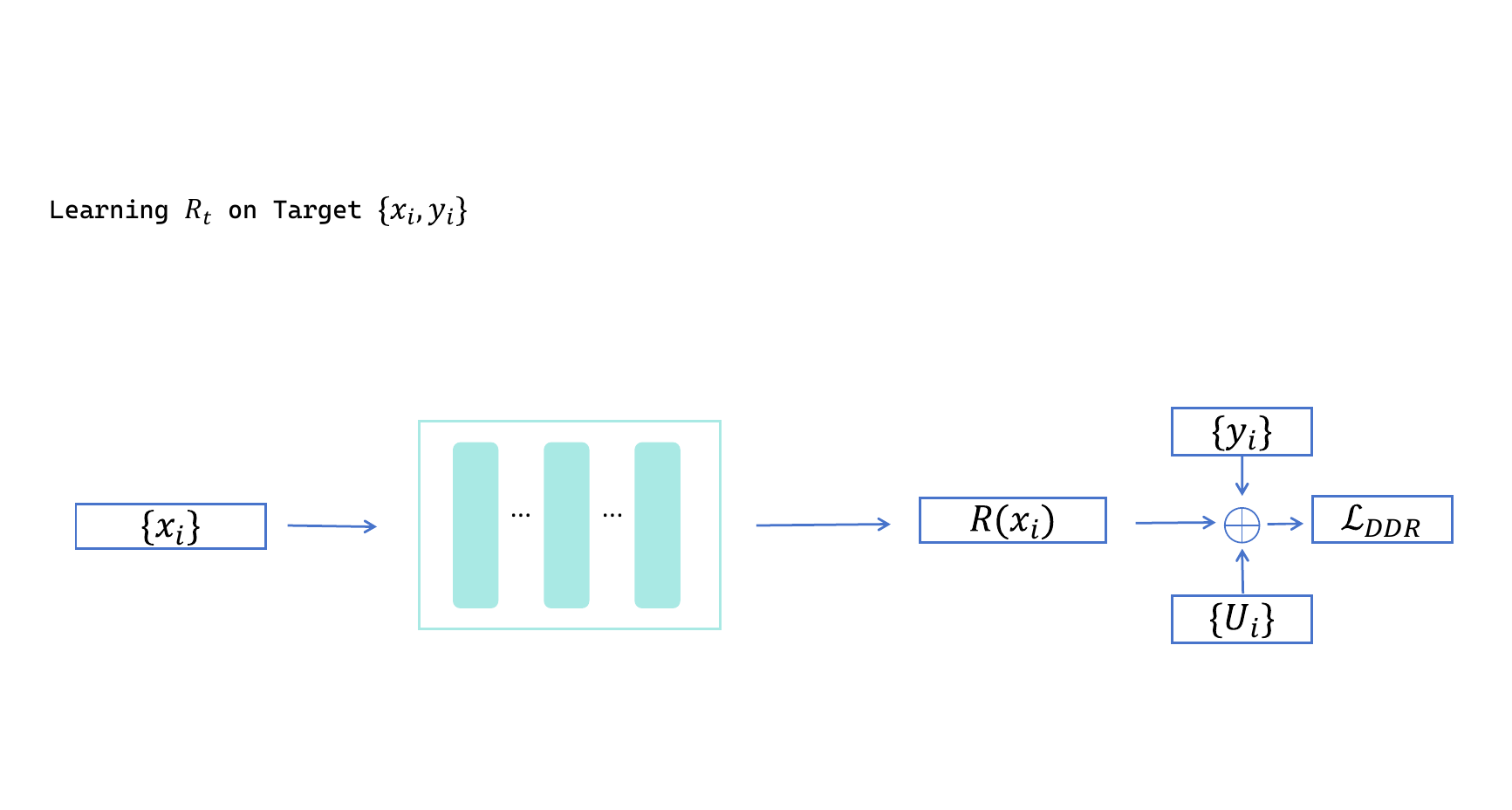}
    \caption{The Structure of the DDR framework}
    \label{fig:netStructureDDR}
\end{figure}

\begin{figure}[!htbp]
    \centering
    \includegraphics[width=\textwidth,trim = 0 150 10 0,clip]{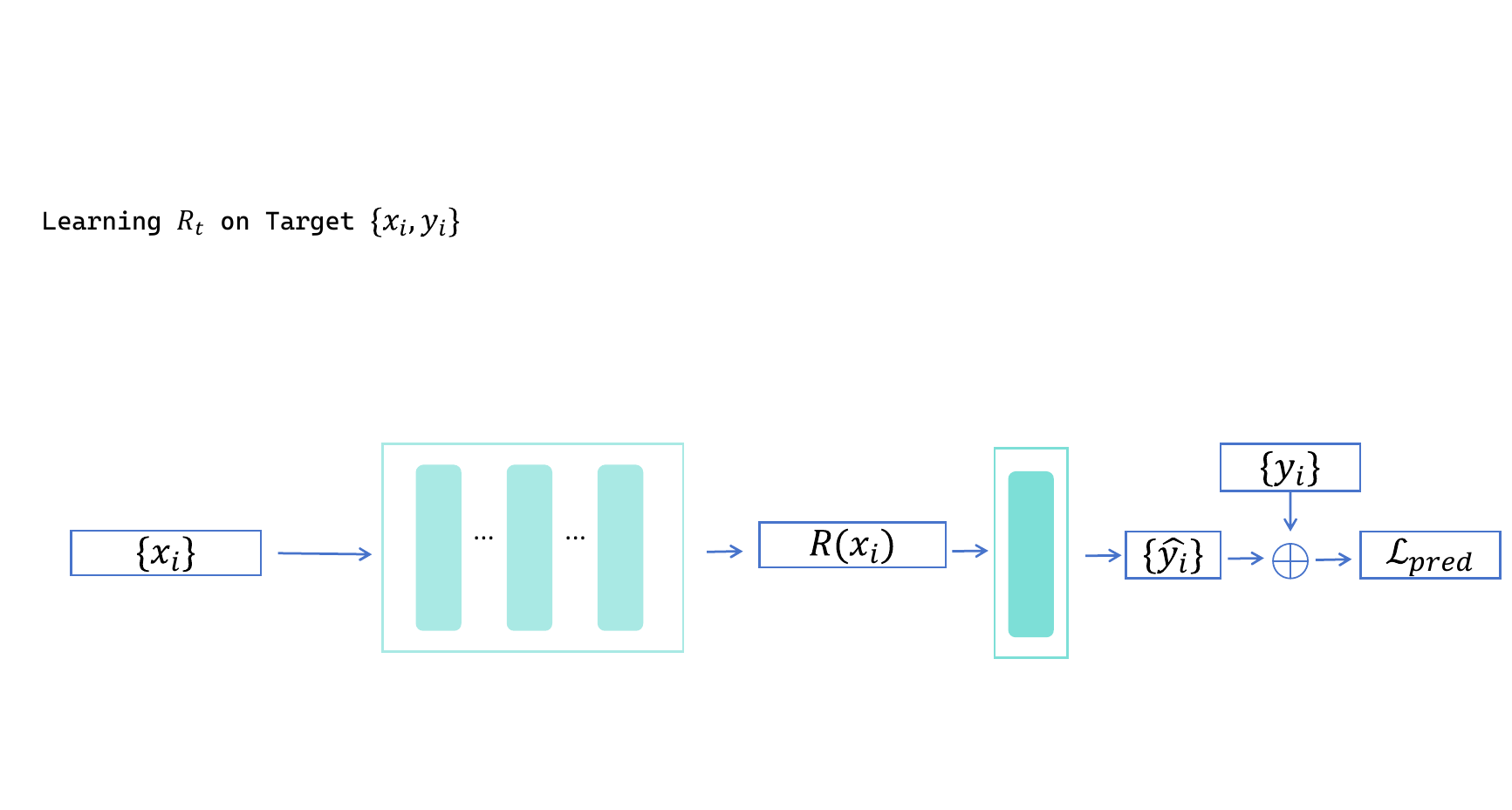}
    \caption{The Structure of the DNN framework. }
\label{fig:netStructureDNN}
\end{figure}

The  hyper-parameters for the simulated experiments are given in Table \ref{Table:parametersSimu}, where $\lambda_E$, $\lambda_Z$, $\lambda_C$, $\lambda_{E,0}$ are parameters in the objective functions, $bs$ is the mini-batch size, $ld$ is the dimension of representation and loop indicates the number of times to repeat the experiments.
In the simulation, we consider the RMSprop algorithm for the optimization for the neural network with the PyTorch package for implementation.
\begin{table}[!htbp]
\small
    \centering
    \caption{Hyper-parameters for TESR in simulated examples.}
\begin{tabular}{lccccccccc}
    \toprule
     & $\lambda_E$ & $\lambda_Z$ & $\lambda_C$ & $\lambda_{E,0}$ & $bs$ & $ld$ & Learning rate &weight decay & Epoch\\
    \midrule
   Example 1 & 0.1 & 0.1 & 0.1 & 0.1 & 64 & 32 &1e-3&1e-4& 300\\
   Example 2 & 0.1 & 0.1 & 0.1 & 0.1 & 64 & 32 &1e-3&1e-4& 300\\
   Example 3 & 0.1 & 0.1 & 0.1 & 0.1 & 64 & 32 &1e-3&1e-4& 300 \\
    \bottomrule
    \end{tabular}
    \label{Table:parametersSimu}
\end{table}

\begin{table}[!htbp]
\small
    \centering
    \caption{Hyper-parameters for TESR in Real data cases.}
\begin{tabular}{lccccccccc}
    \toprule
     & $\lambda_E$ & $\lambda_Z$ & $\lambda_C$ & $\lambda_{E,0}$ & $bs$ & $ld$ & Learning rate &weight decay & Epoch\\
    \midrule
      PACS & 0.01 & 0.01 & 1 & 0.01 & 128 & 64 &0.5$\times$1e-3&1e-4& 300\\
   JAM2 & 0.01 & 0.01 & 0.1 & 0.01 & 64 & 64 &0.5$\times$1e-3&1e-4& 200\\
   % BRCA & 0.1 & 0.01 & 0.1 & 0.01 & 128 & 64 &0.5$\times$1e-3&1e-4& 200 \\
    \bottomrule
    \end{tabular}
    \label{Table:parametersReal}
\end{table}

\begin{table}[!htbp]
\centering\footnotesize\renewcommand{\tabcolsep}{0.2pc}\renewcommand{\arraystretch}{0.8}
    \centering
    \caption{MLP architectures for $R_c$ and $R_t$ in simulation examples. The $d$ in the input size is the dimension of $X$.}
\begin{tabular}{llcc|lcc}
    \toprule
    &\multicolumn{3}{c}{$R_c$} &\multicolumn{3}{c}{$R_t$} \\
  %  \cline{2-4} \cline{5-7}
  \midrule
    Layers & Details & Input size & Output size & Details & Input size & Output size\\
    \midrule
Layer 1 & Linear & d & 64 & Linear & d& 64 \\
Activation &  LeakyReLU(0.2) & 64 & 64 &  LeakyReLU(0.2)  & 64 & 64 \\
Layer 2 & Linear & 64& 32 & Linear & 64  & 32 \\
Activation &  LeakyReLU(0.2) & 32 & 32 &  LeakyReLU(0.2) & 32 & 32 \\
Layer 3 & Linear &32&  32 & Linear & 32 &32 \\
\bottomrule
    \end{tabular}
    \label{Table:MLP }
\end{table}

\begin{table}[!htbp]
\centering\footnotesize\renewcommand{\tabcolsep}{0.2pc}\renewcommand{\arraystretch}{0.8}
    \centering
    \caption{CNN architectures for the analysis on dataset PACS. The BN denotes batch normalization layer.}
\begin{tabular}{llcc|lcc}
    \toprule
    &\multicolumn{3}{c}{$R_c$} &\multicolumn{3}{c}{$R_t$} \\
 %   \cline{2-4} \cline{5-7}
 \midrule
    Layers & Details & Input size & Output size & Details & Input size & Output size\\
    \midrule
Layer 1 & Convolution 3x3, BN & (3,32,32) & (48,32,32) & Convolution 3x3, BN & (3,32,32) & (48,32,32) \\
Activation &  LeakyReLU(0.2) & (48,32,32) & (48,32,32) &  LeakyReLU(0.2)  & (48,32,32) & (48,32,32) \\
Layer 2 & Convolution 3x3, BN & (48,32,32) & (96,32,32) & Convolution 3x3, BN & (48,32,32) & (96,32,32) \\
Activation &  LeakyReLU(0.2) & (96,32,32) & (96,32,32) &  LeakyReLU(0.2) & (96,32,32) & (96,32,32) \\
Layer 3 & MaxPooling 2x2  & (96,32,32) & (96,16,16) & MaxPooling 2x2  & (96,32,32) & (96,16,16) \\
Layer 4 & Convolution 3x3, BN & (96,16,16) & (192,16,16) & Convolution 3x3, BN & (96,16,16) & (192,16,16) \\
Activation &  LeakyReLU(0.2) & (192,16,16) & (192,16,16) &  LeakyReLU(0.2)  & (192,16,16) & (192,16,16) \\
Layer 5 & Convolution 3x3, BN & (192,16,16)6 & (256,16,16) & Convolution 3x3, BN & (192,16,16) & (256,16,16) \\
Activation &  LeakyReLU(0.2) & (256,16,16) & (256,16,16)  &  LeakyReLU(0.2) & (256,16,16) & (256,16,16) \\
Layer 6 & MaxPooling 2x2  &  (256,16,16) & (256,8,8)& MaxPooling 2x2  & (256,16,16) & (256,8,8) \\
Layer 7 & Linear & 16384 & 1024 & Linear & 16384 & 1024 \\
    \bottomrule
    \end{tabular}
    \label{Table:PACS }
\end{table}

%\clearpage
\subsection{Additional simulation results}

\begin{table}[!htbp]
\centering\footnotesize\renewcommand{\tabcolsep}{0.2pc}\renewcommand{\arraystretch}{0.8}
    \centering
    \caption{MLP architectures for $R_c$ and $R_t$ in the  real data analysis for Gene JAM2.}
\begin{tabular}{llcc|lcc}
    \toprule
    &\multicolumn{3}{c}{$R_c$} &\multicolumn{3}{c}{$R_t$} \\
  %  \cline{2-4} \cline{5-7}
    \midrule
    Layers & Details & Input size & Output size & Details & Input size & Output size\\
    \midrule
Layer 1 & Linear & 1813 & 64 & Linear & 1813 & 64 \\
Activation &  LeakyReLU(0.2) & 64 & 64 &  LeakyReLU(0.2)  & 64 & 64 \\
Layer 2 & Linear & 64& 64 & Linear & 64  & 64 \\
Activation &  LeakyReLU(0.2) & 64 & 64 &  LeakyReLU(0.2) & 64 & 64 \\
Layer 3 & Linear &64&  64 & Linear & 64 &64 \\
\bottomrule
    \end{tabular}
    \label{Table:medical reg }
\end{table}

\subsubsection{Additional simulation: knowledge transfer from regression to regression tasks}
In this part, we give  additional simulation results where the sources task $\mD_s$,$s=1,2,3,4$ and the target task $\mD_0$ are all regression tasks.
In other word, the response in all the tasks are all continuous variable.
This is a commonly considered scenario in conventional transfer learning methods.

In this example, the response on the target domain $\mD_0$  is continuous and the $L_2$ loss is considered for the prediction modules of all four methods. This is different from the main text, where the logistic loss is applied to estimate the binary responses in the target domains.

{\noindent\bf Example S.1}
 We generate   4 sources and 1 target
 datasets with the following model:
\begin{itemize}
    \item $\mD_0:
y =3 f_1(x_1) + 1.5 f_2(x_2)  f_3(x_3) + f_6(x_6) + \epsilon_0;$
\end{itemize}
and the sources models:
\begin{itemize}
    \item $\mD_1:
y =2 f_1(x_1) + 1 f_2(x_2)  f_3(x_3) + f_4(x_4) + \epsilon_1;$
\item $\mD_2:
y =2 f_1(x_1) + 1 f_2(x_2)  f_3(x_3) + 2f_4(x_4) + \epsilon_2;$
\item $\mD_3:
y = 2 f_1(x_1) + 1.5 f_2(x_2)  f_3(x_3) + f_5(x_5) +  \epsilon_3;$
\item $\mD_4:
y = 2 f_1(x_1) + 1.5 f_2(x_2)  f_3(x_3) + 2f_5(x_5) + \epsilon_4;$
\end{itemize}
where
$f_1(u)=u$,
$f_2(u)=2u+1$,
$f_3(u)=2u-1$,
$f_4(u)=0.1\sin(\pi u)+0.2\cos(\pi u)$,
$f_5(u)=\sin(\pi u)/(2-\sin(\pi u))$,
$f_6(u)=u(|u|+1)^2$.
The $x \sim U(0,1)$ follows the uniform distribution
and $\epsilon_s$ for $s=0,1,2,3,4$ are independent random variables drawn from $N(0,0.5^2)$.

In Example S.1, we study the numerical performance with different $(n_s,n_0,p)$.
As shown in Figure \ref{Fig_Simu_S1},
the proposed TESR method outperforms all its competitor.
It also shows the superiority and applicability of the proposed method.

\begin{figure}[!htbp]
\centering
\includegraphics[width=\textwidth]{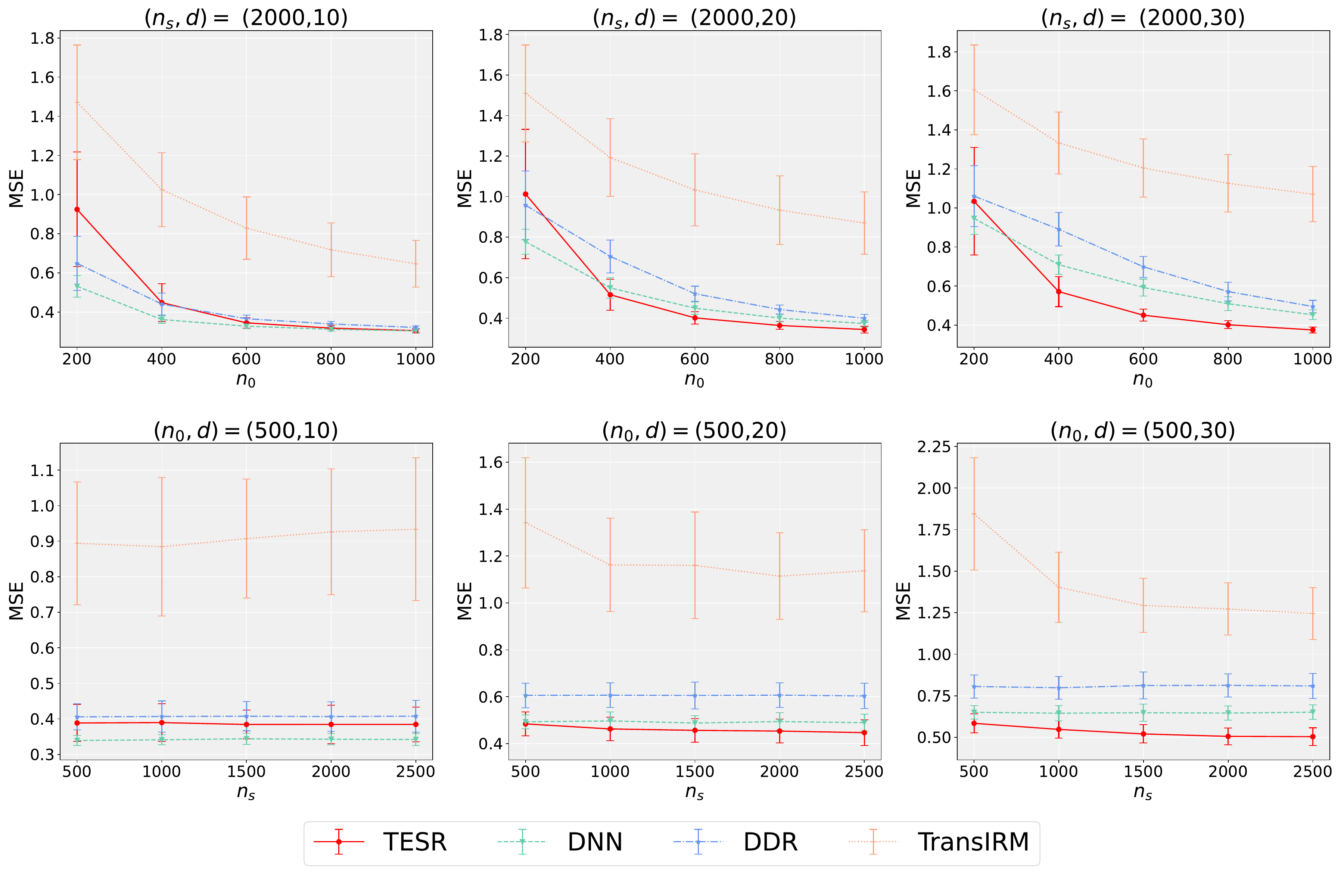}
\caption{the MSE on the $\mD_0$ in Example S.1 with different $(n_s,n_0,d)$.}\label{Fig_Simu_S1}
\end{figure}

\subsubsection{Distance between the sources and the target in Example 3}

In Example 3, we consider  two types of model departures.
In Figure \ref{Fig_Simu_3_distance}, we calculate the numerical $L_1$ distance and the cosine distance  under these departures.

Specifically, for two regression function $g_1(x)$, $g_2(x)$, the $L_1$ distance is defined as
% d_{L_1}(g_1,g_2) =
$$
   \int|g_1(x) - g_2(x)|\mathrm{d}P(x),
$$
and the cosine distance is defined as
$$
1 -  \mathop{Cor} \Big[g_1(x),g_2(x)\Big].
% \end{align}
$$
It is clear that cosine distance lies in $[0,1)$ implying the positive correlation and $(1,2]$ for negative correlation.

In the left panel of Figure \ref{Fig_Simu_3_distance}, we report the $L_1$ distance under the Type I departure.
It is clear that $L_1$ distance increases with $s=1,\dots,6$.
We note that the $\mD_1,\dots,\mD_6$ have larger $L_1$ distance with the target data, compared with the
$\mD_7$,
$\mD_8$. However,
$\mD_7$,
$\mD_8$
contain no useful information for the target task.
It implies the invalidity of $L_1$ distance as an task similarity measure.

In the right panel of Figure \ref{Fig_Simu_3_distance}, we report the cosine similarity
 distances under the Type II departure.
It is clear that correlation changes for $s=1,\dots,6$.
We note that the correlation  between the target task and $\mD_1,\dots,\mD_6$ can be positive and negative. It introduces more heterogeneity for the transfer learning problem.

The experiment results in Example 3 support our argument that TESR does not rely on the the premise of the regression function similarity between the source and the target data.

\begin{figure}[!htbp]
\centering
\includegraphics[width=0.9\textwidth]{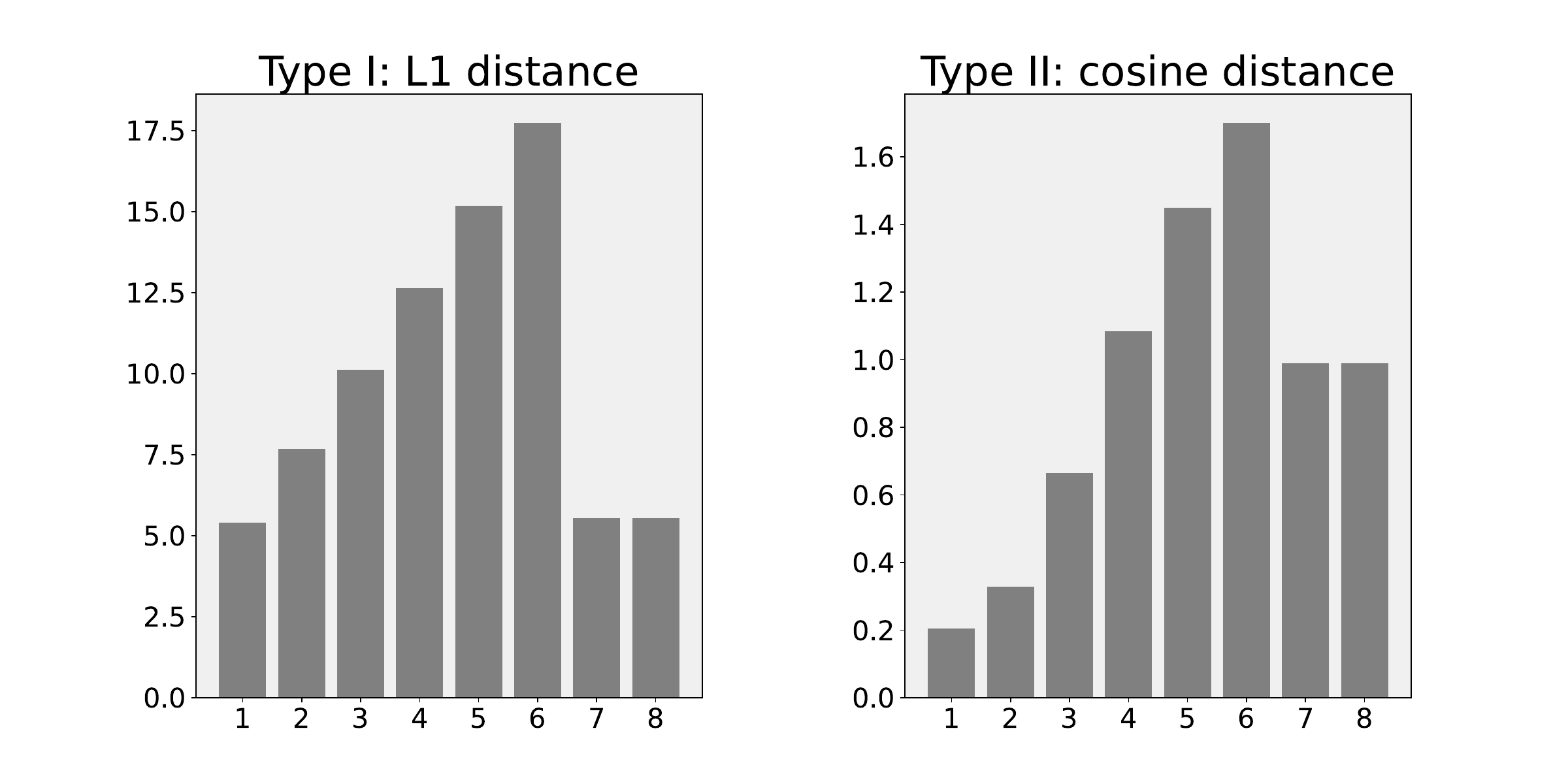}
\caption{
The empirical distances between the regression functions from the sources and the target in Example 3.
}\label{Fig_Simu_3_distance}
\end{figure}
\end{document}